\documentclass{article}

\PassOptionsToPackage{numbers,square}{natbib}
\usepackage[preprint]{neurips_2026}

\usepackage[utf8]{inputenc} 
\usepackage[T1]{fontenc}    
\usepackage{hyperref}       
\usepackage{url}            
\usepackage{booktabs}       
\usepackage{amsmath,amssymb,amsfonts,amsthm} 
\usepackage{nicefrac}       
\usepackage{microtype}      
\usepackage{xcolor}         
\usepackage{tikz}           
\usepackage{algorithm}       
\usepackage{algpseudocode}   
\makeatletter
\AtBeginDocument{\def\theHALG@line{\arabic{algorithm}.\arabic{ALG@line}}}
\makeatother
\usetikzlibrary{positioning, calc, fit, arrows.meta, patterns,
                decorations.pathreplacing}

\definecolor{bitsixteen}{HTML}{0C447C}
\definecolor{biteight}{HTML}{185FA5}
\definecolor{bitfour}{HTML}{378ADD}
\definecolor{bittwo}{HTML}{85B7EB}
\definecolor{hatchbg}{HTML}{D3D1C7}
\definecolor{hatchstroke}{HTML}{444441}
\definecolor{stairgray}{HTML}{5F5E5A}
\definecolor{prefilllight}{HTML}{B4B2A9}
\definecolor{prefillpale}{HTML}{D3D1C7}
\definecolor{prefillbg}{HTML}{F1EFE8}

\usepackage{wrapfig}        
\usepackage{graphicx}       
\usepackage{subcaption}     
\usepackage{rotating}       
\usepackage{multirow}       
\usepackage[nameinlink,noabbrev]{cleveref} 

\theoremstyle{plain}
\newtheorem{theorem}{Theorem}[section]
\newtheorem{lemma}[theorem]{Lemma}

\newtheorem{proposition}[theorem]{Proposition}
\theoremstyle{definition}

\theoremstyle{remark}
\newtheorem{remark}[theorem]{Remark}
\crefname{section}{Sec.}{Secs.}
\Crefname{section}{Section}{Sections}
\crefname{equation}{Eq.}{Eqs.}
\Crefname{equation}{Equation}{Equations}
\crefname{theorem}{Thm.}{Thms.}
\Crefname{theorem}{Theorem}{Theorems}
\crefname{lemma}{Lem.}{Lems.}
\Crefname{lemma}{Lemma}{Lemmas}
\crefname{proposition}{Prop.}{Props.}
\Crefname{proposition}{Proposition}{Propositions}
\Crefname{table}{Table}{Tables}
\crefname{table}{Tab.}{Tabs.}
\Crefname{figure}{Figure}{Figures}
\crefname{figure}{Fig.}{Figs.}
\Crefname{algorithm}{Algorithm}{Algorithms}
\crefname{algorithm}{Alg.}{Algs.}

\DeclareMathOperator*{\argmin}{arg\,min}

\title{RDKV: Rate-Distortion Bit Allocation for Joint Eviction and Quantization of the KV Cache}

\author{%
  Junkai Zhang$^1$ \quad Hang Guo$^2$ \quad Luca Benini$^1$ \quad Yawei Li$^1$ \\[0.5em]
  $^1$ ETH Zurich \quad $^2$ Tsinghua University
}

\begin{document}

\maketitle

\begin{abstract}
Large language models (LLMs) have shown strong performance across diverse tasks, but their inference with long input contexts is bottlenecked by memory size and bandwidth. The Key-Value (KV) cache size grows linearly with sequence length and needs to be re-read from off-chip high-bandwidth memory (HBM) to on-chip memory at every decoding step, resulting in memory-bound inference. Existing methods reduce the cache by either eviction or quantization, but typically treat the two in isolation. In this paper, we cast KV cache compression as a rate-distortion problem, under which eviction and quantization are two end-points of the same bit allocation scheme. This exposes the need to optimize them jointly, motivating our method, \textbf{RDKV} (\textbf{R}ate-\textbf{D}istortion \textbf{KV} cache compression). RDKV derives the weight of each token or channel from the distortion that compression induces on the attention computation. Based on these weights, it assigns each token or channel a bit-width ranging from full precision down to zero bits guided by reverse water-filling, applied once after the prefilling stage.  Experiments on LongBench, RULER, and InfiniteBench show that RDKV outperforms the best evaluated baseline by 9.1\% on average. On LongBench it recovers 97.81\% of full-cache accuracy with only 2.48\% cache retention. Compared with full-cache FlashAttention-2 decoding, it achieves \textbf{4.5$\times$} decode speedup and \textbf{1.9$\times$} peak memory reduction with 128K context length, while maintaining comparable performance.
\end{abstract}

\section{Introduction}
\label{sec:introduction}

Large language models (LLMs) \citep{brown2020language,achiam2023gpt,liu2024deepseek} have demonstrated strong performance from open-ended generation to complex multi-step reasoning. As these capabilities move into production, the input contexts they require have grown from thousands to hundreds of thousands of tokens. Retrieval-augmented generation~\citep{lewis2020retrieval}, multi-document reasoning, and agentic workflows all routinely operate at this scale. At these lengths, inference becomes memory-bound. Transformer decoding stores a Key-Value (KV) pair for every token in every layer. The cache therefore grows linearly with sequence length and reaches tens of gigabytes per request~\citep{kwon2023efficient}. Because every generated token re-reads this entire cache through the attention computation~\citep{dao2022flashattention}, memory traffic rather than arithmetic sets the per-token decode latency~\citep{pope2023efficiently}. Reducing the cache footprint without retraining, while preserving generation quality, has thus become a central problem for efficient long-context inference \citep{li2024survey,liu2025kv}.

A growing number of works address this bottleneck by compressing the KV cache after prefill. These methods fall along two axes: eviction and quantization. Eviction methods score each token or channel and keep the top-ranked units at full precision, dropping the rest~\citep{h2o,li2024snapkv,xu2024think}. Quantization methods retain every token and instead reduce the bit-width~\citep{kivi,he2024zipcache}. Yet cache units differ widely in importance: a few are critical, most are not. A binary keep-or-evict action leaves no precision tier for the broad middle, while mixed-precision quantization retains every token, including the many that contribute negligibly to the output. Recent analysis shows that combining eviction with quantization outperforms either action alone~\citep{qpruningkv}. Several hybrids route each token among full-precision, low-bit, and evicted states~\citep{arkv,anonymous2026hqekv}. But their routing scores remain heuristic and the budget ratios are preset or searched before the final assignment. A natural question is whether both the score and the allocation can be derived from a single objective function.

\begin{figure}[!tb]
  \centering
  \includegraphics[width=\textwidth]{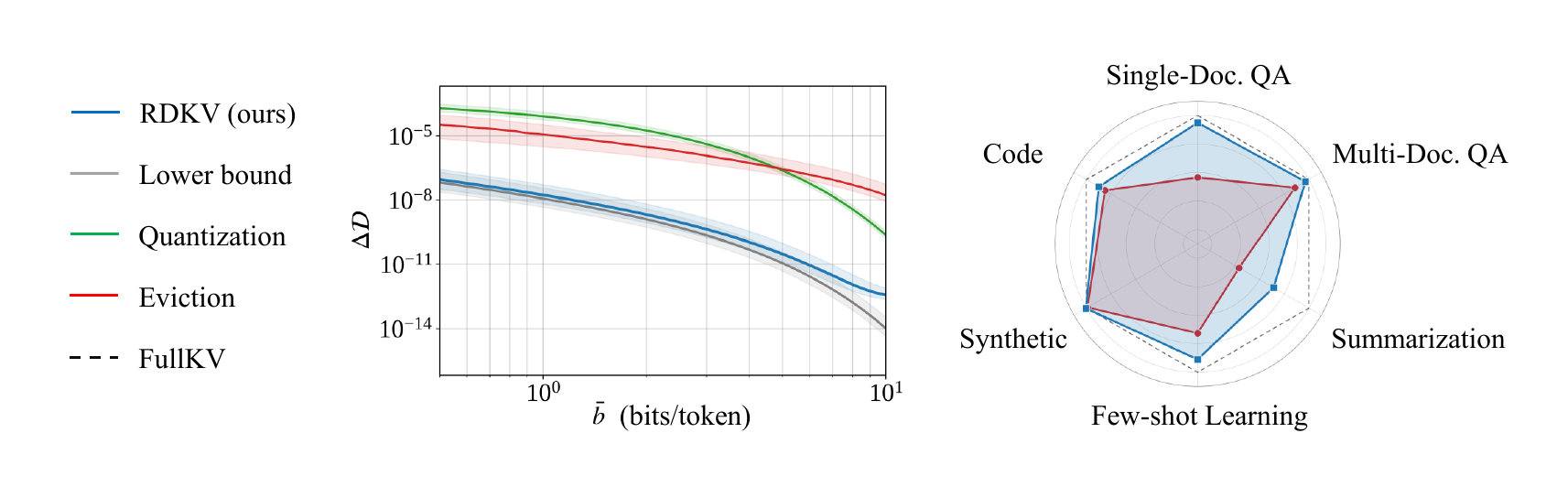}
  \caption{\textbf{Left:} Per-sequence weighted distortion 
$\Delta\mathcal{D}=\sum_u w_u\,\varepsilon_u(b_u)$ (~\cref{eq:mckp}) 
versus average bit-width $\bar{b}$. 
Lines: median across sequences; shaded: IQR. 
\emph{Lower bound}: continuous relaxation (Prop.~\ref{prop:weak-duality}).
\textbf{Right:} LongBench score by task category at a per-layer cache budget of 128 FP16-equivalent tokens 
($B_\text{total}=128L$), normalized by FullKV; per-task scores in~\cref{tab:longbench-llama31}. 
\emph{Eviction} refers to Ada-SnapKV~\citep{feng2024adakv} in the right panel. 
Model: LLaMA-3.1-8B-Instruct~\citep{llama3}.}
  \label{fig:intro}
\end{figure}

In this paper, we present RDKV, a rate–distortion framework for KV cache compression. RDKV poses a bit-allocation problem: given a fixed bit budget, assign each token in the V cache and each channel in the K cache a bit-width to minimize a weighted distortion. The weight of each token or channel is defined as the deviation in the attention distribution or the attention logit when it is evicted. Reverse water-filling converts these weights into bit-widths, from full precision for critical units down to zero bits (eviction) for negligible ones. Eviction and quantization are thus at the two ends of one allocation curve, and are explored jointly rather than in a staged fashion.

We make three \textbf{contributions}. (1) We formulate KV cache compression as a rate–distortion problem. The continuous optimum inherently mixes zero-rate (eviction) and finite-rate (quantization) units. Therefore, approximating this bound requires both actions in the same solver, while restricting to either alone leaves a clear gap, as \cref{fig:intro} (left) shows on calibration data. (2) We instantiate the allocation as a discrete knapsack over hardware-supported bit-widths and solve it via reverse water-filling with Lagrangian relaxation. (3) We realize the resulting mixed-bit cache with TriZone, a packed-decode layout that fuses dequantization into the attention kernel, turning the mixed-bit allocation into actual memory savings. Taken together, \cref{fig:intro} (right) shows the empirical picture: RDKV approaches FullKV's performance across all six categories while the strong baseline lags consistently.

\section{Related Work}
\label{sec:related}

Existing KV cache compression methods can be organized by the action
they apply: eviction, quantization, or both. Meanwhile, they can also be organized by how the compression
target is specified. We adopt the latter axis, distinguishing methods that target
a prescribed compression ratio or tier schedule from methods that allocate a
given active-cache budget.

\textbf{Compression-Ratio-Driven Methods.}
These methods reduce the stored representation of cache entries to reach a
target compression ratio. Quantization methods such as KIVI~\citep{kivi},
KVQuant~\citep{kvquant}, ZipCache~\citep{he2024zipcache}, and
KVTuner~\citep{kvtuner} exploit asymmetric K/V quantization, outlier handling,
salient-token preservation, or searched precision configurations to lower the
bit-width of stored keys and values. QAQ~\citep{qaq} and MiKV~\citep{mikv}
adapt precision by token or KV-pair importance, while AQUA-KV~\citep{aquakv}
exploits K/V dependencies and quantizes prediction residuals. Several recent hybrids add eviction as the lowest precision tier. 
HqeKV~\citep{anonymous2026hqekv} uses a Tree-structured parzen estimator search
(200 trials per setting) to determine the fraction of tokens at each bit-width;
ARKV~\citep{arkv} estimates per-layer Original--Quantization ratios from
prefill-time attention statistics, with scoring thresholds searched offline.
Both methods rely on offline optimization before deployment, with HqeKV searching
tier ratios and ARKV searching scoring thresholds.

\textbf{Budget-Driven Methods.}
These methods take an active-cache budget as input and decide which units remain
available during decoding. KV cache eviction methods score each cache unit and
retain the top-ranked subset under a fixed token count.
StreamingLLM~\citep{streamingllm} keeps sink tokens and a recent window.
H2O~\citep{h2o}, SnapKV~\citep{li2024snapkv}, Scissorhands~\citep{scissorhands}, and
NaCl~\citep{nacl} use attention-derived token scores.
AdaKV~\citep{feng2024adakv}, PyramidKV~\citep{cai2024pyramidkv}, and
CAKE~\citep{cake} further distribute the budget across heads or layers.
Expected Attention~\citep{kvpress} estimates future-query scores rather than
relying only on the observed window. Recent work sharpens the scoring signal
beyond attention weights: CriticalKV~\citep{criticalkv} optimizes worst-case
attention-output perturbation incorporating value states;
OBCache~\citep{obcache} applies the Optimal Brain
framework~\citep{lecun1989optimal,guo2025optimal} to estimate each pair's impact on the attention
output; CAOTE~\citep{goel2025caote} scores via the output error induced by
removal; AnDPro~\citep{andpro} projects value states onto anchor directions;
ReST-KV~\citep{restkv} stabilizes scores through layer-wise reconstruction and
spatial-temporal smoothing. ThinK~\citep{xu2024think} extends eviction
from tokens to key channels.

However, existing budget-driven methods restrict the
action to a binary keep-or-drop decision. This restriction leaves no intermediate precision tier
for tokens or channels between the two extremes. The framework in~\cref{sec:methodology}
generalizes this budget-driven paradigm: given a total bit budget, it derives
per-token or per-channel distortion weights from the attention computation and allocates
bit-widths via reverse water-filling. Unlike the ratio-driven hybrids above,  eviction and quantization are decided within one allocation problem without offline ratio or threshold optimization.


\section{Methodology}
\label{sec:methodology}
Consider a decoder-only transformer~\citep{vaswani2017attention,pope2023efficiently} with
$L$ layers, $H_q$ query heads, $H_\text{kv}$ KV heads (group size
$g = H_q/H_\text{kv}$)~\citep{ainslie2023gqa}, and per-head dimension~$d$.
For a prefilled context of $T$ tokens the KV cache per layer comprises $\mathbf{K}, \mathbf{V} \in \mathbb{R}^{H_{kv} \times T \times d}$, with attention weights $a_{\tau,t} = \operatorname{softmax}_t(q_\tau^\top k_t/\sqrt{d})$ and output $o_\tau = \sum_t a_{\tau,t}\,v_t$. 
We assign each cache unit~$u$ (a token or a channel) a bit-width $b_u \in \mathcal{B} = \{0,2,4,8,16\}$, from FP16 retention to outright removal, under a total bit budget $B$.
Our goal is to find the allocation $\{b_u^\star\}$ that minimizes the distortion introduced to the attention computation under this budget constraint.

In the following, we first quantify the compression cost of each token or channel, and formulate the bit allocation as a rate-distortion problem (\cref{sec:rd_formulation}). We then describe the bit allocation pipeline (\cref{sec:vk_pipeline}) and the packed-decode layout that realizes the mixed-bit cache in hardware (\cref{sec:efficient_decode}).
Due to space constraints, formal proofs in this section are deferred to \cref{app:proofs}.


\subsection{Rate-Distortion Formulation}
\label{sec:rd_formulation}

To quantify the compression cost of each unit, we analyze the effect of removing a single token or channel from the uncompressed cache.
In the V cache, each token enters the output as a single weighted term $a_{\tau,t}\,v_t$.
Removing it perturbs the attention distribution~$a_\tau$.
In the K cache, persistent channel-wise outliers~\citep{kivi} make the channel the natural compression unit.
Zeroing a channel perturbs the logit matrix~$Z$.
Because both perturbations are linear, they admit closed-form weights.

\textbf{Distortion weight in V cache.}
Consider evicting token~$t$ from the V cache of a single head.
Because $a_\tau$ is produced by softmax, the remaining tokens
automatically absorb the freed probability mass.
The post-eviction distribution is
\begin{equation}\label{eq:v-renorm}
\hat{a}_{\tau,t} = 0, \qquad
\hat{a}_{\tau,t'} = \frac{a_{\tau,t'}}{1 - a_{\tau,t}}
\quad \text{for } t' \neq t.
\end{equation}
To measure how much this changes the attention pattern, we use the
total variation (TV) distance between $a_\tau$
and~$\hat{a}_\tau$.
It yields the per-token distortion weight:

\begin{proposition}[Token weight in V cache]
\label[proposition]{thm:v-score}
Evicting token~$t$ yields
$\|a_\tau - \hat{a}_\tau\|_{\mathrm{TV}} = a_{\tau,t}$
for each query~$\tau$.
The per-token distortion weight, obtained by summing over all
queries, is
\begin{equation}\label{eq:v-weight}
w_t \;:=\; \sum_\tau a_{\tau,t}.
\end{equation}
\end{proposition}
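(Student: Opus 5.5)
The argument is a direct computation from the renormalization rule \cref{eq:v-renorm}. I will use the TV convention $\|p-q\|_{\mathrm{TV}} = \tfrac12\sum_{t'}|p_{t'}-q_{t'}|$, which is the normalization under which the claimed identity holds. I will also assume $a_{\tau,t}<1$, which is automatic for softmax over $T\ge 2$ tokens, so that the renormalization is well defined. Throughout I fix a single head and a single query $\tau$, and treat GQA groups and multiple heads afterwards by summation.

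\textbf{Step 1: split the sum.} Separate the evicted coordinate from the rest. At coordinate $t$ the contribution is $|a_{\tau,t}-0| = a_{\tau,t}$. For $t'\neq t$,
\[
\hat a_{\tau,t'} - a_{\tau,t'} = a_{\tau,t'}\Bigl(\tfrac{1}{1-a_{\tau,t}} - 1\Bigr) = \frac{a_{\tau,t}\,a_{\tau,t'}}{1-a_{\tau,t}} \ge 0,
\]
so every remaining coordinate moves in the same direction and no absolute-value cancellation has to be tracked.

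\textbf{Step 2: sum the remaining coordinates.} Using $\sum_{t'\neq t} a_{\tau,t'} = 1-a_{\tau,t}$, the total over $t'\neq t$ is exactly $a_{\tau,t}$. This is the expected fact that the mass removed from $t$ equals the mass redistributed to the others.

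\textbf{Step 3: conclude.} The $\ell_1$ distance is therefore $2a_{\tau,t}$, and the TV distance is $a_{\tau,t}$. Equivalently, one can use $\mathrm{TV}=\sum_{t'}(p_{t'}-q_{t'})_+$: only coordinate $t$ has $a_{\tau,t}>\hat a_{\tau,t}$, which gives the same value immediately.

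The weight \cref{eq:v-weight} is then the aggregate of these per-query TV distances over all queries $\tau$ in the observation window. Under GQA it also sums over the $g$ query heads sharing the KV head. This part is a definition motivated by the computation rather than a claim requiring proof. The only real obstacle is bookkeeping: fixing the TV normalization and the nondegeneracy condition $a_{\tau,t}<1$. The computation itself is elementary.
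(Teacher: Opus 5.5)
Your proposal is correct and follows essentially the same argument as the paper: you isolate the evicted coordinate, which contributes $a_{\tau,t}$; you show the surviving coordinates shift by a total of $a_{\tau,t}$ using $\sum_{t'\neq t} a_{\tau,t'} = 1-a_{\tau,t}$; and you halve the resulting $\ell_1$ distance $2a_{\tau,t}$. Like the paper, you treat $w_t$ as a definition obtained by summing over queries. Your explicit nondegeneracy condition $a_{\tau,t}<1$ and your sign remark (every surviving coordinate moves upward) are small additions that the paper leaves implicit.
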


\noindent
The weight $w_t$ is the total attention that token~$t$ receives
across all query positions, recovering the cumulative attention
widely used as a token-level eviction
criterion~\citep{h2o,li2024snapkv,feng2024adakv}.
In prior work this quantity serves as a ranking score for binary
keep-or-evict decisions. Here it enters the objective
as a multiplicative coefficient. Its magnitude therefore affects the allocated bit-width, not merely its rank.

\textbf{Distortion weight in K cache.}
An analogous construction applies to the K cache, but with a
different partition.
Channel~$c$ contains the $T$ scalars
$\{k_{t,c}\}_{t=1}^T$ across all token positions.
Consider removing channel~$c$, i.e., setting $K[:,c]$ to zero.
Since each logit entry is the inner product
$q_\tau^\top k_t / \sqrt{d}$, zeroing one channel removes the
contribution of that coordinate from every entry simultaneously.
The resulting deviation in the logit matrix is
\begin{equation}\label{eq:k-deviation}
\delta Z \;=\; -\frac{1}{\sqrt{d}}\,Q[:,c]\,K[:,c]^\top,
\end{equation}
and it yields the per-channel distortion weight:

\begin{proposition}[Channel weight in K cache]
\label[proposition]{thm:k-score}
Removing channel~$c$ from head~$h$ produces a rank-one logit
deviation; its spectral norm gives the per-channel distortion weight
\begin{equation}\label{eq:k-weight}
w_c \;:=\; \frac{1}{\sqrt{d}}\,\|Q[:,c]\|_2\,\|K[:,c]\|_2.
\end{equation}
\end{proposition}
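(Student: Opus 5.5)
The plan is to compute the logit deviation caused by zeroing channel~$c$, observe that it is an outer product, and then evaluate its spectral norm directly.

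\emph{Deviation.} Fix head~$h$ and write $Q \in \mathbb{R}^{T_q \times d}$ for the stacked queries (rows $q_\tau^\top$) and $K \in \mathbb{R}^{T \times d}$ for the stacked keys. The logit matrix is $Z = QK^\top/\sqrt{d}$, and expanding over coordinates gives
\[
QK^\top = \sum_{c'=1}^{d} Q[:,c']\,K[:,c']^\top .
\]
Setting $K[:,c]$ to zero removes exactly the $c'=c$ summand and leaves every other summand unchanged. This yields \cref{eq:k-deviation}, so the deviation has the form $\delta Z = -\frac{1}{\sqrt{d}}\,x y^\top$ with $x = Q[:,c]$ and $y = K[:,c]$. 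If either vector is zero, then $\delta Z = 0$ and the weight in \cref{eq:k-weight} is also zero, so that case is trivial. Otherwise $\delta Z$ has rank exactly one, since its column space is spanned by $x$.

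\emph{Spectral norm.} It remains to compute $\|xy^\top\|_2 = \|x\|_2\|y\|_2$. For the upper bound, take any unit vector $z$. Then
\[
\|xy^\top z\|_2 = |y^\top z|\,\|x\|_2 \le \|y\|_2\|x\|_2
\]
by Cauchy--Schwarz. For the matching lower bound, choose $z = y/\|y\|_2$, which attains equality. An alternative route is to note that $(xy^\top)^\top(xy^\top) = \|x\|_2^2\,yy^\top$, whose only nonzero eigenvalue is $\|x\|_2^2\|y\|_2^2$. Multiplying by the scalar $1/\sqrt{d}$, and noting that the sign does not affect the norm, gives $w_c$.

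\emph{Main obstacle.} None of these steps is technically hard; the care is in the bookkeeping. First, $Q[:,c]$ must be taken to range over the same query set (for example, the observation-window queries of all $g$ query heads in the group) as is used in the definition of $Z$. With GQA, the stacked query matrix simply has more rows, and the argument is unchanged. Second, it is worth remarking that the rank-one structure makes all Schatten norms coincide, so the Frobenius norm would give the same weight. This shows that the choice of the spectral norm is not essential.
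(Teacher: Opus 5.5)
Your proposal is correct and follows the paper's proof essentially step for step: you expand the logit over channels to get the outer product $\delta Z = -\tfrac{1}{\sqrt{d}}\,Q[:,c]\,K[:,c]^\top$, read off its single nonzero singular value $\|Q[:,c]\|_2\|K[:,c]\|_2/\sqrt{d}$, and note, as the paper's remark does, that all unitarily invariant norms agree. The only differences are that you prove $\|xy^\top\|_2=\|x\|_2\|y\|_2$ explicitly (via Cauchy--Schwarz or the spectrum of $\|x\|_2^2\,yy^\top$), whereas the paper simply asserts it, and that you flag the degenerate case where $Q[:,c]$ or $K[:,c]$ vanishes, in which the deviation has rank zero rather than one.
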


\noindent
The weight $w_c$ is the product of the $c$-th column norms of the
query and key matrices: channels whose Q and K columns are both large
contribute strongly to the logit and are expensive to remove.
This weight coincides with ThinK~\citep{xu2024think},
which arrives at the same expression via Frobenius minimization.
As with $w_t$, the magnitude of $w_c$ shapes its allocated
bit-width, not its rank.

Notably, $w_t$ (\cref{eq:v-weight}) and $w_c$ (\cref{eq:k-weight})
are both derived from eviction, i.e., the complete removal of a token
or channel.
However, they extend naturally to quantization because the attention
computation is linear in both cache types.
Specifically, quantizing a token in the V cache produces the output
perturbation $\delta o_\tau = a_{\tau,t}\,\delta v_t$, so the cost
scales with~$w_t$.
Similarly, quantizing a channel in the K cache produces a logit
perturbation that scales with~$w_c$.
The weights $w_t$ and $w_c$ therefore serve as the cost coefficients in the objective function of bit allocation.

\textbf{Rate-Distortion Aware Bit Allocation.}
Given these coefficients, the remaining question is how to choose bit-widths~$b_u$ under a
total budget, where $u$ indexes tokens ($n_u = d$ scalars) or
channels ($n_u = T$ scalars).
Under Bennett's high-rate approximation~\citep{bennett1948spectra},
a uniform scalar quantizer with dynamic range~$R_u$ achieves
per-coordinate root-mean-square error $\sigma_u\,2^{-b}$ with
$\sigma_u := R_u/(2\sqrt{3})$.
Combining the per-unit weights with this per-coordinate error gives
the rate-distortion objective
\begin{equation}\label{eq:rd-objective}
\Delta\mathcal{D}(\{b_u\})
  \;=\;
  \sum_{u} w_u\,\sigma_u\,2^{-b_u},
  \qquad
  \text{s.t.}\;\;\sum_u b_u \le B.
\end{equation}
The coefficient $w_u\,\sigma_u$ combines the distortion
weight~($w_u$) with quantization hardness~($\sigma_u$).
The Lagrangian solution is a reverse water-filling familiar from
channel coding~\citep{shannon1959coding,cover2006elements}:

\begin{theorem}[Optimal bit allocation]
\label{thm:water-filling}
The minimizer of \cref{eq:rd-objective} subject to $b_u \ge 0$ is
\[
b_u^\star
  \;=\;
  \left[\log_2
    \frac{\ln 2\cdot w_u\,\sigma_u}{\lambda}
  \right]_{\!+}\!,
\]
with $\lambda > 0$ chosen so that the budget binds.
The water level $\lambda/\ln 2$ induces a phase transition:
units with $w_u\,\sigma_u < \lambda/\ln 2$ receive
$b_u^\star = 0$, while units above receive finite bit-widths.
\end{theorem}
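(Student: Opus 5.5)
The plan is to treat \cref{eq:rd-objective} as a convex program over the continuous variables $b_u \ge 0$ and read off its KKT conditions. Write $c_u := w_u\sigma_u > 0$; units with $c_u = 0$ contribute nothing and trivially receive $b_u^\star = 0$. Each term $c_u 2^{-b_u} = c_u e^{-b_u \ln 2}$ is strictly convex and strictly decreasing in $b_u$. The feasible set $\{b \ge 0,\ \sum_u b_u \le B\}$ is a compact polytope, so a minimizer exists and is unique. The polytope has nonempty interior for $B>0$, so Slater's condition holds and the KKT conditions are both necessary and sufficient.

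The first step is to show the budget binds. If $\sum_u b_u < B$ at a candidate optimum, we can increase any $b_u$ with $c_u>0$ and strictly lower the objective. Hence the multiplier $\lambda$ on the budget constraint is strictly positive, with $\sum_u b_u^\star = B$.

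Next, form the Lagrangian $\mathcal{L} = \sum_u c_u 2^{-b_u} + \lambda(\sum_u b_u - B) - \sum_u \mu_u b_u$, with $\mu_u \ge 0$ and $\mu_u b_u = 0$. Stationarity gives $-\ln 2\, c_u 2^{-b_u} + \lambda - \mu_u = 0$, and we split into two cases.
\begin{itemize}
\item If $b_u^\star > 0$, then $\mu_u = 0$, so $2^{-b_u^\star} = \lambda/(\ln 2\, c_u)$. This gives $b_u^\star = \log_2(\ln 2\, c_u/\lambda)$, and positivity forces $c_u > \lambda/\ln 2$.
\item If $b_u^\star = 0$, then $\mu_u = \lambda - \ln 2\, c_u \ge 0$, i.e.\ $c_u \le \lambda/\ln 2$.
\end{itemize}
Together the two cases give exactly $b_u^\star = [\log_2(\ln 2\, c_u/\lambda)]_+$, and they show that the threshold $\lambda/\ln 2$ separates zero-rate from positive-rate units. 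This is the claimed phase transition. Sufficiency of KKT under convexity then confirms that this point is the global minimizer.

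The step needing the most care is proving that a $\lambda$ exists for which the budget binds, and that it is unique. Define $S(\lambda) := \sum_u [\log_2(\ln 2\, c_u/\lambda)]_+$. This function is continuous and nonincreasing on $(0,\infty)$. It is strictly decreasing wherever it is positive, equals $0$ for $\lambda \ge \ln 2 \max_u c_u$, and tends to $\infty$ as $\lambda \to 0^+$. By the intermediate value theorem, for every $B>0$ there is a unique $\lambda>0$ with $S(\lambda) = B$. Ties at the boundary $c_u = \lambda/\ln 2$ are harmless, since they give $b_u^\star = 0$ under either branch.
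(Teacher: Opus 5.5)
Your proposal is correct and follows the paper's own route: the budget is active by monotonicity, convexity makes KKT sufficient, the same Lagrangian and stationarity condition split by complementary slackness into the $[\cdot]_+$ form, and monotonicity of $\sum_u b_u^\star(\lambda)$ fixes the water level. Your two-sided case analysis and your intermediate-value argument for a unique $\lambda$ are more careful than the paper's ``solvable by 1D search.'' One small repair: $\lambda>0$ does not follow from the budget binding alone, since complementary slackness permits $\lambda=0$ on an active constraint. It follows from stationarity instead, because $\lambda=0$ would force $\mu_u=-\ln 2\,c_u 2^{-b_u}<0$ for any $c_u>0$.
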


\noindent
\Cref{thm:water-filling} reveals that eviction and quantization are
not separate actions but two regimes of one allocation curve,
selected jointly by the budget~$B$ rather than composed under
pre-allocated ratios~\citep{arkv}.
Tightening the budget~$B$ raises~$\lambda$ and pushes more units past
the zero-rate boundary into eviction.
Loosening it promotes previously evicted units to low-bit retention.
The allocator thus navigates eviction and quantization on a single
curve, with the budget as the only control knob.

Hardware restricts bit-widths to the discrete set
$\mathcal{B} = \{0,2,4,8,16\}$.
We replace Bennett's curve with an empirical per-coordinate distortion
$\varepsilon_u(b)$ calibrated at each
$b \in \mathcal{B}$ (see \cref{app:epsilon-calibration}), including $b=0$ for removal, and solve the
resulting multiple-choice knapsack problem (MCKP):
\begin{equation}\label{eq:mckp}
\{b_u^\star\}
  \;=\;
  \argmin_{b_u \in \mathcal{B}}\;
  \sum_u w_u\,\varepsilon_u(b_u)
  \qquad
  \text{s.t.}\;\;\sum_u b_u \le B.
\end{equation}
Lagrangian relaxation decouples this into independent per-coordinate table lookups.
A one-dimensional bisection over~$\lambda$ recovers a feasible primal
allocation in $O(U|\mathcal{B}|)$ time, adding negligible latency
compared to prefill (\cref{app:ablation-overhead}).
Weak-duality bounds certify near-optimality (\cref{app:proofs}).


\subsection{Per-Head Allocation Pipeline}
\label{sec:vk_pipeline}

\begin{figure}[!tb]
\centering
\resizebox{\textwidth}{!}{%
\begin{tikzpicture}[
  font=\footnotesize,
  >={Latex[length=1.3mm, width=0.9mm]},
]

\def\barw{0.28}
\def\barstep{0.36}
\def\maxh{1.54}
\pgfmathsetmacro{\chartw}{8*\barstep-0.08}  

\def\vbx{3.10}   \def\vby{3.10}
\def\kbx{3.10}   \def\kby{0.50}

\pgfmathsetmacro{\arrS}{\vbx+\chartw+0.30}
\pgfmathsetmacro{\arrE}{\arrS+0.55}
\pgfmathsetmacro{\cachex}{\arrE+0.30}        

\def\vccw{0.28}  \def\vcch{0.275}
\def\kccw{0.21}  \def\kcch{0.275}

\def\vtA{1.317}  \def\vtB{0.852}  \def\vtC{0.493}  \def\vtD{0.282}
\def\ktA{1.240}  \def\ktB{0.852}  \def\ktC{0.542}  \def\ktD{0.310}

\pgfmathsetmacro{\vzl}{\vbx+\chartw+0.06}
\pgfmathsetmacro{\kzl}{\kbx+\chartw+0.06}
\pgfmathsetmacro{\kcy}{\kby+(\maxh-6*\kcch)/2}
\pgfmathsetmacro{\vbrkx}{\cachex+6*\vccw+0.08}
\pgfmathsetmacro{\vbrktxt}{\vbrkx+0.50}
\pgfmathsetmacro{\connx}{\cachex+3*\vccw}

\def\qkw{0.70}  \def\qkh{0.85}  \def\qkgap{0.20}
\pgfmathsetmacro{\qleft}{1.20 - \qkw - \qkgap/2}
\pgfmathsetmacro{\qright}{\qleft + \qkw}
\pgfmathsetmacro{\kleft}{1.20 + \qkgap/2}
\pgfmathsetmacro{\kright}{\kleft + \qkw}
\def\qkbot{1.40}
\pgfmathsetmacro{\qktop}{\qkbot + \qkh}
\pgfmathsetmacro{\qcstep}{\qkw/5}

\begin{scope}[shift={(0.24, 0.56)}, scale=0.8, transform shape]

\fill[prefillbg!40, rounded corners=2.5pt]
     (-0.10, 0.20) rectangle (2.50, 5.40);
\draw[stairgray!25, rounded corners=2.5pt, line width=0.3pt]
     (-0.10, 0.20) rectangle (2.50, 5.40);

\node[font=\scriptsize\bfseries, text=stairgray]
     at (1.20, 5.18) {\strut Stage 1: Weighting};

\node[font=\tiny\itshape, text=stairgray] at (1.30, 4.64) {$a_{\tau,t}$};
\def\cs{0.23}
\def\hx{0.30}
\def\hy{3.60}
\foreach \r/\vals in {%
  3/{90,36,10,60,24,5,42,68},%
  2/{78,30, 8,55,20,6,35,62},%
  1/{65,28, 9,68,18,4,30,76},%
  0/{52,22, 5,47,22,5,28,48}%
}{%
  \foreach \val [count=\ci from 0] in \vals {%
    \pgfmathsetmacro{\op}{\val/100}%
    \fill[bitsixteen, opacity=\op]
      ({\hx+\ci*\cs}, {\hy+\r*\cs}) rectangle +(\cs,\cs);
  }%
}
\draw[stairgray, line width=0.12pt]
     (\hx,\hy) grid[step=\cs] ({\hx+8*\cs},{\hy+4*\cs});
\draw[stairgray, line width=0.30pt]
     (\hx,\hy) rectangle ({\hx+8*\cs},{\hy+4*\cs});
\node[font=\tiny, text=stairgray, rotate=90, anchor=south]
     at ({\hx-0.04},{\hy+2*\cs}) {query};

\node[font=\tiny, text=stairgray] at (1.20, 3.28)
     {$w_t \!=\! \sum_\tau a_{\tau,t}$};

\draw[stairgray!18, line width=0.2pt] (0.10, 2.85) -- (2.40, 2.85);

\fill[biteight!18] (\qleft,\qkbot) rectangle (\qright,\qktop);
\draw[stairgray, line width=0.25pt] (\qleft,\qkbot) rectangle (\qright,\qktop);
\foreach \j in {1,...,4}{
  \pgfmathsetmacro{\cx}{\qleft+\j*\qcstep}
  \draw[white, line width=0.15pt] (\cx,\qkbot) -- (\cx,\qktop);
}
\node[font=\scriptsize\bfseries, text=biteight!70]
     at ({(\qleft+\qright)/2},{(\qkbot+\qktop)/2}) {$Q$};
\draw[stairgray!50, decorate, decoration={brace, amplitude=1.5pt, mirror}]
  (\qleft,{\qkbot-0.04}) -- (\qright,{\qkbot-0.04})
  node[midway, below, font=\tiny, text=stairgray, yshift=-1.5pt] {$c$};

\fill[bitfour!18] (\kleft,\qkbot) rectangle (\kright,\qktop);
\draw[stairgray, line width=0.25pt] (\kleft,\qkbot) rectangle (\kright,\qktop);
\foreach \j in {1,...,4}{
  \pgfmathsetmacro{\cx}{\kleft+\j*\qcstep}
  \draw[white, line width=0.15pt] (\cx,\qkbot) -- (\cx,\qktop);
}
\node[font=\scriptsize\bfseries, text=bitfour!70]
     at ({(\kleft+\kright)/2},{(\qkbot+\qktop)/2}) {$K$};
\draw[stairgray!50, decorate, decoration={brace, amplitude=1.5pt, mirror}]
  (\kleft,{\qkbot-0.04}) -- (\kright,{\qkbot-0.04})
  node[midway, below, font=\tiny, text=stairgray, yshift=-1.5pt] {$c$};

\node[font=\tiny, text=stairgray, text width=2.4cm, align=center]
     at (1.20, 0.60)
     {$w_c \!=\! \dfrac{\|Q_{:,c}\|\;\|K_{:,c}\|}{\sqrt{d}}$};

\end{scope}

\draw[->, stairgray, line width=0.40pt]
  (2.55, 3.8) -- node[above,font=\tiny,inner sep=1pt]{$w_t$} (3.10, 3.8);
\draw[->, stairgray, line width=0.40pt]
  (2.55, 1.6) -- node[above,font=\tiny,inner sep=1pt]{$w_c$} (3.10, 1.6);

\begin{scope}[shift={(0.91, 0.80)}, scale=0.8, transform shape]

\node[font=\scriptsize\bfseries, text=stairgray, anchor=west]
     at (3.10, 5.05) {\strut Stage 2: Token Allocation};

\fill[bitsixteen!10] (\vbx,{\vby+\vtA}) rectangle ({\vbx+\chartw},{\vby+\maxh+0.06});
\fill[biteight!10]   (\vbx,{\vby+\vtB}) rectangle ({\vbx+\chartw},{\vby+\vtA});
\fill[bitfour!10]    (\vbx,{\vby+\vtC}) rectangle ({\vbx+\chartw},{\vby+\vtB});
\fill[bittwo!10]     (\vbx,{\vby+\vtD}) rectangle ({\vbx+\chartw},{\vby+\vtC});
\fill[hatchbg!12]    (\vbx,\vby)         rectangle ({\vbx+\chartw},{\vby+\vtD});

\foreach \tl in {\vtA,\vtB,\vtC,\vtD}
  \draw[stairgray!45, line width=0.25pt, dashed]
    (\vbx,{\vby+\tl}) -- ({\vbx+\chartw},{\vby+\tl});

\node[font=\tiny,text=bitsixteen,anchor=west] at (\vzl,{\vby+(\vtA+\maxh+0.06)/2}) {16};
\node[font=\tiny,text=biteight,anchor=west]   at (\vzl,{\vby+(\vtB+\vtA)/2})       {8};
\node[font=\tiny,text=bitfour,anchor=west]    at (\vzl,{\vby+(\vtC+\vtB)/2})       {4};
\node[font=\tiny,text=bittwo,anchor=west]     at (\vzl,{\vby+(\vtD+\vtC)/2})       {2};
\node[font=\tiny,text=hatchstroke,anchor=west] at (\vzl,{\vby+\vtD/2})              {0};

\foreach \i/\h/\clr in {%
  0/1.426/bitsixteen, 1/0.589/bitfour, 3/1.162/biteight,%
  4/0.434/bittwo,     6/0.697/bitfour, 7/1.271/biteight%
}{%
  \fill[\clr]  ({\vbx+\i*\barstep},\vby) rectangle ({\vbx+\i*\barstep+\barw},{\vby+\h});
  \draw[stairgray!50, line width=0.12pt] ({\vbx+\i*\barstep},\vby) rectangle ({\vbx+\i*\barstep+\barw},{\vby+\h});
}
\foreach \i/\h in {2/0.186, 5/0.124}{%
  \fill[hatchbg] ({\vbx+\i*\barstep},\vby) rectangle ({\vbx+\i*\barstep+\barw},{\vby+\h});
  \fill[pattern=north east lines, pattern color=hatchstroke, opacity=0.50]
    ({\vbx+\i*\barstep},\vby) rectangle ({\vbx+\i*\barstep+\barw},{\vby+\h});
  \draw[stairgray!50, line width=0.12pt] ({\vbx+\i*\barstep},\vby) rectangle ({\vbx+\i*\barstep+\barw},{\vby+\h});
}
\foreach \i in {0,...,7}{%
  \node[font=\tiny,text=stairgray] at ({\vbx+\i*\barstep+\barw/2},{\vby-0.12})
       {\pgfmathparse{int(\i+1)}\pgfmathresult};
}
\draw[stairgray, line width=0.30pt] (\vbx,\vby) -- ({\vbx+\chartw},\vby);

\end{scope}

\draw[->, stairgray, line width=0.40pt]
  (6.10, 3.8) -- node[above,font=\tiny\itshape,inner sep=1pt]{MCKP} (6.95, 3.8);

\begin{scope}[shift={(1.63, 0.45)}, scale=0.8, transform shape]

\foreach \row/\clr in {%
  7/bitsixteen, 6/bitfour, 4/biteight,%
  3/bittwo,     1/bitfour, 0/biteight%
}{%
  \fill[\clr] (\cachex,{\vby+\row*\vcch}) rectangle ({\cachex+6*\vccw},{\vby+(\row+1)*\vcch});
}
\foreach \row in {5,2}{%
  \fill[hatchbg] (\cachex,{\vby+\row*\vcch}) rectangle ({\cachex+6*\vccw},{\vby+(\row+1)*\vcch});
  \fill[pattern=north east lines, pattern color=hatchstroke, opacity=0.40]
    (\cachex,{\vby+\row*\vcch}) rectangle ({\cachex+6*\vccw},{\vby+(\row+1)*\vcch});
}
\foreach \c in {1,...,5}
  \draw[white, line width=0.12pt] ({\cachex+\c*\vccw},\vby) -- ({\cachex+\c*\vccw},{\vby+8*\vcch});
\foreach \r in {1,...,7}
  \draw[white, line width=0.12pt] (\cachex,{\vby+\r*\vcch}) -- ({\cachex+6*\vccw},{\vby+\r*\vcch});
\draw[stairgray, line width=0.30pt] (\cachex,\vby) rectangle ({\cachex+6*\vccw},{\vby+8*\vcch});

\foreach \i in {1,...,8}
  \node[font=\tiny,text=stairgray,anchor=east] at ({\cachex-0.06},{\vby+(8.5-\i)*\vcch}) {\i};

\node[font=\tiny\bfseries,text=stairgray,anchor=south]
     at ({\cachex+3*\vccw},{\vby+8*\vcch+0.06}) {V cache};

\node[font=\tiny,text=red!55!black] at ({\vbrkx+0.06},{\vby+5.5*\vcch}) {$\times$};
\node[font=\tiny,text=red!55!black] at ({\vbrkx+0.06},{\vby+2.5*\vcch}) {$\times$};
\draw[stairgray!45, decorate, decoration={brace, amplitude=2.5pt, mirror}]
  ({\vbrkx+0.22},\vby) -- ({\vbrkx+0.22},{\vby+8*\vcch});
\node[font=\tiny\itshape,text=stairgray,rotate=-90]
     at (\vbrktxt,{\vby+4*\vcch})
     {$\mathcal{T}_{\text{kept}}\,/\,\mathcal{T}_{\text{evict}}$};

\end{scope}

\begin{scope}[shift={(0.91, 0.7)}, scale=0.8, transform shape]

\node[font=\scriptsize\bfseries, text=stairgray, anchor=west]
     at (3.10, 2.45) {\strut Stage 3: Channel Allocation};

\fill[bitsixteen!10] (\kbx,{\kby+\ktA}) rectangle ({\kbx+\chartw},{\kby+\maxh+0.06});
\fill[biteight!10]   (\kbx,{\kby+\ktB}) rectangle ({\kbx+\chartw},{\kby+\ktA});
\fill[bitfour!10]    (\kbx,{\kby+\ktC}) rectangle ({\kbx+\chartw},{\kby+\ktB});
\fill[bittwo!10]     (\kbx,{\kby+\ktD}) rectangle ({\kbx+\chartw},{\kby+\ktC});
\fill[hatchbg!12]    (\kbx,\kby)         rectangle ({\kbx+\chartw},{\kby+\ktD});

\foreach \tl in {\ktA,\ktB,\ktC,\ktD}
  \draw[stairgray!45, line width=0.25pt, dashed]
    (\kbx,{\kby+\tl}) -- ({\kbx+\chartw},{\kby+\tl});

\node[font=\tiny,text=bitsixteen,anchor=west] at (\kzl,{\kby+(\ktA+\maxh+0.06)/2}) {16};
\node[font=\tiny,text=biteight,anchor=west]   at (\kzl,{\kby+(\ktB+\ktA)/2})       {8};
\node[font=\tiny,text=bitfour,anchor=west]    at (\kzl,{\kby+(\ktC+\ktB)/2})       {4};
\node[font=\tiny,text=bittwo,anchor=west]     at (\kzl,{\kby+(\ktD+\ktC)/2})       {2};
\node[font=\tiny,text=hatchstroke,anchor=west] at (\kzl,{\kby+\ktD/2})              {0};

\foreach \i/\h/\clr in {%
  0/1.364/bitsixteen, 1/0.651/bitfour, 3/1.054/biteight,%
  4/0.465/bittwo,     5/0.806/bitfour, 7/0.744/bitfour%
}{%
  \fill[\clr]  ({\kbx+\i*\barstep},\kby) rectangle ({\kbx+\i*\barstep+\barw},{\kby+\h});
  \draw[stairgray!50, line width=0.12pt] ({\kbx+\i*\barstep},\kby) rectangle ({\kbx+\i*\barstep+\barw},{\kby+\h});
}
\foreach \i/\h in {2/0.232, 6/0.155}{%
  \fill[hatchbg] ({\kbx+\i*\barstep},\kby) rectangle ({\kbx+\i*\barstep+\barw},{\kby+\h});
  \fill[pattern=north east lines, pattern color=hatchstroke, opacity=0.50]
    ({\kbx+\i*\barstep},\kby) rectangle ({\kbx+\i*\barstep+\barw},{\kby+\h});
  \draw[stairgray!50, line width=0.12pt] ({\kbx+\i*\barstep},\kby) rectangle ({\kbx+\i*\barstep+\barw},{\kby+\h});
}
\foreach \i in {0,...,7}{%
  \node[font=\tiny,text=stairgray] at ({\kbx+\i*\barstep+\barw/2},{\kby-0.12})
       {\pgfmathparse{int(\i+1)}\pgfmathresult};
}
\draw[stairgray, line width=0.30pt] (\kbx,\kby) -- ({\kbx+\chartw},\kby);

\end{scope}

\draw[->, stairgray, line width=0.40pt]
  (6.10, 1.6) -- node[above,font=\tiny\itshape,inner sep=1pt]{MCKP} (6.95, 1.6);

\begin{scope}[shift={(1.63, 0.7)}, scale=0.8, transform shape]

\foreach \col/\clr in {%
  0/bitsixteen, 1/bitfour, 3/biteight,%
  4/bittwo,     5/bitfour, 7/bitfour%
}{%
  \fill[\clr] ({\cachex+\col*\kccw},\kcy) rectangle ({\cachex+(\col+1)*\kccw},{\kcy+6*\kcch});
}
\foreach \col in {2,6}{%
  \fill[hatchbg] ({\cachex+\col*\kccw},\kcy) rectangle ({\cachex+(\col+1)*\kccw},{\kcy+6*\kcch});
  \fill[pattern=north east lines, pattern color=hatchstroke, opacity=0.40]
    ({\cachex+\col*\kccw},\kcy) rectangle ({\cachex+(\col+1)*\kccw},{\kcy+6*\kcch});
}
\foreach \c in {1,...,7}
  \draw[white, line width=0.12pt] ({\cachex+\c*\kccw},\kcy) -- ({\cachex+\c*\kccw},{\kcy+6*\kcch});
\foreach \r in {1,...,5}
  \draw[white, line width=0.12pt] (\cachex,{\kcy+\r*\kcch}) -- ({\cachex+8*\kccw},{\kcy+\r*\kcch});
\draw[stairgray, line width=0.30pt] (\cachex,\kcy) rectangle ({\cachex+8*\kccw},{\kcy+6*\kcch});

\foreach \ri/\tok in {5/1, 4/2, 3/4, 2/5, 1/7, 0/8}
  \node[font=\tiny,text=stairgray,anchor=east] at ({\cachex-0.06},{\kcy+(\ri+0.5)*\kcch}) {\tok};

\node[font=\tiny\bfseries,text=stairgray,anchor=south]
     at ({\cachex+4*\kccw},{\kcy+6*\kcch+0.06}) {K cache};

\node[font=\tiny\itshape,text=stairgray!70, anchor=north west]
     at (\cachex,{\kcy-0.10})
     {on $\mathcal{T}_{\text{kept}}$ only};

\end{scope}

\draw[->,stairgray!50, line width=0.30pt, dashed]
  (8.65, 2.93)
  to[out=330, in=30]
  node[right, font=\tiny\itshape, text=stairgray,
       inner sep=2pt, pos=0.5]
  {$\mathcal{T}_{\text{kept}}$}
  (8.65, 2.38);

\begin{scope}[shift={(1.55, 0.7)}, scale=0.8, transform shape]
\def\ly{0.00}
\node[font=\tiny\bfseries,text=stairgray] at (2.30,\ly) {bit:};

\fill[bitsixteen] (2.65,{\ly-0.12}) rectangle (2.93,{\ly+0.12});
\draw[stairgray, line width=0.10pt] (2.65,{\ly-0.12}) rectangle (2.93,{\ly+0.12});
\node[font=\tiny,text=stairgray,anchor=west] at (2.85,\ly) {16};

\fill[biteight] (3.30,{\ly-0.12}) rectangle (3.58,{\ly+0.12});
\draw[stairgray, line width=0.10pt] (3.30,{\ly-0.12}) rectangle (3.58,{\ly+0.12});
\node[font=\tiny,text=stairgray,anchor=west] at (3.50,\ly) {8};

\fill[bitfour] (3.85,{\ly-0.12}) rectangle (4.13,{\ly+0.12});
\draw[stairgray, line width=0.10pt] (3.85,{\ly-0.12}) rectangle (4.13,{\ly+0.12});
\node[font=\tiny,text=stairgray,anchor=west] at (4.05,\ly) {4};

\fill[bittwo] (4.40,{\ly-0.12}) rectangle (4.68,{\ly+0.12});
\draw[stairgray, line width=0.10pt] (4.40,{\ly-0.12}) rectangle (4.68,{\ly+0.12});
\node[font=\tiny,text=stairgray,anchor=west] at (4.60,\ly) {2};

\fill[hatchbg] (4.95,{\ly-0.12}) rectangle (5.23,{\ly+0.12});
\fill[pattern=north east lines, pattern color=hatchstroke, opacity=0.50]
     (4.95,{\ly-0.12}) rectangle (5.23,{\ly+0.12});
\draw[stairgray, line width=0.10pt] (4.95,{\ly-0.12}) rectangle (5.23,{\ly+0.12});
\node[font=\tiny,text=stairgray,anchor=west] at (5.15,\ly) {0\,(evict)};
\end{scope}

\end{tikzpicture}%
}
\caption{RDKV per-head bit-allocation pipeline (illustrated for 8 tokens and 8 channels).
\textbf{Stage 1.}~Token weights~$w_t=\sum_\tau a_{\tau,t}$ are derived from the
attention matrix, and channel weights~$w_c$ from Q/K column norms.
\textbf{Stage 2.}~Reverse water-filling on~$w_t$: four thresholds partition
scores into five bit-width zones $\{16,8,4,2,0\}$.
In this example tokens~3 and~6 fall below the lowest threshold and are
evicted (hatched rows in V cache).
\textbf{Stage 3.}~The same allocation on~$w_c$ assigns per-channel bit-widths
to the K cache, which retains only the six kept tokens
($\mathcal{T}_{\text{kept}}=\{1,2,4,5,7,8\}$).}
\label{fig:methodology-pipeline}
\end{figure}

We instantiate the weights and allocation of
\cref{sec:rd_formulation} as a three-stage pipeline
(\cref{fig:methodology-pipeline}) that runs once after prefill,
separately for each layer-head pair $(\ell, h)$.
Under FP16 reference, a per-head budget of $B_\text{tok}$ tokens
corresponds to $B_\text{head} = 2\,B_\text{tok}\,d\cdot 16$ total
bits, split equally between V and K:
$B_V = B_K = \frac{1}{2}B_\text{head}$.

\textbf{Stage~1: Distortion Weight Computation.}
From the prefill forward pass we compute the token weights~$w_t$
(\cref{thm:v-score}) and channel weights~$w_c$
(\cref{thm:k-score}), as illustrated in
\cref{fig:methodology-pipeline}\,(a).
Both are computed once from the full uncompressed cache:
$w_t$ requires only the attention matrix;
$w_c$ requires only Q and K column norms, with no value vectors or
output residuals needed.

\textbf{Stage~2: Token Allocation in V Cache.}
Since each token contains $d$ scalars, the total V budget $B_V$
translates to $\bar{B}_V := B_V / d$ in units of summed bit-widths.
Given the token weights $w_t$ and a calibrated distortion table
$\varepsilon_V(b)$ (\cref{app:impl-details}),
Stage~2 solves the MCKP of \cref{eq:mckp}, as shown in
\cref{fig:methodology-pipeline}\,(b):
\[
\{b_t^V\}
  = \argmin_{b_t^V \in \mathcal{B}}\;
  \sum_t w_t\,\varepsilon_V(b_t^V)
  \qquad
  \text{s.t.}\;\;\sum_t b_t^V \le \bar{B}_V.
\]
The output determines
$\mathcal{T}_\text{kept} := \{t : b_t^V > 0\}$
and its complement $\mathcal{T}_\text{evict}$.
Retained tokens receive bit-widths in $\{2, 4, 8, 16\}$.
Evicted tokens are removed entirely.

\textbf{Stage~3: Channel Allocation in K Cache.}
As shown in \cref{fig:methodology-pipeline}\,(c),
Stage~3 distributes the K budget $B_K$ across only the retained
tokens $\mathcal{T}_\text{kept}$.
Each channel contains $|\mathcal{T}_\text{kept}|$ scalars, so the
budget translates to
$\bar{B}_K := B_K / |\mathcal{T}_\text{kept}|$:
\[
\{b_c^K\}
  = \argmin_{b_c^K \in \mathcal{B}}\;
  \sum_c w_c\,\varepsilon_K(b_c^K)
  \qquad
  \text{s.t.}\;\;\sum_c b_c^K \le \bar{B}_K.
\]
A channel assigned $b_c^K = 0$ is removed under the same zero-rate
interpretation as token eviction.
Across retained tokens, K bit-widths are assigned per-channel
independently of the per-token~$b_t^V$.
A single token's K and V can therefore carry different bit-widths, the
prerequisite for the TriZone layout of \cref{sec:efficient_decode}.
Visualization of the resulting bit-width allocations of tokens and channels
is in \cref{app:bit-vis} .

The V cache is allocated before K because the eviction decision
determines which tokens remain in the K cache.
Reversing the order would waste K budget on soon-to-be-evicted tokens.
The channel weights~$w_c$ are reused from Stage~1 without
recomputation after V-side eviction, avoiding a second forward pass.
The complete pseudocode is given in \cref{app:algorithm}.





\subsection{Efficient Packed Decode}
\label{sec:efficient_decode}

\begin{wrapfigure}{r}{0.52\textwidth}
\centering
\vspace{-10pt}
\begin{tikzpicture}[
    zone/.style={draw, minimum width=2.8cm, minimum height=0.7cm, align=center, font=\footnotesize},
    rowlbl/.style={font=\footnotesize\bfseries, anchor=east},
    collbl/.style={font=\footnotesize\bfseries, anchor=south},
    >=latex
]

\node[collbl] at (0, 0.6)   {\textbf{K}};
\node[collbl] at (3.0, 0.6) {\textbf{V}};

\node[rowlbl] at (-1.6, 0)    {Zone A};
\node[rowlbl] at (-1.6, -1.0) {Zone B};
\node[rowlbl] at (-1.6, -2.0) {Zone C};

\node[zone, fill=blue!8] (AK) at (0, 0)
    {all retained, packed};
\node[zone, fill=blue!8] (AV) at (3.0, 0)
    {$b^V\!\in\!\{2,4,8\}$, packed};

\node[zone, fill=gray!10, font=\footnotesize\itshape, text=black!55] (BK) at (0, -1.0)
    {(stored in Zone A)};
\node[zone, fill=orange!12] (BV) at (3.0, -1.0)
    {$b^V\!=\!16$, FP16};

\node[zone, fill=green!8] (CK) at (0, -2.0)
    {new K, FP16};
\node[zone, fill=green!8] (CV) at (3.0, -2.0)
    {new V, FP16};

\end{tikzpicture}
\caption{TriZone cache layout for one $(\ell,h)$ pair. Each zone admits a uniform dequantization path; cell labels show the stored format.}
\label{fig:trizone-layout}
\vspace{-10pt}
\end{wrapfigure}

The pipeline of \cref{sec:vk_pipeline} produces a mixed-bit cache per
$(\ell,h)$ pair.
A mixed-bit allocation alone does not reduce decode cost: if the
quantized entries are unpacked to FP16 before the attention kernel
reads them, peak HBM usage stays unchanged and the extra
dequantization pass adds latency.
To translate bit-width savings into actual speedup and memory
reduction, the cache must stay packed in HBM while dequantization is
fused into the attention computation.
The challenge is that tokens within one head now carry different
bit-widths, yet GPU kernels require uniform-precision input segments
to avoid per-element branching.

We address this by organizing each $(\ell,h)$ pair into three storage zones that separate
packed quantized entries, full-precision retained entries, and newly
generated decode tokens. Each zone admits a uniform
dequantization path during the attention kernel
(\cref{fig:trizone-layout}; we suppress $(\ell,h)$ in what follows).

Zone~A is the packed old-cache stream: all retained K rows from
$\mathcal{T}_{\text{kept}}$, together with the V rows whose
$b_t^V \in \{2,4,8\}$. Within Zone~A, V rows are grouped into
uniform-bit segments by $b_t^V$ and K channels by $b_c^K$
(\cref{app:impl-details}). Each segment uses a single dequantization
path, avoiding per-element branching. Zone~B holds the FP16 V rows for
$\mathcal{T}_{V16} := \{t \in \mathcal{T}_{\text{kept}} : b_t^V = 16\}$;
their K rows remain in Zone~A, since K bit-widths follow the
per-channel allocation of \cref{sec:vk_pipeline} rather than
$b^V$. A single token's K and V can therefore reside in different
zones. Zone~C appends FP16 K and V for newly generated tokens
$\mathcal{T}_{\text{new}}$, growing by one entry per decode step.

At each decode step, the query $q_\tau$ attends over retained prefill
tokens and newly appended decode tokens. Logits combine the quantized
old K with the FP16 new K,
\[
z_\tau
  \;=\;
  q_\tau
  \bigl[\,\hat{K}_{\mathcal{T}_{\text{kept}}}^\top,\;
         K_{\mathcal{T}_{\text{new}}}^\top\bigr]
  \big/\sqrt{d},
  \qquad
  a_\tau = \operatorname{softmax}(z_\tau),
\]
and the output decomposes into three value sources:
\begin{equation}\label{eq:trizone-output}
o_\tau
  \;=\;
  \underbrace{\sum_{t \in \mathcal{T}_{\text{kept}} \setminus \mathcal{T}_{V16}}
    a_{\tau,t}\,\hat{v}_t}_{\text{quantized V}}
  \;+\;
  \underbrace{\sum_{t \in \mathcal{T}_{V16}}
    a_{\tau,t}\,v_t}_{\text{FP16 V}\;(b^V\!=\!16)}
  \;+\;
  \underbrace{\sum_{t \in \mathcal{T}_{\text{new}}}
    a_{\tau,t}\,v_t}_{\text{FP16 V (new)}}.
\end{equation}
The three terms correspond to the three zones: the first reads packed
quantized V from Zone~A, the second reads FP16~V from Zone~B, and
the third reads FP16~V from Zone~C.
\Cref{eq:trizone-output} is the numerical reference implemented by the
packed decode path: dense attention on the compressed cache
$(\hat{K}, \hat{V})$, with dequantization fused into the attention
computation. The latency measurements in \cref{sec:experiments} use this
implementation. \Cref{app:impl-details} gives layout details including bit-width
grouping, padding, and masking.

\section{Experiments}
\label{sec:experiments}

\textbf{Models.} We primarily evaluate on
LLaMA-3.1-8B-Instruct~\citep{llama3}. Cross-architecture results
(Mistral-7B-Instruct-v0.3~\citep{mistral},
Qwen3-4B~\citep{qwen3}) and cross-scale results
(LLaMA2-13B-Chat~\citep{touvron2023llama}, Qwen2.5-72B-Instruct~\citep{qwen25}) are listed in
\cref{app:more-models}.

\textbf{Baselines.}
We compare against four budget-driven methods.
SnapKV~\citep{li2024snapkv} scores tokens by attention within a
recent observation window and retains the top-$k$ per head.
Ada-SnapKV~\citep{feng2024adakv} (hereafter AdaKV) extends this with adaptive per-head budget
allocation.
ThinK~\citep{xu2024think} applies pruning at the channel level of the
K cache rather than the token level (paired with SnapKV).
SnapKV+ZipCache~\citep{li2024snapkv,he2024zipcache} combines token
eviction with post-hoc quantization of the retained cache.

\textbf{Benchmarks.}
We evaluate on LongBench~\citep{bai2023longbench} (16 tasks covering 6 different task categories),
Needle-in-a-Haystack~\citep{kamradt2023niah} (single-fact retrieval at varying depths),
RULER~\citep{hsieh2024ruler} (11 retrieval and reasoning tasks from 4K to 128K),
and InfiniteBench~\citep{zhang2024infbench} (10 tasks with context lengths up to $2$M tokens).

\textbf{Evaluation Protocol.}
All methods are evaluated under matched per-layer cache budgets
$B_\text{total} = nL$ with
$n \in \{64, 128, 256, 512, 1024\}$.
For RDKV, $n$ is the FP16-equivalent token count reallocated across
tokens (V) and channels (K) at bit-widths
$\mathcal{B} = \{0, 2, 4, 8, 16\}$. 
All score-based methods share the same size of observation window
and pooling kernel.
Further details in \cref{app:impl-details}.

\subsection{Main Results}
\label{sec:exp-longbench}

\textbf{LongBench.}
LongBench~\citep{bai2023longbench} evaluates long-context understanding across 16 tasks
spanning single/multi-document QA, summarization, few-shot learning, synthetic, and code.
\Cref{tab:longbench-llama31} reports per-task scores for
LLaMA-3.1-8B-Instruct~\citep{llama3} across five cache budgets.
RDKV achieves the highest average at every budget:
at $B_\text{total}=1024L$ it reaches $49.47$, within $0.35$ points of
FullKV; at $64L$ the lead over the strongest baseline widens to $2.7$
points.
This reflects a structural advantage of joint allocation.
Binary baselines discard every token below the top-$k$ threshold.
RDKV instead chooses from a larger action space via reverse
water-filling, assigning borderline tokens a low bit-width rather
than evicting them.
SnapKV+ZipCache~\citep{li2024snapkv,he2024zipcache} also quantizes,
but treats eviction and quantization as separate stages rather than
two ends of one allocation curve.
The advantage spans all six categories; the largest per-task gain
appears on GovRep ($3$--$5$ points), where attention dispersed across
lengthy documents favors mixed-precision over binary eviction.
Cross-model results (Mistral-7B~\citep{mistral}, Qwen3-4B~\citep{qwen3},
LLaMA-2-13B~\citep{touvron2023llama}, Qwen2.5-72B~\citep{qwen25}) are
in \cref{app:longbench-detailed}.

\begin{table}[!tb]
\centering
\caption{Performance on 16 LongBench~\citep{bai2023longbench} datasets for LLaMA-3.1-8B-Instruct across cache
budgets $B_\text{total} \in \{64L, 128L, 256L, 512L, 1024L\}$.
Snap+Zip denotes SnapKV~\citep{li2024snapkv}+ZipCache~\citep{he2024zipcache}.
The best result in each row is in \textbf{bold}; the second-best is \underline{underlined}.}
\label{tab:longbench-llama31}
\setlength{\tabcolsep}{0.6pt}
\fontsize{6.4}{7.6}\selectfont
\begin{tabular}{lccccccccccccccccc}
\toprule
\multirow{2}{*}{Method}
 & \multicolumn{3}{c}{Single-Doc QA}
 & \multicolumn{3}{c}{Multi-Doc QA}
 & \multicolumn{3}{c}{Summarization}
 & \multicolumn{3}{c}{Few-shot Learning}
 & \multicolumn{2}{c}{Synthetic}
 & \multicolumn{2}{c}{Code}
 & \multirow{2}{*}{Avg.} \\
\cmidrule(lr){2-4}\cmidrule(lr){5-7}\cmidrule(lr){8-10}\cmidrule(lr){11-13}\cmidrule(lr){14-15}\cmidrule(lr){16-17}
 & \rotatebox{35}{NrtvQA} & \rotatebox{35}{Qasper} & \rotatebox{35}{MF-en}
 & \rotatebox{35}{HotpotQA} & \rotatebox{35}{2WikiMQA} & \rotatebox{35}{Musique}
 & \rotatebox{35}{GovRep} & \rotatebox{35}{QMSum} & \rotatebox{35}{MultiNews}
 & \rotatebox{35}{TREC} & \rotatebox{35}{TriviaQA} & \rotatebox{35}{SAMSum}
 & \rotatebox{35}{PCount} & \rotatebox{35}{PRe}
 & \rotatebox{35}{Lcc} & \rotatebox{35}{RB-P}
 & \\
\midrule
\multicolumn{18}{c}{\textit{LLaMA-3.1-8B-Instruct,} $B_\text{total} = \text{Full}$} \\
\midrule
FullKV & 29.38 & 44.82 & 55.46 & 57.62 & 49.22 & 32.80 & 34.58 & 25.34 & 26.90 & 73.00 & 91.51 & 43.64 & 9.04 & 99.50 & 65.24 & 59.08 & 49.82 \\
\midrule
\multicolumn{18}{c}{\textit{LLaMA-3.1-8B-Instruct,} $B_\text{total} = 64L$} \\
\midrule
SnapKV\citep{li2024snapkv} & 25.20 & 21.05 & 39.35 & 51.69 & 45.25 & 26.57 & 15.06 & 21.24 & 14.60 & 39.00 & 81.43 & 36.62 & \underline{9.00} & 95.00 & 54.74 & 47.41 & 38.95 \\
AdaKV\citep{feng2024adakv} & 24.62 & 19.62 & 39.41 & 50.93 & 42.13 & 26.10 & 16.08 & 22.17 & 15.71 & 39.50 & 85.62 & 37.83 & \underline{9.00} & 97.50 & 56.57 & 50.70 & 39.59 \\
ThinK\citep{xu2024think} & 24.93 & 23.54 & 45.55 & 53.54 & 44.17 & 27.79 & 18.38 & 22.77 & 18.10 & 42.00 & 84.35 & 38.50 & \underline{9.00} & \underline{98.50} & 55.81 & 49.22 & 41.01 \\
Snap+Zip & \textbf{29.85} & \underline{32.53} & \underline{52.16} & \underline{55.24} & \underline{46.72} & \textbf{30.60} & \underline{19.96} & \underline{22.88} & \underline{21.26} & \underline{52.50} & \underline{90.50} & \underline{40.03} & \textbf{9.01} & 75.50 & \underline{61.01} & \underline{51.97} & \underline{43.23} \\
\textbf{RDKV}   & \underline{28.00} & \textbf{34.47} & \textbf{54.16} & \textbf{55.90} & \textbf{47.45} & \underline{29.65} & \textbf{23.10} & \textbf{23.78} & \textbf{22.54} & \textbf{59.50} & \textbf{91.76} & \textbf{41.52} & 8.70 & \textbf{99.17} & \textbf{61.70} & \textbf{54.10} & \textbf{45.97} \\
\midrule
\multicolumn{18}{c}{\textit{LLaMA-3.1-8B-Instruct,} $B_\text{total} = 128L$} \\
\midrule
SnapKV\citep{li2024snapkv} & 24.41 & 24.19 & 48.91 & 55.98 & 45.51 & 28.41 & 18.64 & 22.76 & 18.72 & 47.00 & 88.91 & 39.41 & \textbf{9.00} & 98.00 & 59.14 & 52.18 & 42.57 \\
AdaKV\citep{feng2024adakv} & 26.35 & 23.50 & 51.59 & 57.11 & 46.38 & 28.36 & 19.68 & 23.09 & 19.58 & 49.50 & 89.92 & 40.25 & \textbf{9.00} & \underline{99.00} & 60.42 & \underline{54.59} & 43.64 \\
ThinK\citep{xu2024think} & 25.40 & 30.67 & 49.96 & 55.71 & 46.76 & 28.47 & 21.44 & 23.35 & 20.83 & 50.00 & 89.87 & 40.97 & \textbf{9.00} & \underline{99.00} & 60.18 & 53.22 & 44.05 \\
Snap+Zip & \underline{29.40} & \underline{36.64} & \underline{53.26} & \underline{57.27} & \underline{47.23} & \textbf{31.09} & \underline{22.59} & \underline{23.66} & \underline{23.97} & \underline{61.00} & \textbf{92.00} & \textbf{42.17} & 8.22 & 95.50 & \underline{62.24} & 53.57 & \underline{46.24} \\
\textbf{RDKV}   & \textbf{29.45} & \textbf{41.34} & \textbf{55.55} & \textbf{57.39} & \textbf{49.38} & \underline{30.89} & \textbf{25.55} & \textbf{24.59} & \textbf{24.29} & \textbf{65.00} & \underline{91.92} & \underline{41.80} & \underline{8.75} & \textbf{100.00} & \textbf{62.68} & \textbf{55.41} & \textbf{47.75} \\
\midrule
\multicolumn{18}{c}{\textit{LLaMA-3.1-8B-Instruct,} $B_\text{total} = 256L$} \\
\midrule
SnapKV\citep{li2024snapkv} & 26.90 & 31.27 & 53.45 & 56.80 & 47.97 & 29.52 & 21.99 & \underline{24.34} & 21.31 & 55.00 & 91.41 & 40.75 & 8.61 & 99.00 & 62.20 & 55.15 & 45.35 \\
AdaKV\citep{feng2024adakv} & 25.22 & 32.02 & 52.51 & 56.82 & 47.65 & 30.71 & 22.49 & 23.86 & 21.78 & 61.00 & 91.27 & 40.92 & \underline{8.63} & \textbf{99.75} & \textbf{63.42} & \textbf{56.58} & 45.91 \\
ThinK\citep{xu2024think} & 26.83 & 34.14 & 52.82 & 56.77 & \underline{49.14} & 29.08 & 23.63 & 23.47 & 22.87 & 61.50 & 91.51 & 41.35 & 8.60 & \underline{99.50} & 62.82 & 55.44 & 46.22 \\
Snap+Zip & \underline{28.02} & \underline{41.07} & \underline{54.05} & \underline{57.74} & 48.02 & \textbf{31.98} & \underline{24.27} & 23.80 & \underline{24.82} & \underline{66.00} & \underline{91.67} & \textbf{42.07} & 7.90 & \underline{99.50} & 61.72 & 54.18 & \underline{47.30} \\
\textbf{RDKV}   & \textbf{29.63} & \textbf{44.85} & \textbf{56.33} & \textbf{57.85} & \textbf{49.64} & \underline{31.73} & \textbf{28.01} & \textbf{24.66} & \textbf{25.66} & \textbf{70.00} & \textbf{91.87} & \underline{41.79} & \textbf{8.83} & \underline{99.50} & \underline{63.27} & \underline{56.01} & \textbf{48.73} \\
\midrule
\multicolumn{18}{c}{\textit{LLaMA-3.1-8B-Instruct,} $B_\text{total} = 512L$} \\
\midrule
SnapKV\citep{li2024snapkv} & \underline{29.68} & 39.51 & \textbf{55.94} & 56.94 & 48.87 & 30.56 & 24.31 & 24.07 & 23.60 & 65.50 & 91.46 & 41.44 & 8.57 & \textbf{100.00} & \underline{64.50} & 56.58 & 47.60 \\
AdaKV\citep{feng2024adakv} & 27.93 & 40.26 & 54.76 & \underline{56.96} & 48.68 & 31.45 & 25.27 & 24.09 & 23.71 & \underline{69.50} & \underline{91.96} & 41.77 & \underline{8.65} & \underline{99.50} & \textbf{64.89} & \textbf{57.66} & 47.94 \\
ThinK\citep{xu2024think} & 27.96 & 40.16 & 55.25 & 56.85 & \textbf{49.60} & 31.16 & \underline{26.00} & 24.36 & 24.28 & 68.00 & \textbf{92.00} & 41.79 & 8.63 & \textbf{100.00} & 64.08 & \underline{57.30} & \underline{47.96} \\
Snap+Zip & 29.32 & \underline{41.46} & 53.15 & 55.47 & 46.77 & \textbf{33.26} & 23.64 & \underline{24.88} & \underline{24.89} & 68.00 & 91.75 & \underline{42.32} & 8.22 & \underline{99.50} & 61.75 & 55.03 & 47.46 \\
\textbf{RDKV}   & \textbf{30.24} & \textbf{45.67} & \underline{55.58} & \textbf{57.06} & \underline{48.95} & \underline{31.67} & \textbf{30.68} & \textbf{25.06} & \textbf{26.46} & \textbf{72.00} & 91.83 & \textbf{43.15} & \textbf{8.83} & \textbf{100.00} & 63.36 & 57.02 & \textbf{49.22} \\
\midrule
\multicolumn{18}{c}{\textit{LLaMA-3.1-8B-Instruct,} $B_\text{total} = 1024L$} \\
\midrule
SnapKV\citep{li2024snapkv} & 29.51 & 43.17 & \textbf{56.26} & 57.43 & \underline{49.18} & 32.07 & 27.21 & 24.63 & 25.26 & 69.50 & 91.70 & 42.28 & 8.15 & \textbf{100.00} & \textbf{64.71} & \textbf{58.52} & 48.72 \\
AdaKV\citep{feng2024adakv} & 29.76 & \underline{43.39} & \underline{55.82} & \underline{57.62} & 48.31 & \underline{32.35} & 27.22 & 24.75 & 25.18 & \textbf{72.00} & 91.78 & 42.12 & 8.15 & \textbf{100.00} & \underline{64.56} & \underline{58.37} & \underline{48.84} \\
ThinK\citep{xu2024think} & 27.34 & 42.48 & 55.59 & \textbf{57.78} & 49.17 & \textbf{32.36} & \underline{28.23} & \underline{24.82} & \underline{25.85} & \underline{71.50} & \textbf{92.50} & 42.25 & 8.27 & \textbf{100.00} & 64.44 & \textbf{58.52} & 48.82 \\
Snap+Zip & \underline{30.23} & 40.65 & 54.13 & 56.06 & 46.23 & 31.45 & 22.84 & 24.50 & 24.28 & 69.00 & 91.76 & \textbf{44.15} & \underline{8.61} & \underline{99.00} & 60.90 & 56.38 & 47.51 \\
\textbf{RDKV}   & \textbf{30.41} & \textbf{44.54} & \underline{55.82} & 57.14 & \textbf{49.25} & 31.77 & \textbf{33.14} & \textbf{25.19} & \textbf{26.92} & \textbf{72.00} & \underline{91.84} & \underline{43.90} & \textbf{9.08} & \textbf{100.00} & 63.56 & 57.01 & \textbf{49.47} \\
\bottomrule
\end{tabular}
\end{table}

\textbf{Needle-In-A-Haystack.}\label{sec:exp-niah}
Needle-in-a-Haystack~\citep{kamradt2023niah} tests single-fact
retrieval by inserting a target fact at varying depths.
\Cref{fig:niah} compares methods under $B_\text{total}=64L$ at $32$k
context length.
SnapKV~\citep{li2024snapkv} and AdaKV~\citep{feng2024adakv} develop
failure bands at intermediate depths where the needle falls below the
top-$k$ threshold and is evicted entirely.
RDKV assigns the same token a low bit-width instead of discarding it. A 2- or 4-bit copy suffices to recover the answer.
RDKV (avg.\ $0.99$) thus maintains near-uniform retrieval close to
FullKV ($1.00$). Further results are in \cref{app:niah}.

\begin{figure}[!tb]
  \centering
  \begin{subfigure}[t]{0.49\linewidth}
    \includegraphics[width=\linewidth]{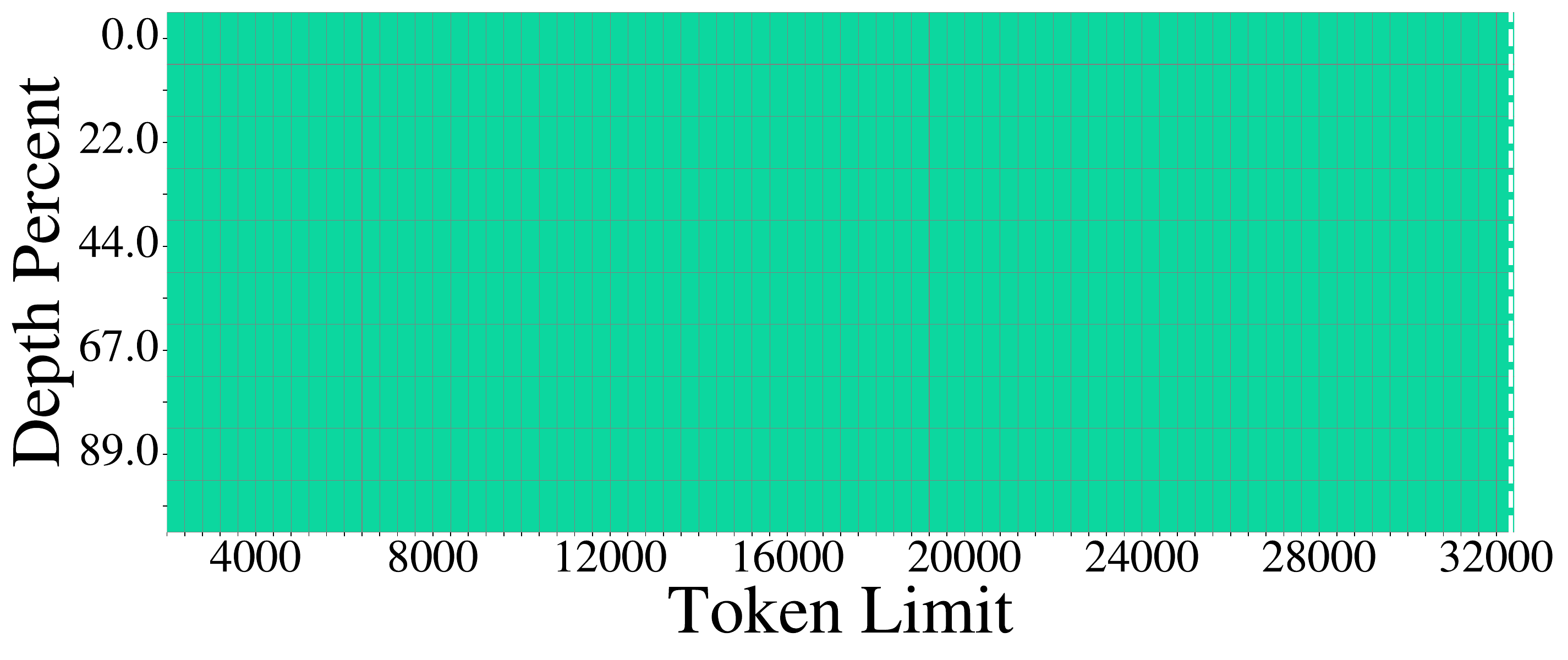}
    \caption{FullKV (avg.\ $1.00$).}
  \end{subfigure}
  \begin{subfigure}[t]{0.49\linewidth}
    \includegraphics[width=\linewidth]{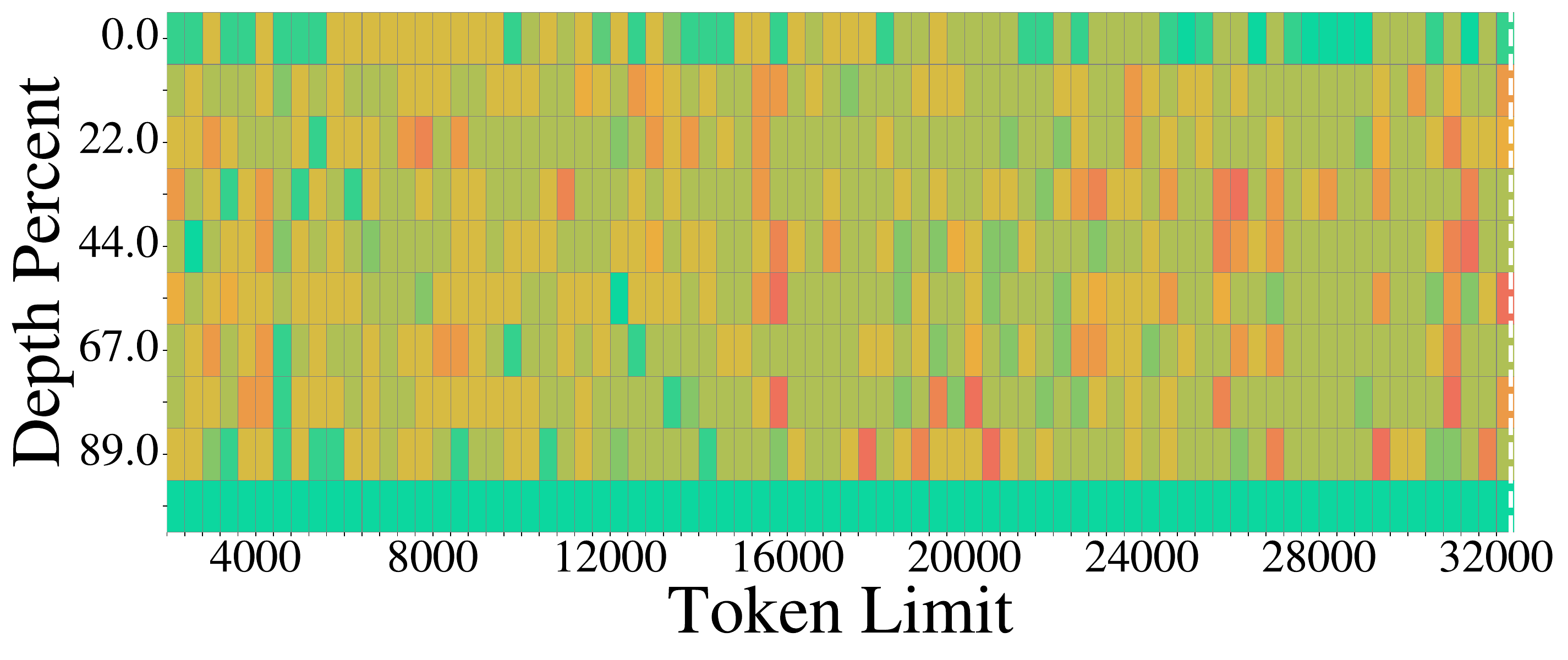}
    \caption{SnapKV~\citep{li2024snapkv} (avg.\ $0.64$).}
  \end{subfigure}\\[2pt]
  \begin{subfigure}[t]{0.49\linewidth}
    \includegraphics[width=\linewidth]{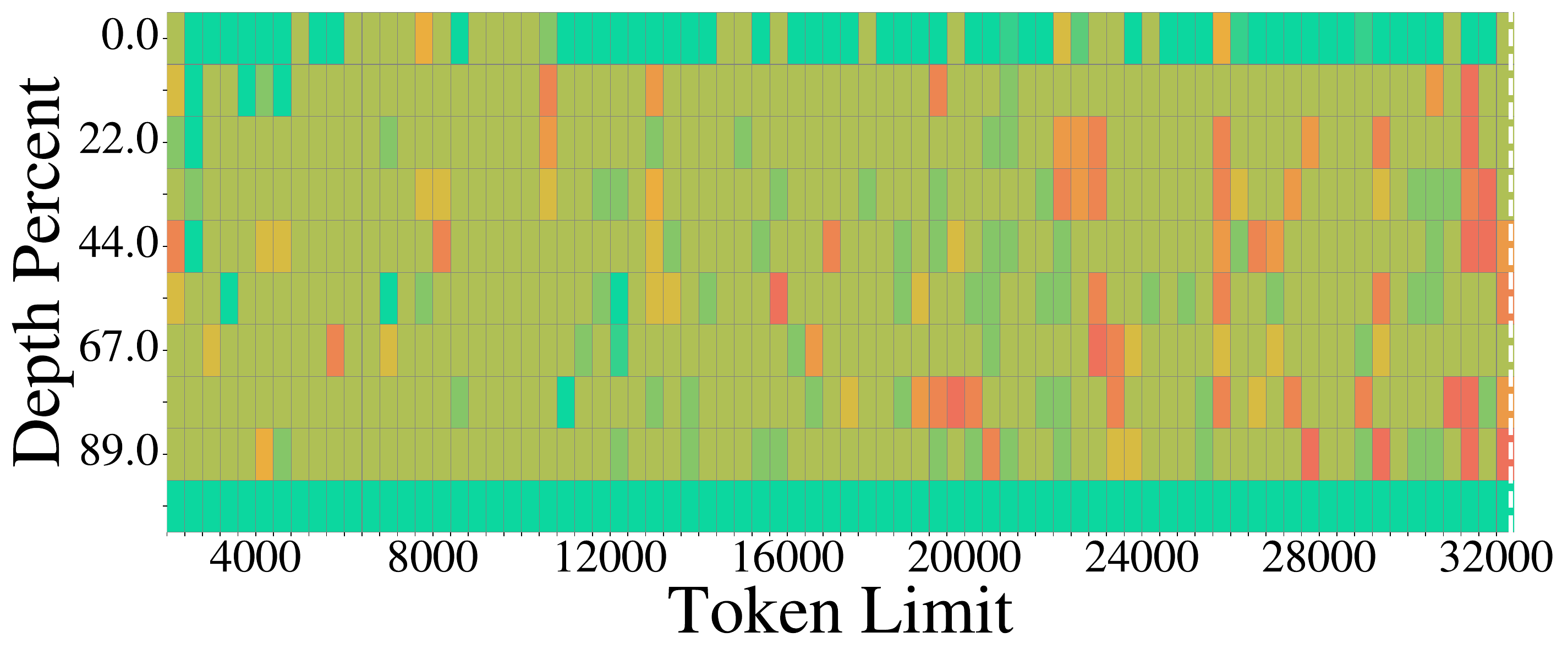}
    \caption{AdaKV~\citep{feng2024adakv} (avg.\ $0.68$).}
  \end{subfigure}
  \begin{subfigure}[t]{0.49\linewidth}
    \includegraphics[width=\linewidth]{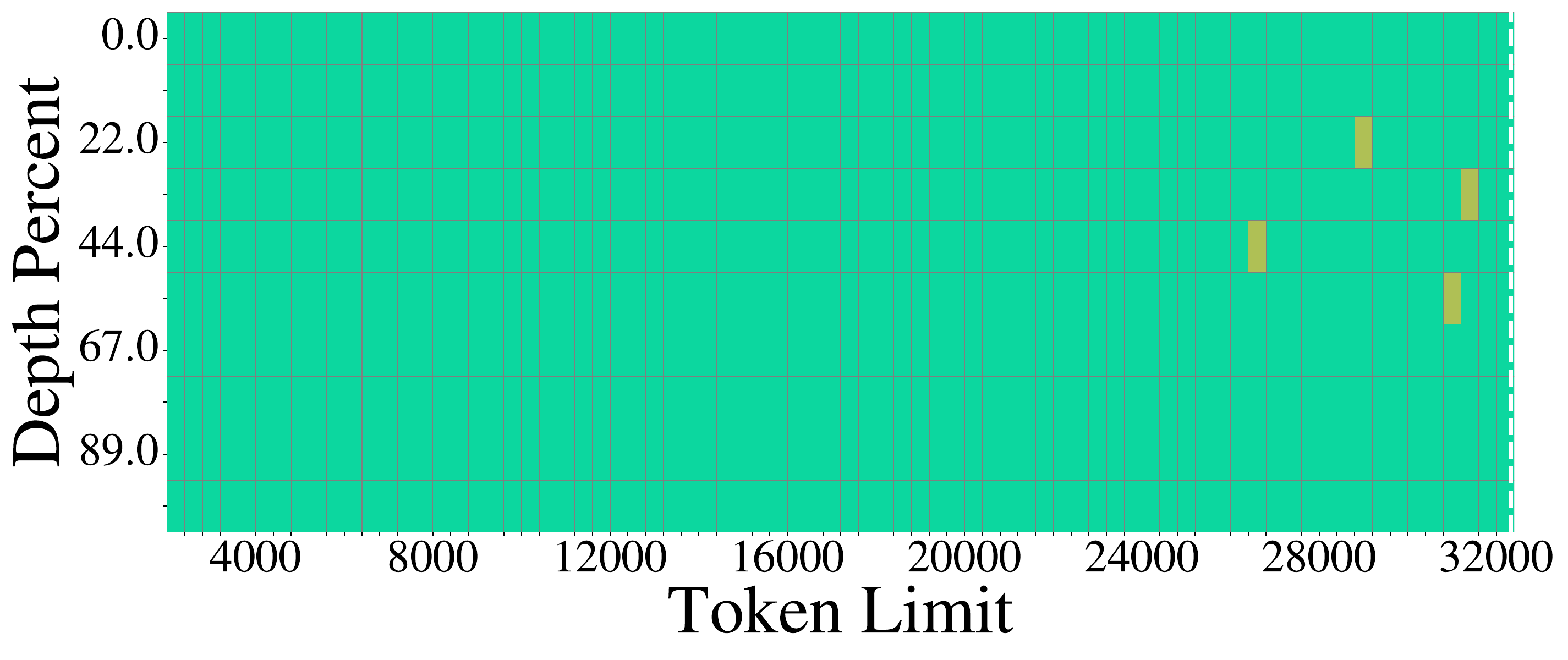}
    \caption{RDKV (avg.\ $0.99$). }
  \end{subfigure}
  \caption{Needle-in-a-Haystack~\citep{kamradt2023niah} on LLaMA-3.1-8B-Instruct at $B_\text{total} = 64L$. RDKV
  preserves a near-uniform retrieval pattern; SnapKV and AdaKV drop accuracy in mid-depth bands.}
  \label{fig:niah}
\end{figure}

\textbf{RULER.}
RULER~\citep{hsieh2024ruler} evaluates retrieval and reasoning at
extreme context lengths, averaging 11 subtasks at sequence lengths
from $4$k to $128$k.
\Cref{tab:ruler} reports results on LLaMA-3.1-8B-Instruct under
$B_\text{total} = 1024L$.
Eviction-only baselines degrade substantially as context grows,
falling over $6$ points below FullKV even at $4$k and $17$ points
at $128$k.
RDKV stays within $1$ point of FullKV up to $8$k and leads the
strongest baseline by $4.1$--$14.2$ points across all lengths,
because tokens that a top-$k$ policy would discard are instead
retained at low bit-width.
Per-task breakdowns are in \cref{app:ruler-detailed}.

\textbf{InfiniteBench.}
InfiniteBench~\citep{zhang2024infbench} spans 10 tasks with sequences
exceeding $100$k tokens (average ${\sim}200$k; the longest task,
Zh.QA, reaches ${\sim}2$M tokens).
\Cref{tab:infbench} reports results on LLaMA-3.1-8B-Instruct under
$B_\text{total} = 1024L$.
RDKV ranks first among compression methods, ahead of AdaKV by
$1.40$ and SnapKV by $1.55$ points, confirming that the allocation
advantage persists at ultra-long contexts well beyond the lengths
covered by LongBench and RULER.
Per-task results are in \cref{app:infbench-detailed}.

\begin{table}[!tb]
  \centering
  \begin{minipage}[t]{0.55\linewidth}
    \centering
    \small
    \caption{RULER~\citep{hsieh2024ruler} 11-task average across sequence lengths (LLaMA-3.1-8B-Instruct).}
    \label{tab:ruler}
    \setlength{\tabcolsep}{3pt}
    \begin{tabular}{lcccccc}
      \toprule
      Method     & 4K    & 8K    & 16K   & 32K   & 64K   & 128K \\
      \midrule
      FullKV     & 98.58 & 98.88 & 96.98 & 90.33 & 88.49 & 79.91 \\
      \midrule
      SnapKV\citep{li2024snapkv} & 89.48 & 84.75 & 80.43 & 75.82 & 72.77 & \underline{62.85} \\
      AdaKV\citep{feng2024adakv} & \underline{92.03} & 85.49 & 80.70 & 75.67 & 72.47 & 58.43 \\
     ThinK\citep{xu2024think} & 91.76 & \underline{86.44} & \underline{81.45} & \underline{75.94} & 72.44 & \underline{62.85} \\
      Snap+Zip   & 76.49 & 77.84 & 76.26 & 75.05 & \underline{74.59} & 62.37 \\
      \textbf{RDKV} & \textbf{98.62} & \textbf{98.61} & \textbf{95.65} & \textbf{88.28} & \textbf{80.07} & \textbf{66.95} \\
      \bottomrule
    \end{tabular}
  \end{minipage}\hfill
  \begin{minipage}[t]{0.42\linewidth}
    \centering
    \small
    \caption{InfiniteBench~\citep{zhang2024infbench} 10-task average (LLaMA-3.1-8B-Instruct).}
    \label{tab:infbench}
    \setlength{\tabcolsep}{6pt}
    \begin{tabular}{lc}
      \toprule
      Method     & Avg.\ \\
      \midrule
      FullKV     & 45.38 \\
      \midrule
      SnapKV\citep{li2024snapkv} & 37.91 \\
      AdaKV\citep{feng2024adakv} & \underline{38.06} \\
      ThinK\citep{xu2024think} & 36.73 \\
      Snap+Zip   & 37.76 \\
      \textbf{RDKV} & \textbf{39.46} \\
      \bottomrule
    \end{tabular}
  \end{minipage}
\end{table}

\subsection{Memory and Latency}
\label{sec:exp-efficiency}

\begin{figure}[!tb]
\centering
\begin{subfigure}[b]{0.32\textwidth}
  \includegraphics[width=\linewidth]{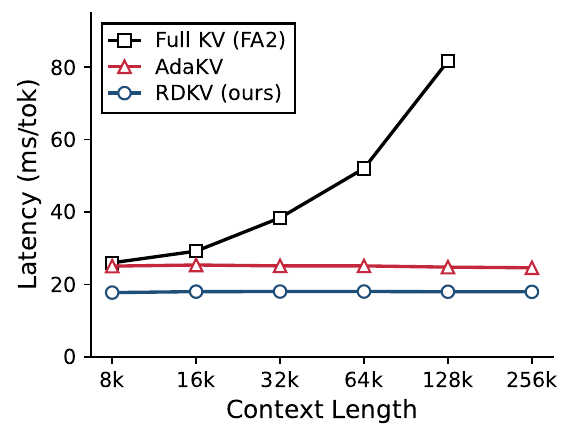}
  \caption{Decode latency.}
  \label{fig:eff-latency}
\end{subfigure}
\hfill
\begin{subfigure}[b]{0.32\textwidth}
  \includegraphics[width=\linewidth]{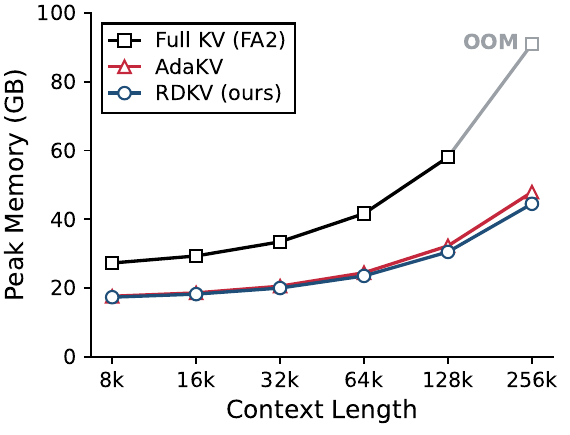}
  \caption{Peak memory.}
  \label{fig:eff-memory}
\end{subfigure}
\hfill
\begin{subfigure}[b]{0.32\textwidth}
  \includegraphics[width=\linewidth]{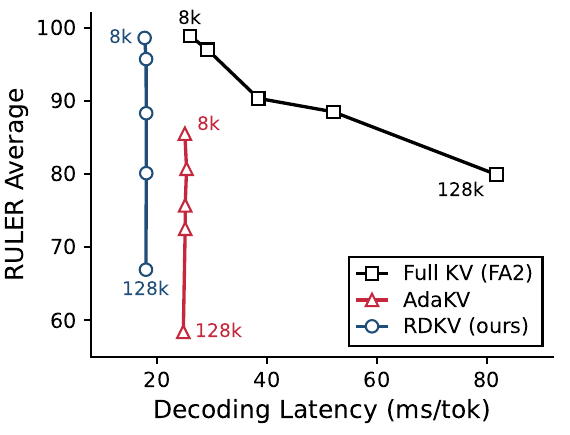}
  \caption{Latency vs.\ Accuracy.}
  \label{fig:eff-tradeoff}
\end{subfigure}

\caption{Decode latency, peak memory, and latency-accuracy trade-off for LLaMA-3.1-8B-Instruct from 8K to 256K context length on a single A100 64\,GB. Both FullKV and AdaKV use FA2.}
\label{fig:efficiency}
\end{figure}

\textbf{Latency.} \cref{fig:efficiency} reports decode latency, peak memory, and the
latency-accuracy trade-off on LLaMA-3.1-8B-Instruct from $8$K to
$256$K on a single A100 64\,GB.
Both FullKV and AdaKV~\citep{feng2024adakv} use FA2~\citep{dao2023flashattention2};
reporting FullKV under SDPA would inflate RDKV's relative speedup to roughly
$13{\times}$. FullKV per-token decode latency grows from $26$\,ms at
$8$K to $82$\,ms at $128$K, while RDKV stays flat at ${\sim}18$\,ms, a
$4.5{\times}$ speedup at $128$K (\cref{fig:eff-latency}), since
TriZone's packed cache size is fixed by $B_\text{total}$ rather than
by $T$. AdaKV is also at ${\sim}25$\,ms under the same token
budget, but RDKV is $1.4{\times}$ faster.

\textbf{Memory.} FullKV peak memory grows linearly to $58.1$\,GB at
$128$K (\cref{fig:eff-memory}) and OOMs at $256$K.
RDKV uses $30.5$\,GB at $128$K ($1.9{\times}$ reduction vs.\
FullKV) and $44.5$\,GB at $256$K, running on the same device where
FullKV cannot.
 
\textbf{Latency-Accuracy Trade-off.} \cref{fig:eff-tradeoff} plots latency against RULER accuracy at each
context length.
RDKV and AdaKV operate at comparable latency, but RDKV is more
accurate at every context length: at $128$K the gap is $8.5$ points
($67.0$ vs.\ $58.4$).
RDKV thus sits strictly above AdaKV on the latency--accuracy Pareto
front.

\subsection{Ablation Studies}
\label{sec:ablation}

\begin{wraptable}{r}{0.3\textwidth}
\centering
\small
\vspace{-12pt}
\caption{Action-space ablation on LongBench average.}
\label{tab:ablation-action}
\vspace{2pt}
\setlength{\tabcolsep}{3pt}
\begin{tabular}{lcc}
\toprule
Config.\ & $128L$\ & $512L$ \\
\midrule
Quant-only    & 39.06 & 40.59 \\
Evict-only       & 	42.53 & 47.18 \\
Tri-state      & \underline{46.91} & \underline{49.05} \\
Joint (RDKV) & \textbf{47.43} & \textbf{49.22} \\
\bottomrule
\end{tabular}
\vspace{-8pt}
\end{wraptable}

\paragraph{Action Space.}
We compare four bit-width sets under the same distortion weights and
water-filling allocator (\cref{tab:ablation-action},
LLaMA-3.1-8B-Instruct, $B_\text{total} = 128L$ and $512L$).
Removing eviction (Quant-only) costs $8.4$ / $8.6$ points: the allocator
must spend bits on every token, diluting precision for the critical
few.
Removing quantization (Evict-only) costs $4.9$ / $2$ points:
moderate-importance tokens are discarded entirely when a 2- or 4-bit
copy would suffice.
Tri-state ($\{0,4,16\}$) closes most of the
gap ($0.5$ / $0.2$), confirming that \emph{mixing} eviction with
quantization matters more than bit-width granularity.

\begin{wraptable}{r}{0.32\textwidth}
\centering
\small
\vspace{-12pt}
\caption{TriZone ablation: decode latency (ms/tok).}
\label{tab:trizone-ablation}
\vspace{2pt}
\setlength{\tabcolsep}{3pt}
\begin{tabular}{lcc}
\toprule
Config.\ & $16$K & $64$K \\
\midrule
RDKV w/ TriZone  & \textbf{17.99} & \textbf{18.02} \\
RDKV w/o TriZone & 36.05 & 36.47 \\
\bottomrule
\end{tabular}
\vspace{-9pt}
\end{wraptable}

\paragraph{TriZone Packed Decode.}
Without TriZone (\cref{sec:efficient_decode}), the mixed-bit cache
must be unpacked to FP16 before FA2, reading the same HBM as
full-precision.
\Cref{tab:trizone-ablation} measured on LLaMA-3.1-8B-Instruct shows
the effect: without TriZone, per-token latency is ${\sim}36$\,ms at
both $16$K and $64$K; with TriZone, it drops to ${\sim}18$\,ms, a
$2{\times}$ reduction with bit-identical output.
Combined with the smaller cache footprint from mixed-bit allocation,
this gives the $4.5{\times}$ decode speedup over FullKV reported in
\cref{fig:eff-latency}.

Additional ablation experiments are in~\cref{app:ablation-extension}.

\section{Conclusion}
\label{sec:conclusion}
We formulate KV cache compression as a rate-distortion problem.
The reverse water-filling solution allocates each cache unit a
bit-width from $\mathcal{B} = \{0, 2, 4, 8, 16\}$.
Eviction emerges as the zero-rate boundary of the same curve as
quantization, so sparsification and quantization are selected from a
single optimization rather than composed under pre-allocated ratios.
A packed-decode layout fuses dequantization into attention, turning
the mixed-bit allocation into actual HBM savings. Across LongBench,
RULER, and InfiniteBench on five open-source LLMs spanning different
architectures and scales, RDKV consistently
outperforms the evaluated baselines in accuracy, decoding speed, and
peak memory at long context, and reaches $256$K on a single A100
$64$\,GB where Full KV does not.
Our results show that treating eviction and quantization jointly can
improve accuracy at no cost to efficiency.
A natural extension is streaming
re-budgeting: re-running the allocation during decoding so that the
bit assignment adapts to the growing cache instead of being frozen
after prefill.

\begin{ack}
This work was supported by a grant from the Swiss National Supercomputing Centre (CSCS) under project ID lp160 on Alps.
\end{ack}






\bibliographystyle{unsrtnat}
\bibliography{references}

\begin{thebibliography}{51}
\providecommand{\natexlab}[1]{#1}
\providecommand{\url}[1]{\texttt{#1}}
\expandafter\ifx\csname urlstyle\endcsname\relax
  \providecommand{\doi}[1]{doi: #1}\else
  \providecommand{\doi}{doi: \begingroup \urlstyle{rm}\Url}\fi

\bibitem[Brown et~al.(2020)Brown, Mann, Ryder, Subbiah, Kaplan, Dhariwal,
  Neelakantan, Shyam, Sastry, Askell, et~al.]{brown2020language}
Tom Brown, Benjamin Mann, Nick Ryder, Melanie Subbiah, Jared~D Kaplan, Prafulla
  Dhariwal, Arvind Neelakantan, Pranav Shyam, Girish Sastry, Amanda Askell,
  et~al.
\newblock Language models are few-shot learners.
\newblock \emph{Advances in Neural Information Processing Systems},
  33:\penalty0 1877--1901, 2020.

\bibitem[Achiam et~al.(2023)Achiam, Adler, Agarwal, Ahmad, Akkaya, Aleman,
  Almeida, Altenschmidt, Altman, Anadkat, et~al.]{achiam2023gpt}
Josh Achiam, Steven Adler, Sandhini Agarwal, Lama Ahmad, Ilge Akkaya,
  Florencia~Leoni Aleman, Diogo Almeida, Janko Altenschmidt, Sam Altman,
  Shyamal Anadkat, et~al.
\newblock {GPT}-4 technical report.
\newblock \emph{arXiv preprint arXiv:2303.08774}, 2023.

\bibitem[Liu et~al.(2024{\natexlab{a}})Liu, Feng, Xue, Wang, Wu, Lu, Zhao,
  Deng, Zhang, Ruan, et~al.]{liu2024deepseek}
Aixin Liu, Bei Feng, Bing Xue, Bingxuan Wang, Bochao Wu, Chengda Lu, Chenggang
  Zhao, Chengqi Deng, Chenyu Zhang, Chong Ruan, et~al.
\newblock {DeepSeek}-v3 technical report.
\newblock \emph{arXiv preprint arXiv:2412.19437}, 2024{\natexlab{a}}.

\bibitem[Lewis et~al.(2020)Lewis, Perez, Piktus, Petroni, Karpukhin, Goyal,
  K{\"u}ttler, Lewis, Yih, Rockt{\"a}schel, et~al.]{lewis2020retrieval}
Patrick Lewis, Ethan Perez, Aleksandra Piktus, Fabio Petroni, Vladimir
  Karpukhin, Naman Goyal, Heinrich K{\"u}ttler, Mike Lewis, Wen-tau Yih, Tim
  Rockt{\"a}schel, et~al.
\newblock Retrieval-augmented generation for knowledge-intensive {NLP} tasks.
\newblock \emph{Advances in Neural Information Processing Systems},
  33:\penalty0 9459--9474, 2020.

\bibitem[Kwon et~al.(2023)Kwon, Li, Zhuang, Sheng, Zheng, Yu, Gonzalez, Zhang,
  and Stoica]{kwon2023efficient}
Woosuk Kwon, Zhuohan Li, Siyuan Zhuang, Ying Sheng, Lianmin Zheng, Cody~Hao Yu,
  Joseph Gonzalez, Hao Zhang, and Ion Stoica.
\newblock Efficient memory management for large language model serving with
  {PagedAttention}.
\newblock In \emph{Proceedings of the 29th Symposium on Operating Systems
  Principles}, pages 611--626, 2023.

\bibitem[Dao et~al.(2022)Dao, Fu, Ermon, Rudra, and
  R{\'e}]{dao2022flashattention}
Tri Dao, Dan Fu, Stefano Ermon, Atri Rudra, and Christopher R{\'e}.
\newblock {FlashAttention}: Fast and memory-efficient exact attention with
  {IO}-awareness.
\newblock \emph{Advances in Neural Information Processing Systems},
  35:\penalty0 16344--16359, 2022.

\bibitem[Pope et~al.(2023)Pope, Douglas, Chowdhery, Devlin, Bradbury, Heek,
  Xiao, Agrawal, and Dean]{pope2023efficiently}
Reiner Pope, Sholto Douglas, Aakanksha Chowdhery, Jacob Devlin, James Bradbury,
  Jonathan Heek, Kefan Xiao, Shivani Agrawal, and Jeff Dean.
\newblock Efficiently scaling transformer inference.
\newblock \emph{Proceedings of Machine Learning and Systems}, 5:\penalty0
  606--624, 2023.

\bibitem[Li et~al.(2024{\natexlab{a}})Li, Li, Tian, Tang, Xu, Chen, Hu, Dong,
  Li, and Chen]{li2024survey}
Haoyang Li, Yiming Li, Anxin Tian, Tianhao Tang, Zhanchao Xu, Xuejia Chen,
  Nicole Hu, Wei Dong, Qing Li, and Lei Chen.
\newblock A survey on large language model acceleration based on {KV} cache
  management.
\newblock \emph{arXiv preprint arXiv:2412.19442}, 2024{\natexlab{a}}.

\bibitem[Liu et~al.(2025)Liu, Fu, Liu, Zou, Zhang, and Zhou]{liu2025kv}
Yanyu Liu, Jingying Fu, Sixiang Liu, Yitian Zou, Shouhua Zhang, and Jiehan
  Zhou.
\newblock {KV} cache compression for inference efficiency in {LLMs}: A review.
\newblock In \emph{Proceedings of the 4th International Conference on
  Artificial Intelligence and Intelligent Information Processing}, pages
  207--212, 2025.

\bibitem[Zhang et~al.(2023)Zhang, Sheng, Zhou, Chen, Zheng, Cai, Song, Tian,
  R{\'e}, Barrett, et~al.]{h2o}
Zhenyu Zhang, Ying Sheng, Tianyi Zhou, Tianlong Chen, Lianmin Zheng, Ruisi Cai,
  Zhao Song, Yuandong Tian, Christopher R{\'e}, Clark Barrett, et~al.
\newblock {H2O}: Heavy-hitter oracle for efficient generative inference of
  large language models.
\newblock \emph{Advances in Neural Information Processing Systems},
  36:\penalty0 34661--34710, 2023.

\bibitem[Li et~al.(2024{\natexlab{b}})Li, Huang, Yang, Venkitesh, Locatelli,
  Ye, Cai, Lewis, and Chen]{li2024snapkv}
Yuhong Li, Yingbing Huang, Bowen Yang, Bharat Venkitesh, Acyr Locatelli,
  Hanchen Ye, Tianle Cai, Patrick Lewis, and Deming Chen.
\newblock {SnapKV}: {LLM} knows what you are looking for before generation.
\newblock \emph{Advances in Neural Information Processing Systems},
  37:\penalty0 22947--22970, 2024{\natexlab{b}}.

\bibitem[Xu et~al.(2024)Xu, Jie, Dong, Wang, Lu, Zhou, Saha, Xiong, and
  Sahoo]{xu2024think}
Yuhui Xu, Zhanming Jie, Hanze Dong, Lei Wang, Xudong Lu, Aojun Zhou, Amrita
  Saha, Caiming Xiong, and Doyen Sahoo.
\newblock {ThinK}: Thinner key cache by query-driven pruning.
\newblock \emph{arXiv preprint arXiv:2407.21018}, 2024.

\bibitem[Liu et~al.(2024{\natexlab{b}})Liu, Yuan, Jin, Zhong, Xu, Braverman,
  Chen, and Hu]{kivi}
Zirui Liu, Jiayi Yuan, Hongye Jin, Shaochen Zhong, Zhaozhuo Xu, Vladimir
  Braverman, Beidi Chen, and Xia Hu.
\newblock {KIVI}: A tuning-free asymmetric 2bit quantization for {KV} cache.
\newblock \emph{arXiv preprint arXiv:2402.02750}, 2024{\natexlab{b}}.

\bibitem[He et~al.(2024)He, Zhang, Wu, Liu, Zhou, and Zhuang]{he2024zipcache}
Yefei He, Luoming Zhang, Weijia Wu, Jing Liu, Hong Zhou, and Bohan Zhuang.
\newblock {ZipCache}: Accurate and efficient {KV} cache quantization with
  salient token identification.
\newblock \emph{Advances in Neural Information Processing Systems},
  37:\penalty0 68287--68307, 2024.

\bibitem[Zhang et~al.(2024{\natexlab{a}})Zhang, Zhu, Song, Wu, Kuang, Li,
  Shang, Liu, and Li]{qpruningkv}
Jiebin Zhang, Dawei Zhu, Yifan Song, Wenhao Wu, Chuqiao Kuang, Xiaoguang Li,
  Lifeng Shang, Qun Liu, and Sujian Li.
\newblock More tokens, lower precision: Towards the optimal token-precision
  trade-off in {KV} cache compression.
\newblock \emph{arXiv preprint arXiv:2412.12706}, 2024{\natexlab{a}}.

\bibitem[Lei and Ilager(2026)]{arkv}
Jianlong Lei and Shashikant Ilager.
\newblock {ARKV}: Adaptive and resource-efficient {KV} cache management under
  limited memory budget for long-context inference in {LLMs}.
\newblock \emph{arXiv preprint arXiv:2603.08727}, 2026.

\bibitem[Anonymous(2026)]{anonymous2026hqekv}
Anonymous.
\newblock Hqe{KV}: Towards hybrid quantization and eviction for {KV} cache in
  long-context {LLM} inference.
\newblock In \emph{Submitted to ACL Rolling Review - January 2026}, 2026.
\newblock under review.

\bibitem[Hooper et~al.(2024)Hooper, Kim, Mohammadzadeh, Mahoney, Shao, Keutzer,
  and Gholami]{kvquant}
Coleman Hooper, Sehoon Kim, Hiva Mohammadzadeh, Michael~W Mahoney, Yakun~S
  Shao, Kurt Keutzer, and Amir Gholami.
\newblock {KVQuant}: Towards 10 million context length {LLM} inference with
  {KV} cache quantization.
\newblock \emph{Advances in Neural Information Processing Systems},
  37:\penalty0 1270--1303, 2024.

\bibitem[Li et~al.(2025)Li, Xing, Li, Qu, Zhen, Liu, Yao, Pan, and
  Yuan]{kvtuner}
Xing Li, Zeyu Xing, Yiming Li, Linping Qu, Hui-Ling Zhen, Wulong Liu, Yiwu Yao,
  Sinno~Jialin Pan, and Mingxuan Yuan.
\newblock {KVTuner}: Sensitivity-aware layer-wise mixed-precision {KV} cache
  quantization for efficient and nearly lossless {LLM} inference.
\newblock \emph{arXiv preprint arXiv:2502.04420}, 2025.

\bibitem[Dong et~al.(2024)Dong, Cheng, Qin, and Wang]{qaq}
Shichen Dong, Wen Cheng, Jiayu Qin, and Wei Wang.
\newblock {QAQ}: Quality adaptive quantization for {LLM} {KV} cache.
\newblock \emph{arXiv preprint arXiv:2403.04643}, 2024.

\bibitem[Yang et~al.(2024)Yang, Kim, Bae, Kwon, Park, Yang, Kwon, and
  Lee]{mikv}
June~Yong Yang, Byeongwook Kim, Jeongin Bae, Beomseok Kwon, Gunho Park, Eunho
  Yang, Se~Jung Kwon, and Dongsoo Lee.
\newblock No token left behind: Reliable kv cache compression via
  importance-aware mixed precision quantization.
\newblock \emph{arXiv preprint arXiv:2402.18096}, 2024.

\bibitem[Shutova et~al.(2025)Shutova, Malinovskii, Egiazarian, Kuznedelev,
  Mazur, Surkov, Ermakov, and Alistarh]{aquakv}
Alina Shutova, Vladimir Malinovskii, Vage Egiazarian, Denis Kuznedelev, Denis
  Mazur, Nikita Surkov, Ivan Ermakov, and Dan Alistarh.
\newblock Cache me if you must: Adaptive key-value quantization for large
  language models.
\newblock \emph{arXiv preprint arXiv:2501.19392}, 2025.

\bibitem[Xiao et~al.(2023)Xiao, Tian, Chen, Han, and Lewis]{streamingllm}
Guangxuan Xiao, Yuandong Tian, Beidi Chen, Song Han, and Mike Lewis.
\newblock Efficient streaming language models with attention sinks.
\newblock \emph{arXiv preprint arXiv:2309.17453}, 2023.

\bibitem[Liu et~al.(2023)Liu, Desai, Liao, Wang, Xie, Xu, Kyrillidis, and
  Shrivastava]{scissorhands}
Zichang Liu, Aditya Desai, Fangshuo Liao, Weitao Wang, Victor Xie, Zhaozhuo Xu,
  Anastasios Kyrillidis, and Anshumali Shrivastava.
\newblock Scissorhands: Exploiting the persistence of importance hypothesis for
  llm kv cache compression at test time.
\newblock \emph{Advances in Neural Information Processing Systems},
  36:\penalty0 52342--52364, 2023.

\bibitem[Chen et~al.(2024)Chen, Wang, Shang, Cui, Zhang, Liu, Wang, Sun, Yu,
  and Wu]{nacl}
Yilong Chen, Guoxia Wang, Junyuan Shang, Shiyao Cui, Zhenyu Zhang, Tingwen Liu,
  Shuohuan Wang, Yu~Sun, Dianhai Yu, and Hua Wu.
\newblock Nacl: A general and effective kv cache eviction framework for llm at
  inference time.
\newblock In \emph{Proceedings of the 62nd Annual Meeting of the Association
  for Computational Linguistics (Volume 1: Long Papers)}, pages 7913--7926,
  2024.

\bibitem[Feng et~al.(2024)Feng, Lv, Cao, Xie, and Zhou]{feng2024adakv}
Yuan Feng, Junlin Lv, Yukun Cao, Xike Xie, and S~Kevin Zhou.
\newblock {Ada-KV}: Optimizing {KV} cache eviction by adaptive budget
  allocation for efficient {LLM} inference.
\newblock \emph{arXiv preprint arXiv:2407.11550}, 2024.

\bibitem[Cai et~al.(2024)Cai, Zhang, Gao, Liu, Li, Liu, Lu, Xiong, Dong, Hu,
  et~al.]{cai2024pyramidkv}
Zefan Cai, Yichi Zhang, Bofei Gao, Yuliang Liu, Yucheng Li, Tianyu Liu, Keming
  Lu, Wayne Xiong, Yue Dong, Junjie Hu, et~al.
\newblock {PyramidKV}: Dynamic {KV} cache compression based on pyramidal
  information funneling.
\newblock \emph{arXiv preprint arXiv:2406.02069}, 2024.

\bibitem[Qin et~al.(2025)Qin, Cao, Lin, Hu, Fan, Cheng, Lin, and Li]{cake}
Ziran Qin, Yuchen Cao, Mingbao Lin, Wen Hu, Shixuan Fan, Ke~Cheng, Weiyao Lin,
  and Jianguo Li.
\newblock Cake: Cascading and adaptive kv cache eviction with layer
  preferences.
\newblock \emph{arXiv preprint arXiv:2503.12491}, 2025.

\bibitem[Devoto et~al.(2025)Devoto, Jeblick, and J{\'e}gou]{kvpress}
Alessio Devoto, Maximilian Jeblick, and Simon J{\'e}gou.
\newblock Expected attention: {KV} cache compression by estimating attention
  from future queries distribution.
\newblock \emph{arXiv preprint arXiv:2510.00636}, 2025.

\bibitem[Feng et~al.(2025)Feng, Lv, Cao, Xie, and Zhou]{criticalkv}
Yuan Feng, Junlin Lv, Yukun Cao, Xike Xie, and S~Kevin Zhou.
\newblock Identify critical {KV} cache in {LLM} inference from an output
  perturbation perspective.
\newblock \emph{arXiv preprint arXiv:2502.03805}, 2025.

\bibitem[Gu et~al.(2025)Gu, Liang, Zhao, and Diao]{obcache}
Yuzhe Gu, Xiyu Liang, Jiaojiao Zhao, and Enmao Diao.
\newblock {OBCache}: Optimal brain {KV} cache pruning for efficient
  long-context {LLM} inference.
\newblock \emph{arXiv preprint arXiv:2510.07651}, 2025.

\bibitem[LeCun et~al.(1989)LeCun, Denker, and Solla]{lecun1989optimal}
Yann LeCun, John Denker, and Sara Solla.
\newblock Optimal brain damage.
\newblock \emph{Advances in Neural Information Processing Systems}, 2, 1989.

\bibitem[Guo et~al.(2025)Guo, Li, and Benini]{guo2025optimal}
Hang Guo, Yawei Li, and Luca Benini.
\newblock Optimal brain restoration for joint quantization and sparsification
  of llms.
\newblock \emph{arXiv preprint arXiv:2509.11177}, 2025.

\bibitem[Goel et~al.(2025)Goel, Park, Gagrani, Jones, Morse, Langston, Lee, and
  Lott]{goel2025caote}
Raghavv Goel, Junyoung Park, Mukul Gagrani, Dalton Jones, Matthew Morse, Harper
  Langston, Mingu Lee, and Chris Lott.
\newblock Caote: Kv cache selection for {LLMs} via attention output error-based
  token eviction.
\newblock \emph{arXiv preprint arXiv:2504.14051}, 2025.

\bibitem[Geng et~al.(2025)Geng, Wang, Liu, Ju, Li, Li, Yuan, Hao, Lian, Chen,
  et~al.]{andpro}
Zijie Geng, Jie Wang, Ziqi Liu, Feng Ju, Yiming Li, Xing Li, Mingxuan Yuan,
  Jianye Hao, Defu Lian, Enhong Chen, et~al.
\newblock Accurate kv cache eviction via anchor direction projection for
  efficient llm inference.
\newblock In \emph{The Thirty-ninth Annual Conference on Neural Information
  Processing Systems}, 2025.

\bibitem[An et~al.(2026)An, Lu, Zhu, Yu, Zhao, Wu, Tang, and Wang]{restkv}
Yongqi An, Chang Lu, Kuan Zhu, Tao Yu, Chaoyang Zhao, Hong Wu, Ming Tang, and
  Jinqiao Wang.
\newblock {ReST-KV}: Robust {KV} cache eviction with layer-wise output
  reconstruction and spatial-temporal smoothing.
\newblock In \emph{The Fourteenth International Conference on Learning
  Representations}, 2026.

\bibitem[Vaswani et~al.(2017)Vaswani, Shazeer, Parmar, Uszkoreit, Jones, Gomez,
  Kaiser, and Polosukhin]{vaswani2017attention}
Ashish Vaswani, Noam Shazeer, Niki Parmar, Jakob Uszkoreit, Llion Jones,
  Aidan~N Gomez, {\L}ukasz Kaiser, and Illia Polosukhin.
\newblock Attention is all you need.
\newblock \emph{Advances in Neural Information Processing Systems}, 30, 2017.

\bibitem[Ainslie et~al.(2023)Ainslie, Lee-Thorp, De~Jong, Zemlyanskiy,
  Lebr{\'o}n, and Sanghai]{ainslie2023gqa}
Joshua Ainslie, James Lee-Thorp, Michiel De~Jong, Yury Zemlyanskiy, Federico
  Lebr{\'o}n, and Sumit Sanghai.
\newblock {GQA}: Training generalized multi-query transformer models from
  multi-head checkpoints.
\newblock In \emph{Proceedings of the 2023 Conference on Empirical Methods in
  Natural Language Processing}, pages 4895--4901, 2023.

\bibitem[Bennett(1948)]{bennett1948spectra}
William~Ralph Bennett.
\newblock Spectra of quantized signals.
\newblock \emph{The Bell System Technical Journal}, 27\penalty0 (3):\penalty0
  446--472, 1948.

\bibitem[Shannon et~al.(1959)]{shannon1959coding}
Claude~E Shannon et~al.
\newblock Coding theorems for a discrete source with a fidelity criterion.
\newblock \emph{IRE Nat. Conv. Rec}, 4\penalty0 (142-163):\penalty0 1, 1959.

\bibitem[Cover and Thomas(2006)]{cover2006elements}
Thomas~M. Cover and Joy~A. Thomas.
\newblock \emph{Elements of Information Theory 2nd Edition (Wiley Series in
  Telecommunications and Signal Processing)}.
\newblock Wiley-Interscience, July 2006.
\newblock ISBN 0471241954.

\bibitem[Grattafiori et~al.(2024)Grattafiori, Dubey, Jauhri, Pandey, Kadian,
  Al-Dahle, Letman, Mathur, Schelten, Vaughan, et~al.]{llama3}
Aaron Grattafiori, Abhimanyu Dubey, Abhinav Jauhri, Abhinav Pandey, Abhishek
  Kadian, Ahmad Al-Dahle, Aiesha Letman, Akhil Mathur, Alan Schelten, Alex
  Vaughan, et~al.
\newblock The {Llama} 3 herd of models.
\newblock \emph{arXiv preprint arXiv:2407.21783}, 2024.

\bibitem[Jiang et~al.(2023)Jiang, Sablayrolles, Mensch, Bamford, Chaplot,
  de~las Casas, Bressand, Lengyel, Lample, Saulnier, Lavaud, Lachaux, Stock,
  Scao, Lavril, Wang, Lacroix, and Sayed]{mistral}
Albert~Q. Jiang, Alexandre Sablayrolles, Arthur Mensch, Chris Bamford,
  Devendra~Singh Chaplot, Diego de~las Casas, Florian Bressand, Gianna Lengyel,
  Guillaume Lample, Lucile Saulnier, Lélio~Renard Lavaud, Marie-Anne Lachaux,
  Pierre Stock, Teven~Le Scao, Thibaut Lavril, Thomas Wang, Timothée Lacroix,
  and William~El Sayed.
\newblock Mistral 7{B}, 2023.

\bibitem[Yang et~al.(2025)Yang, Li, Yang, Zhang, Hui, Zheng, Yu, Gao, Huang,
  Lv, et~al.]{qwen3}
An~Yang, Anfeng Li, Baosong Yang, Beichen Zhang, Binyuan Hui, Bo~Zheng, Bowen
  Yu, Chang Gao, Chengen Huang, Chenxu Lv, et~al.
\newblock Qwen3 technical report.
\newblock \emph{arXiv preprint arXiv:2505.09388}, 2025.

\bibitem[Touvron et~al.(2023)Touvron, Martin, Stone, Albert, Almahairi, Babaei,
  Bashlykov, Batra, Bhargava, Bhosale, et~al.]{touvron2023llama}
Hugo Touvron, Louis Martin, Kevin Stone, Peter Albert, Amjad Almahairi, Yasmine
  Babaei, Nikolay Bashlykov, Soumya Batra, Prajjwal Bhargava, Shruti Bhosale,
  et~al.
\newblock {Llama} 2: Open foundation and fine-tuned chat models.
\newblock \emph{arXiv preprint arXiv:2307.09288}, 2023.

\bibitem[Qwen et~al.(2025)Qwen, :, Yang, Yang, Zhang, Hui, Zheng, Yu, Li, Liu,
  Huang, Wei, Lin, Yang, Tu, Zhang, Yang, Yang, Zhou, Lin, Dang, Lu, Bao, Yang,
  Yu, Li, Xue, Zhang, Zhu, Men, Lin, Li, Tang, Xia, Ren, Ren, Fan, Su, Zhang,
  Wan, Liu, Cui, Zhang, and Qiu]{qwen25}
Qwen, :, An~Yang, Baosong Yang, Beichen Zhang, Binyuan Hui, Bo~Zheng, Bowen Yu,
  Chengyuan Li, Dayiheng Liu, Fei Huang, Haoran Wei, Huan Lin, Jian Yang,
  Jianhong Tu, Jianwei Zhang, Jianxin Yang, Jiaxi Yang, Jingren Zhou, Junyang
  Lin, Kai Dang, Keming Lu, Keqin Bao, Kexin Yang, Le~Yu, Mei Li, Mingfeng Xue,
  Pei Zhang, Qin Zhu, Rui Men, Runji Lin, Tianhao Li, Tianyi Tang, Tingyu Xia,
  Xingzhang Ren, Xuancheng Ren, Yang Fan, Yang Su, Yichang Zhang, Yu~Wan,
  Yuqiong Liu, Zeyu Cui, Zhenru Zhang, and Zihan Qiu.
\newblock Qwen2.5 technical report, 2025.

\bibitem[Bai et~al.(2024)Bai, Lv, Zhang, Lyu, Tang, Huang, Du, Liu, Zeng, Hou,
  et~al.]{bai2023longbench}
Yushi Bai, Xin Lv, Jiajie Zhang, Hongchang Lyu, Jiankai Tang, Zhidian Huang,
  Zhengxiao Du, Xiao Liu, Aohan Zeng, Lei Hou, et~al.
\newblock {LongBench}: A bilingual, multitask benchmark for long context
  understanding.
\newblock In \emph{Proceedings of the 62nd Annual Meeting of the Association
  for Computational Linguistics (Volume 1: Long Papers)}, pages 3119--3137,
  2024.

\bibitem[Kamradt(2023)]{kamradt2023niah}
Gregory Kamradt.
\newblock Needle in a haystack - pressure testing {LLMs}, 2023.

\bibitem[Hsieh et~al.(2024)Hsieh, Sun, Kriman, Acharya, Rekesh, Jia, Zhang, and
  Ginsburg]{hsieh2024ruler}
Cheng-Ping Hsieh, Simeng Sun, Samuel Kriman, Shantanu Acharya, Dima Rekesh, Fei
  Jia, Yang Zhang, and Boris Ginsburg.
\newblock {RULER}: What's the real context size of your long-context language
  models?
\newblock \emph{arXiv preprint arXiv:2404.06654}, 2024.

\bibitem[Zhang et~al.(2024{\natexlab{b}})Zhang, Chen, Hu, Xu, Chen, Hao, Han,
  Thai, Wang, Liu, et~al.]{zhang2024infbench}
Xinrong Zhang, Yingfa Chen, Shengding Hu, Zihang Xu, Junhao Chen, Moo Hao,
  Xu~Han, Zhen Thai, Shuo Wang, Zhiyuan Liu, et~al.
\newblock {$\infty$} bench: Extending long context evaluation beyond {100K}
  tokens.
\newblock In \emph{Proceedings of the 62nd Annual Meeting of the Association
  for Computational Linguistics (Volume 1: Long Papers)}, pages 15262--15277,
  2024{\natexlab{b}}.

\bibitem[Dao(2023)]{dao2023flashattention2}
Tri Dao.
\newblock {FlashAttention}-2: Faster attention with better parallelism and work
  partitioning.
\newblock \emph{arXiv preprint arXiv:2307.08691}, 2023.

\end{thebibliography}

\newpage
\appendix
\section{Proofs}\label{app:proofs}

This appendix collects the proofs for the results in
\cref{sec:rd_formulation}.
All quantities below are local to a single layer and KV head
$(\ell, h)$, with indices suppressed.

\textbf{Token weight in V cache.}

\begin{proof}[Proof of \cref{thm:v-score}]
By definition,
$\|a_\tau - \hat{a}_\tau\|_{\mathrm{TV}}
  = \frac{1}{2}\sum_{t'} |a_{\tau,t'} - \hat{a}_{\tau,t'}|$.
The evicted entry contributes $|a_{\tau,t} - 0| = a_{\tau,t}$.
Each surviving entry $t' \neq t$ shifts by
$|\hat{a}_{\tau,t'} - a_{\tau,t'}|
  = a_{\tau,t'} \cdot a_{\tau,t}/(1-a_{\tau,t})$.
Summing the surviving terms,
\[
\sum_{t' \neq t}
  a_{\tau,t'} \cdot \frac{a_{\tau,t}}{1-a_{\tau,t}}
  = (1-a_{\tau,t}) \cdot \frac{a_{\tau,t}}{1-a_{\tau,t}}
  = a_{\tau,t}.
\]
Therefore
$\|a_\tau - \hat{a}_\tau\|_{\mathrm{TV}}
  = \frac{1}{2}(a_{\tau,t} + a_{\tau,t}) = a_{\tau,t}$.
Summing over queries,
$w_t = \sum_\tau a_{\tau,t}$.
\end{proof}

\textbf{Channel weight in K cache.}

\begin{proof}[Proof of \cref{thm:k-score}]
Removing channel~$c$ sets $K[:,c] = 0$.
Since each logit entry is
$Z_{\tau,t} = q_\tau^\top k_t / \sqrt{d}
  = \sum_{c'} q_{\tau,c'}\,k_{t,c'}/\sqrt{d}$,
the logit deviation is
$\delta Z = -(1/\sqrt{d})\,Q[:,c]\,K[:,c]^\top$.
This is a rank-one matrix $uv^\top$ with
$u = Q[:,c]/\sqrt{d}$ and $v = K[:,c]$.
A rank-one matrix has a single nonzero singular value
$\|u\|_2\|v\|_2$, so
$\|\delta Z\|_2 = \|Q[:,c]\|_2\,\|K[:,c]\|_2 / \sqrt{d} = w_c$.
\end{proof}

\begin{remark}
The same expression for $w_c$ obtains under any unitarily invariant
norm of $\delta Z$ (spectral, Frobenius, nuclear, Schatten-$p$):
a rank-one matrix has a single nonzero singular value, so all such
norms reduce to $\|u\|_2\|v\|_2$.
In particular, ThinK's Frobenius-based derivation and our
spectral-norm derivation yield the same weight.
\end{remark}

\textbf{Bennett high-rate distortion.}

\begin{lemma}[Bennett high-rate distortion]\label[lemma]{lem:bennett}
For a uniform scalar quantizer with per-unit dynamic range $R_u$ at
bit-width $b$ (high-rate regime), the per-coordinate root-mean-square
error is $\sigma_u\,2^{-b} + o(2^{-b})$ with
$\sigma_u := R_u/(2\sqrt{3})$.
\end{lemma}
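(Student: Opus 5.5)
We model the uniform scalar quantizer at bit-width $b$ on a unit $u$ whose coordinates lie in an interval of length $R_u$. It has $2^b$ reconstruction cells of equal width $\Delta_u = R_u/2^b$, with the reconstruction point at each cell midpoint. The plan is the classical high-resolution argument: compute the conditional mean-squared error within one cell, then average over cells. Taking a square root then gives the root-mean-square error.

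\textbf{Step 1: one cell.} Fix a cell $I_j$ of width $\Delta_u$ and midpoint $m_j$. Assume the coordinate $x$ has a density $p$ on the range that is continuous, hence approximately constant on each cell once $\Delta_u$ is small (the high-rate regime). Then $x-m_j$ is approximately uniform on $[-\Delta_u/2,\Delta_u/2]$. This gives
\[
\mathbb{E}\bigl[(x-m_j)^2 \mid x\in I_j\bigr] = \frac{\Delta_u^2}{12}+o(\Delta_u^2).
\]
To make the $o(\Delta_u^2)$ rigorous, we write $p(x)=p(m_j)+\eta_j(x)$ on $I_j$. Uniform continuity of $p$ on the compact range gives $\sup_j\sup_{I_j}|\eta_j|\to0$ as $b\to\infty$. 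The contribution of $\eta_j$ to $\int_{I_j}(x-m_j)^2p(x)\,dx$ is therefore at most $o(1)\cdot\Delta_u^3$ uniformly in $j$.

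\textbf{Step 2: sum over cells.} We sum over all $2^b$ cells, weighting each cell by its probability. The main terms add up to $\frac{\Delta_u^2}{12}\sum_j p(m_j)\Delta_u$, and this Riemann sum tends to $\int p = 1$. The error terms add up to $2^b\cdot o(\Delta_u^3)=o(\Delta_u^2)$, because $2^b\Delta_u=R_u$ is fixed. Hence the mean-squared error is $\Delta_u^2/12 + o(\Delta_u^2) = \frac{R_u^2}{12}4^{-b}+o(4^{-b})$. Since no overload region exists (the range is the full dynamic range), this is the total distortion. The same bound also holds deterministically if one instead averages the error empirically over the $n_u$ coordinates of $u$ and treats their positions within cells as equidistributed.

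\textbf{Step 3: square root.} We use $\sqrt{A^2+o(A^2)} = A\sqrt{1+o(1)} = A+o(A)$ with $A=\frac{R_u}{2\sqrt3}2^{-b}=\sigma_u 2^{-b}$. This yields $\sigma_u2^{-b}+o(2^{-b})$, as claimed.

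The main obstacle is Step 1's uniformity assumption. It is a regularity condition: a continuous density, or equidistribution of the coordinates within cells. It does not follow from anything stated earlier, so I would state it explicitly as the meaning of ``high-rate regime.'' I would first try to justify it by uniform continuity on the compact interval $[\min,\max]$ defining $R_u$. If only finitely many empirical values are available, I would instead cite Bennett's result directly for the $o(\cdot)$ term.
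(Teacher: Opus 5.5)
Your proposal is correct and follows the same route as the paper: cell width $\Delta = R_u/2^b$, per-coordinate mean-squared error $\Delta^2/12 + o(\Delta^2) = \sigma_u^2 2^{-2b} + o(2^{-2b})$, then a square root. The only difference is that the paper cites Bennett's approximation for the middle step, whereas you derive it (uniform continuity on each cell plus a midpoint Riemann sum) from an explicitly stated continuous-density hypothesis, which makes precise what ``high-rate regime'' has to mean.
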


\begin{proof}
With per-unit scale, all $n_u$ coordinates within a unit share a
single uniform quantizer of cell width $\Delta = R_u/2^b$.
Bennett's high-rate uniform-cell
approximation~\citep{bennett1948spectra} gives the per-coordinate
mean-squared error as
$\Delta^2/12 + o(\Delta^2) = \sigma_u^2\,2^{-2b} + o(2^{-2b})$.
Taking the square root yields the per-coordinate RMSE
$\sigma_u\,2^{-b} + o(2^{-b})$.
\end{proof}

\textbf{Optimal allocation.}

\begin{proof}[Proof of \cref{thm:water-filling}]
The objective $\sum_u w_u\sigma_u\,2^{-b_u}$ is strictly decreasing
in each $b_u$, so the budget is active at any optimum:
$\sum_u b_u^\star = B$.
Each summand $w_u\sigma_u\,2^{-b_u}$ is convex in $b_u$ (second
derivative $(\ln 2)^2 w_u\sigma_u\,2^{-b_u} > 0$), so the
objective is convex and KKT is sufficient for global optimality.
The Lagrangian, with $\lambda \geq 0$ for the budget and
$\mu_u \geq 0$ for nonnegativity, is
\[
\mathcal{L}(\{b_u\}, \lambda, \{\mu_u\})
  = \sum_u w_u\sigma_u\,2^{-b_u}
  + \lambda\!\Big(\sum_u b_u - B\Big)
  - \sum_u \mu_u\,b_u.
\]
Stationarity in $b_u$ gives
$-(\ln 2)\,w_u\sigma_u\,2^{-b_u} + \lambda - \mu_u = 0$.
For an interior solution ($\mu_u = 0$ by complementary slackness),
\[
2^{-b_u}
  = \frac{\lambda}{\ln 2 \cdot w_u\sigma_u},
  \qquad \text{i.e.,} \qquad
  b_u = \log_2\!\left(
    \frac{\ln 2 \cdot w_u\sigma_u}{\lambda}\right).
\]
When this interior value is negative, the boundary
$b_u^\star = 0$ activates instead, which occurs iff
$w_u\sigma_u < \lambda/\ln 2$.
The two cases combine into the $[\cdot]_+$ projection.
Since $\sum_u b_u^\star(\lambda)$ is monotone non-increasing in
$\lambda$, the budget equation
$\sum_u b_u^\star(\lambda) = B$ is solvable by 1D search.
\end{proof}

\textbf{Weak duality for discrete allocation.}

\begin{proposition}[Weak duality]\label[proposition]{prop:weak-duality}
For
$\mathrm{OPT} := \min_{b_u \in \mathcal{B}}
  \sum_u w_u\,\varepsilon_u(b_u)$
s.t.\ $\sum_u b_u \leq B$
with
$\mathcal{B} = \{0, 2, 4, 8, 16\}$
and
$\varepsilon_u(b)$
the calibration-measured per-coordinate distortion at bit-width~$b$,
let
$g(\lambda) := \sum_u \min_{b \in \mathcal{B}}
  [w_u\,\varepsilon_u(b) + \lambda b] - \lambda B$.
Then $g(\lambda) \leq \mathrm{OPT}$ for every $\lambda \geq 0$;
when $\{b_u^\star(\lambda)\}$ is primal feasible,
$\sum_u w_u\varepsilon_u(b_u^\star(\lambda))$ is an upper bound.
\end{proposition}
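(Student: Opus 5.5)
The plan is the standard Lagrangian weak-duality argument, specialized to the separable structure of the multiple-choice knapsack in \cref{eq:mckp}. First, the lower bound. Fix $\lambda \ge 0$ and let $\{b_u\}$ be any feasible allocation, so $b_u \in \mathcal{B}$ and $\sum_u b_u \le B$. Since $\lambda\bigl(\sum_u b_u - B\bigr) \le 0$, we have
\[
\sum_u w_u\,\varepsilon_u(b_u)
  \;\ge\; \sum_u w_u\,\varepsilon_u(b_u) + \lambda\Bigl(\sum_u b_u - B\Bigr)
  \;=\; \sum_u \bigl[w_u\,\varepsilon_u(b_u) + \lambda b_u\bigr] - \lambda B .
\]
Next, bound each summand below by its minimum over $\mathcal{B}$. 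This is legitimate because each $b_u$ ranges independently over $\mathcal{B}$ and the relaxed objective is separable across units. The result is $\sum_u w_u\varepsilon_u(b_u) \ge g(\lambda)$. Taking the minimum over feasible allocations gives $g(\lambda) \le \mathrm{OPT}$. If $\mathrm{OPT}$ is the minimum over an empty feasible set, the bound holds vacuously; in practice $b_u \equiv 0$ is always feasible for $B \ge 0$, so this case does not arise.

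For the upper bound, let $b_u^\star(\lambda) \in \argmin_{b\in\mathcal{B}} [w_u\varepsilon_u(b) + \lambda b]$ be the per-unit table-lookup minimizer. Suppose this allocation satisfies $\sum_u b_u^\star(\lambda) \le B$. It is then a feasible point of the primal problem, so by the definition of a minimum, $\mathrm{OPT} \le \sum_u w_u\varepsilon_u(b_u^\star(\lambda))$. Combining the two bounds gives the sandwich
\[
g(\lambda) \;\le\; \mathrm{OPT} \;\le\; \sum_u w_u\,\varepsilon_u\bigl(b_u^\star(\lambda)\bigr).
\]
We will also note that the gap equals $\lambda\bigl(B - \sum_u b_u^\star(\lambda)\bigr)$. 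This follows by substituting the minimizers into $g$: the terms $w_u\varepsilon_u(b_u^\star(\lambda))$ cancel, leaving exactly $\lambda B - \lambda \sum_u b_u^\star(\lambda)$. This identity is what certifies near-optimality of the bisection output described after \cref{eq:mckp}.

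No step is a real obstacle; the argument needs neither convexity of $\varepsilon_u$ nor any property of the calibrated tables beyond their being finite real numbers. The only points needing care are bookkeeping. The sign convention requires $\lambda \ge 0$ so that the penalty is nonpositive on feasible points. The interchange of $\min$ and $\sum$ is valid because the relaxed problem decouples across units.
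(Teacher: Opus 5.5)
Your proposal is correct and follows the paper's proof step for step: you drop the nonpositive penalty $\lambda\bigl(\sum_u b_u - B\bigr)$, minimize each summand over $\mathcal{B}$ separately, and minimize over feasible allocations, and the upper bound follows immediately from primal feasibility. Your additional identity, that the gap between the two bounds is exactly $\lambda\bigl(B - \sum_u b_u^\star(\lambda)\bigr)$, is also correct; the paper does not state it, and it makes the near-optimality certificate concrete.
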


\begin{proof}
For any feasible $\{b_u\}$ and $\lambda \geq 0$,
\[
\sum_u w_u\varepsilon_u(b_u)
  \;\geq\;
  \sum_u w_u\varepsilon_u(b_u)
    + \lambda\!\Big(\sum_u b_u - B\Big)
  \;\geq\;
  \sum_u \min_{b \in \mathcal{B}}\!
    [w_u\varepsilon_u(b) + \lambda b] - \lambda B
  = g(\lambda),
\]
using $\sum_u b_u - B \leq 0$ in the first inequality and per-unit
minimization in the second.
Taking minimum over feasible primals yields
$\mathrm{OPT} \geq g(\lambda)$.
Conversely, any feasible $\{b_u^\star(\lambda)\}$ satisfies
$\sum_u w_u\varepsilon_u(b_u^\star(\lambda)) \geq \mathrm{OPT}$
by definition.
\end{proof}


\section{Detailed Implementation}
\label{app:impl-details}

This appendix expands on the implementation details abbreviated in \cref{sec:experiments}.

\textbf{Prefill stage.}
Probe-based score computation, rate-distortion bit allocation, and TriZone packing are
performed once at the end of prefill, on the same hidden states already produced by the
forward pass. The probe shares the SnapKV observation window of size $S_w = 32$ and
pooling kernel $w = 5$ across all eviction-style baselines (SnapKV, AdaKV, ThinK,
SnapKV+ZipCache) to ensure that score-based comparisons isolate the indicator and the
allocation, not the probe geometry. The bit-width set is fixed to
$\mathcal{B} = \{0, 2, 4, 8, 16\}$. The K/V budget is split equally,
$B^V = B^K = \tfrac{1}{2}B_\text{head}$, as a symmetric default validated by the
ablation in \cref{app:ablation-kv-split}, which shows that $r_K = 0.5$
consistently achieves the highest average score across budget levels.

\textbf{Calibration.}
The empirical per-coordinate distortion table $\varepsilon_u(b)$ is estimated on
calibration sequences drawn from the prefill prefixes of the LongBench tasks. We use
$32$ sequences (truncated to $4$k tokens each) to estimate $\varepsilon_u(b)$ for each
$b \in \{2, 4, 8\}$ at the per-token (V) and per-channel (K) granularities. The estimate
at $b = 0$ is computed from the same sample but refers to outright removal rather than
quantization, as discussed in \cref{sec:methodology}. Calibration is performed once per
$(\ell, h)$ slice and cached to disk.

\textbf{Lagrangian search.}
The MCKP (Prop. A.3) is solved by one-dimensional bisection on $\lambda$. At each step we evaluate the per-element minimizer $b_u^\star(\lambda) = \arg\min_{b\in\mathcal B} w_u\varepsilon_u(b) + \lambda b$ via table lookup, and bisect on $\lambda$ until the average bit budget $\tfrac{1}{|U|}\sum_u b_u^\star(\lambda)$ matches the target $B$ within a relative tolerance of $10^{-2}$, or after at most $64$ bisection steps. The resulting allocation is treated as the final ${b_u^\star}$.

\textbf{Hardware.}
All models up to 8B parameters (LLaMA-3.1-8B-Instruct, Mistral-7B-Instruct-v0.3,
Qwen3-4B) run on a single NVIDIA A100 64\,GB GPU.
LLaMA-2-13B-Chat and Qwen2.5-72B-Instruct run on NVIDIA GH200 96\,GB GPUs; Qwen2.5-72B uses 2-GPU tensor parallelism.
Decoding latency and peak memory measurements (\cref{sec:experiments},
\cref{fig:efficiency}) are collected on the A100 64\,GB.

\textbf{Baseline implementations.}
SnapKV and AdaKV use the official \texttt{kvpress} implementations.
ThinK~\citep{xu2024think} and ZipCache~\citep{he2024zipcache} use their respective official open-source code.
SnapKV+ZipCache combines the \texttt{kvpress} SnapKV with the official ZipCache mixed-precision quantizer, with no code modifications.
All baselines use the default hyperparameters recommended by their respective authors. 
All methods receive the same prefill probe ($S_w = 32$, $w = 5$) so that any difference in the resulting weights is attributable to the indicator, not the calibration window.

\textbf{Baseline selection rationale.}
For the eviction+quantization slot we compose SnapKV~\citep{li2024snapkv} with ZipCache~\citep{he2024zipcache} rather than adopting QPruningKV~\citep{qpruningkv} directly.
QPruningKV focuses on discussing the combination of existing eviction and quantization methods rather than proposing a new joint algorithm.
In addition, the quantization methods it adopts assign precision only at layer level~\citep{kivi}, a coarse granularity consistently outperformed by token-level alternatives such as ZipCache~\citep{he2024zipcache}.

\textbf{TriZone packed layout.}
Within Zone~A (\cref{sec:efficient_decode}), V rows are sorted by~$b_t^V$
into three contiguous sub-segments, each with a distinct byte-packing
convention over the head dimension~$d$: 2-bit uses \emph{quarter-split}
packing (four channels per byte at bit offsets $0,2,4,6$; $d/4$ packed
bytes per row), 4-bit uses \emph{half-split} packing (two channels per
byte at offsets $0,4$; $d/2$ bytes per row), and 8-bit stores one channel
per byte directly. Each sub-segment admits a single shift-and-mask
dequantization path with no per-element branching.
K channels within each retained row follow the per-channel
allocation~$b_c^K$: channels are sorted by bit-width into three analogous
2/4/8-bit segments along the channel axis, and the query~$q_\tau$ is
permuted to the same sorted order once at the end of prefill.

\textbf{K-cache dequantization fusion.}
Per-channel quantization encodes channel~$c$ of the retained K cache as
$\hat{k}_{t,c} = s_c\,(\tilde{k}_{t,c} - z_c)$, where $s_c, z_c$ are
the per-channel scale and zero-point and $\tilde{k}_{t,c} \in
\{0,\ldots,2^{b_c^K}\!-\!1\}$ is the raw integer code. The kernel avoids
materializing a dequantized FP16 tile by rewriting the dot product as
\[
  q_\tau^\top \hat{k}_t
  \;=\;
  \textstyle\sum_c (s_c\,q_{\tau,c})\;\tilde{k}_{t,c}
  \;-\;\sum_c s_c\,z_c\,q_{\tau,c}\,.
\]
The scaled query $\tilde{q}_c = s_c\,q_{\tau,c}$ is computed once per
decode step and cast to FP16 for the tensor-core matrix multiply.
Raw codes~$\tilde{k}_{t,c}$ are cast exactly to FP16 (all values
${\leq}\,255$) and serve as the right operand. The second sum is a
per-query-head bias independent of~$t$, subtracted once after the dot
product accumulates over all retained tokens. This avoids one write/read
pass of $|\mathcal{T}_{\text{kept}}| \times d$ FP16 values per decode
step.

\textbf{Padding and alignment.}
Bit-packing imposes alignment constraints on K-channel segment sizes: the
2-bit segment is zero-padded so that the channel count is a multiple
of~$4$ (four channels per packed byte), and the 4-bit segment is padded
to a multiple of~$2$. Padded channels receive $s_c = z_c = 0$ and thus
contribute zero to both the dot product and the bias term. The 8-bit
segment has no alignment constraint. V sub-segments are padded
analogously along~$d$.

\textbf{Masking.}
Two boolean masks guard out-of-range accesses in the Triton kernels: a
token-axis mask ($i < T_{\text{eff}}$) zeroes loads beyond the effective
token count, and a channel-axis mask ($j < N_{\text{ch}}$, K-side only)
zeroes loads beyond the true segment width. Since padded channels have
$s_c = z_c = 0$, masked positions dequantize to zero and contribute
nothing to the final result.

\section{Algorithm}
\label{app:algorithm}

\Cref{alg:rdkv} summarizes the full RDKV pipeline executed once at the
end of prefill. The algorithm takes the prefill KV cache and a per-head
bit budget as input, and produces the TriZone packed cache used during
decoding.

\begin{algorithm}[!tb]
\caption{RDKV: Rate-Distortion KV Cache Compression}
\label{alg:rdkv}
\begin{algorithmic}[1]
\Require Prefill KV cache $K^{(\ell)}, V^{(\ell)}$ for each layer $\ell$;
         per-head budget $B_\text{head}$;
         observation window size $S_w$; pooling kernel $w$;
         bit-width set $\mathcal{B} = \{0,2,4,8,16\}$;
         empirical distortion tables $\varepsilon^K(b), \varepsilon^V(b)$
\Ensure  TriZone packed cache for each $(\ell, h)$
\For{each layer $\ell = 1, \ldots, L$}
    \State \textbf{// Stage 1: Weight computation}
    \State $A \gets \operatorname{Softmax}\!\bigl(Q_{[\tau-S_w:\tau]}^{(\ell)}\, K^{(\ell)\top}\!/\sqrt{d}\bigr)$
    \For{each KV head $h = 1, \ldots, H_\text{kv}$}
        \State $w_t^{(h)} \gets \sum_{\tau,g} a_{\tau,g,t}^{(h)}$ for all $t$
        \Comment{token weight in V cache}
        \State $w_t^{(h)} \gets \operatorname{AvgPool1d}(w_t^{(h)},\; w)$
        \State $w_c^{(h)} \gets \tfrac{1}{\sqrt{d}}\,\|Q_{:,c}^{(h)}\|_2 \cdot \|K_{:,c}^{(h)}\|_2$ for all $c$
        \Comment{channel weight in K cache}
    \EndFor
    \Statex
    \State \textbf{// Stage 2: V-side token allocation (per head)}
    \State $B^V \gets \tfrac{1}{2} B_\text{head}$;\quad
           $\bar{B}^V \gets B^V / d$
    \For{each KV head $h$}
        \State $\{b_t^V\}_h \gets \textsc{MCKP}(w_t^{(h)},\; \varepsilon^V,\; \bar{B}^V / T)$
        \State $\mathcal{T}_\text{kept}^{(h)} \gets \{t : b_t^V > 0\}$
    \EndFor
    \Statex
    \State \textbf{// Stage 3: K-side channel allocation (per head)}
    \State $B^K \gets \tfrac{1}{2} B_\text{head}$
    \For{each KV head $h$}
        \State $k_\text{avg}^{(h)} \gets B^K \,/\, \bigl(|\mathcal{T}_\text{kept}^{(h)}| \cdot d\bigr)$
        \State $\{b_c^K\}_h \gets \textsc{MCKP}(w_c^{(h)},\; \varepsilon^K,\; k_\text{avg}^{(h)})$
    \EndFor
    \Statex
    \State \textbf{// Stage 4: TriZone packing}
    \For{each KV head $h$}
        \State Sort $\mathcal{T}_\text{kept}^{(h)}$ by $b_t^V$ into sub-segments $\mathcal{S}_2, \mathcal{S}_4, \mathcal{S}_8$
        \State Sort channels by $b_c^K$ into segments; permute $q$ to match
        \State Quantize and byte-pack V sub-segments $\to$ Zone~A (V)
        \State Quantize and byte-pack K rows of $\mathcal{T}_\text{kept}^{(h)}$ $\to$ Zone~A (K)
        \State Store $\{t : b_t^V = 16\}$ V rows in FP16 $\to$ Zone~B
    \EndFor
\EndFor
\end{algorithmic}
\end{algorithm}

\textsc{MCKP} (\cref{alg:mckp}) solves the multiple-choice knapsack via
Lagrangian bisection on~$\lambda$. For each candidate~$\lambda$, every
unit independently picks the bit-width minimizing its
cost~$w_u \varepsilon_u(b) + \lambda b$. The bisection terminates when
the average bit-width is within tolerance of the budget.

\begin{algorithm}[!tb]
\caption{MCKP: Lagrangian Bisection Knapsack Solver}
\label{alg:mckp}
\begin{algorithmic}[1]
\Require Weights $\{w_u\}$; distortion table $\varepsilon(b)$;
         target average bits $\bar{b}$;
         bit-width set $\mathcal{B}$; tolerance $\delta$; max iterations $I$
\Ensure  Bit-width assignment $\{b_u^\star\}$
\State $\lambda_\text{lo} \gets 0$;\quad $\lambda_\text{hi} \gets \max_u w_u$
\For{$i = 1, \ldots, I$}
    \State $\lambda \gets (\lambda_\text{lo} + \lambda_\text{hi}) / 2$
    \For{each unit $u$}
        \State $b_u^\star \gets \arg\min_{b \in \mathcal{B}}\; w_u \,\varepsilon(b) + \lambda\, b$
    \EndFor
    \State $\bar{b}_\text{cur} \gets \operatorname{mean}(\{b_u^\star\})$
    \If{$|\bar{b}_\text{cur} - \bar{b}| / \bar{b} < \delta$}
        \State \Return $\{b_u^\star\}$
    \ElsIf{$\bar{b}_\text{cur} > \bar{b}$}
        \State $\lambda_\text{lo} \gets \lambda$
    \Else
        \State $\lambda_\text{hi} \gets \lambda$
    \EndIf
\EndFor
\State \Return $\{b_u^\star\}$
\end{algorithmic}
\end{algorithm}


\section{Additional Experiments on LongBench}
\label{app:longbench-detailed}
\label{app:more-models}

In this section, we provide comprehensive experimental results on
LongBench~\citep{bai2023longbench}, a benchmark focused on long-context
understanding with 16 tasks spanning single-document QA, multi-document
QA, summarization, few-shot learning, synthetic retrieval, and code
completion. We perform detailed evaluations with cache budgets ranging
from $64L$ to $1024L$ on four additional models beyond the primary
LLaMA-3.1-8B-Instruct reported in \cref{sec:exp-longbench}:
Mistral-7B-Instruct-v0.3~\citep{mistral} and
Qwen3-4B~\citep{qwen3} to test cross-architecture
generality, and LLaMA-2-13B-Chat~\citep{touvron2023llama} and
Qwen2.5-72B-Instruct~\citep{qwen25} (2-GPU tensor parallelism) to test
cross-scale generality. All methods share the same probe configuration
($S_w = 32$, $w = 5$) so that any performance difference is
attributable to the indicator and allocation, not the probe geometry.

\Cref{tab:longbench-mistral,tab:longbench-qwen3,tab:longbench-13b-72b}
present the detailed per-task scores. Overall, RDKV achieves the highest
average at every (model, budget) combination---without
architecture-specific tuning.
The advantage is largest under aggressive compression: at
$B_\text{total} = 64L$, RDKV leads the strongest baseline by
$1.5$/$5.5$/$2.7$ points on Mistral-7B/Qwen3-4B/LLaMA-3.1-8B,
because water-filling retains moderate-importance tokens at low bit-width
rather than evicting them entirely. At $B_\text{total} = 1024L$ it
recovers $98.6$--$99.4\%$ of FullKV across models; on the two larger
models (LLaMA-2-13B, Qwen2.5-72B) the recovery reaches
$99.5$--$99.6\%$ already at $B_\text{total} = 512L$. The consistency
across architectures and scales suggests that the distortion weights
(\cref{sec:methodology}) capture a model-agnostic signal, enabling the
allocation to transfer without modification.

\begin{table}[!tb]
\centering
\caption{Performance on 16 LongBench datasets for Mistral-7B-Instruct-v0.3 across cache
budgets $B_\text{total} \in \{64L, 128L, 256L, 512L, 1024L\}$.
Snap+Zip denotes SnapKV+ZipCache.
The best result in each row is in \textbf{bold}; the second-best is \underline{underlined}.}
\label{tab:longbench-mistral}
\setlength{\tabcolsep}{0.6pt}
\fontsize{6.4}{7.6}\selectfont
\begin{tabular}{lccccccccccccccccc}
\toprule
\multirow{2}{*}{Method}
 & \multicolumn{3}{c}{Single-Doc QA}
 & \multicolumn{3}{c}{Multi-Doc QA}
 & \multicolumn{3}{c}{Summarization}
 & \multicolumn{3}{c}{Few-shot Learning}
 & \multicolumn{2}{c}{Synthetic}
 & \multicolumn{2}{c}{Code}
 & \multirow{2}{*}{Avg.} \\
\cmidrule(lr){2-4}\cmidrule(lr){5-7}\cmidrule(lr){8-10}\cmidrule(lr){11-13}\cmidrule(lr){14-15}\cmidrule(lr){16-17}
 & \rotatebox{35}{NrtvQA} & \rotatebox{35}{Qasper} & \rotatebox{35}{MF-en}
 & \rotatebox{35}{HotpotQA} & \rotatebox{35}{2WikiMQA} & \rotatebox{35}{Musique}
 & \rotatebox{35}{GovRep} & \rotatebox{35}{QMSum} & \rotatebox{35}{MultiNews}
 & \rotatebox{35}{TREC} & \rotatebox{35}{TriviaQA} & \rotatebox{35}{SAMSum}
 & \rotatebox{35}{PCount} & \rotatebox{35}{PRe}
 & \rotatebox{35}{Lcc} & \rotatebox{35}{RB-P}
 & \\
\midrule
\multicolumn{18}{c}{\textit{Mistral-7B-Instruct-v0.3,} $B_\text{total} = \text{Full}$} \\
\midrule
FullKV & 25.91 & 38.13 & 49.62 & 52.07 & 39.03 & 28.10 & 34.11 & 25.73 & 26.47 & 76.00 & 88.59 & 47.46 & 7.01 & 98.00 & 61.58 & 62.34 & 47.51 \\
\midrule
\multicolumn{18}{c}{\textit{Mistral-7B-Instruct-v0.3,} $B_\text{total} = 64L$} \\
\midrule
SnapKV          & 17.53 & 17.78 & 32.36 & 40.94 & 30.84 & 19.34 & 16.27 & 20.33 & 13.77 & 38.00 & 87.64 & 38.80 & \textbf{5.00} & 76.50 & 48.46 & 47.72 & 34.45 \\
AdaKV           & 17.33 & 18.43 & 34.33 & 42.13 & 31.16 & 21.08 & 16.79 & 20.64 & 14.23 & 38.00 & 88.18 & 39.61 & 3.50 & 83.00 & 51.28 & 48.93 & 35.54 \\
ThinK    & 20.27 & 24.90 & 40.42 & 45.00 & 33.54 & 21.44 & 19.83 & 21.26 & 18.45 & 39.50 & 87.94 & 40.66 & 3.00 & 83.00 & 53.19 & 51.51 & 37.74 \\
Snap+Zip & \underline{22.80} & \underline{25.83} & \textbf{47.55} & \textbf{47.38} & \textbf{35.23} & \textbf{24.03} & \underline{21.54} & \underline{21.59} & \underline{21.61} & \underline{55.00} & \underline{89.05} & \underline{44.84} & 4.00 & \underline{88.00} & \underline{56.59} & \underline{56.61} & \underline{41.35} \\
\textbf{RDKV}   & \textbf{24.70} & \textbf{29.46} & \underline{47.51} & \underline{46.48} & \underline{34.37} & \underline{22.73} & \textbf{23.40} & \textbf{23.23} & \textbf{22.73} & \textbf{60.00} & \textbf{89.07} & \textbf{44.99} & \underline{4.50} & \textbf{96.00} & \textbf{57.78} & \textbf{58.19} & \textbf{42.82} \\
\midrule
\multicolumn{18}{c}{\textit{Mistral-7B-Instruct-v0.3,} $B_\text{total} = 128L$} \\
\midrule
SnapKV          & 21.25 & 22.92 & 42.30 & 44.75 & 33.34 & 21.24 & 21.00 & 21.37 & 19.30 & 44.50 & \textbf{89.35} & 42.47 & 4.50 & 91.00 & 55.98 & 54.12 & 39.34 \\
AdaKV           & 21.97 & 23.31 & 46.28 & 46.67 & 34.52 & 21.47 & 20.92 & 21.85 & 19.71 & 49.00 & \underline{89.19} & 42.63 & \textbf{6.50} & 91.50 & 56.41 & 55.60 & 40.47 \\
ThinK    & 21.90 & 29.83 & 48.32 & 47.25 & 34.76 & 23.27 & 21.73 & \underline{22.37} & 21.18 & 47.50 & 88.36 & 43.10 & 5.00 & \underline{93.00} & 57.62 & 56.39 & 41.35 \\
Snap+Zip & \underline{23.62} & \underline{32.30} & \underline{48.55} & \textbf{50.23} & \underline{35.41} & \textbf{26.96} & \underline{24.22} & 22.23 & \underline{23.58} & \underline{67.50} & 88.63 & \underline{45.19} & \underline{5.50} & 92.00 & \underline{59.01} & \underline{58.30} & \underline{43.95} \\
\textbf{RDKV}   & \textbf{25.61} & \textbf{33.76} & \textbf{48.58} & \underline{48.65} & \textbf{36.19} & \underline{25.37} & \textbf{25.39} & \textbf{24.14} & \textbf{24.14} & \textbf{70.00} & 89.18 & \textbf{45.52} & 5.00 & \textbf{97.50} & \textbf{59.30} & \textbf{59.69} & \textbf{44.88} \\
\midrule
\multicolumn{18}{c}{\textit{Mistral-7B-Instruct-v0.3,} $B_\text{total} = 256L$} \\
\midrule
SnapKV          & 24.37 & 29.08 & 47.68 & 48.14 & 34.66 & 24.45 & 23.17 & 23.24 & 22.37 & 57.00 & 88.85 & 44.36 & \textbf{6.00} & \underline{96.50} & 59.32 & 57.98 & 42.95 \\
AdaKV           & 24.71 & 30.19 & 48.31 & 49.08 & 34.69 & \textbf{26.03} & 23.13 & 23.46 & 22.19 & 66.00 & \underline{88.93} & 44.57 & 4.50 & \underline{96.50} & 59.38 & 59.67 & 43.83 \\
ThinK    & 24.82 & 31.21 & \textbf{50.22} & \textbf{50.79} & 35.00 & 23.70 & 23.75 & 23.55 & 23.17 & 66.00 & \textbf{89.61} & 43.90 & \underline{5.50} & 94.50 & 59.37 & 59.04 & 44.01 \\
Snap+Zip & \underline{25.85} & \textbf{36.05} & 48.22 & \underline{49.31} & \textbf{36.59} & 24.75 & \underline{27.34} & \underline{23.59} & \textbf{25.26} & \underline{71.00} & 88.75 & \underline{45.37} & \underline{5.50} & 94.50 & \textbf{59.76} & \underline{59.88} & \underline{45.11} \\
\textbf{RDKV}   & \textbf{26.00} & \underline{35.68} & \underline{49.89} & 49.11 & \underline{35.50} & \underline{25.99} & \textbf{27.78} & \textbf{24.48} & \underline{25.22} & \textbf{73.50} & 88.58 & \textbf{46.40} & \textbf{6.00} & \textbf{97.00} & \underline{59.49} & \textbf{60.60} & \textbf{45.70} \\
\midrule
\multicolumn{18}{c}{\textit{Mistral-7B-Instruct-v0.3,} $B_\text{total} = 512L$} \\
\midrule
SnapKV          & 25.89 & 32.99 & 48.59 & 50.39 & \underline{36.72} & 26.24 & 24.86 & 23.81 & 24.17 & 69.00 & \textbf{89.28} & 45.39 & \textbf{6.00} & \textbf{97.50} & 60.21 & 61.03 & 45.13 \\
AdaKV           & 26.04 & 33.06 & \textbf{50.75} & 50.29 & 35.81 & \underline{26.58} & 25.03 & 23.39 & 23.65 & 71.50 & 88.86 & 46.03 & \textbf{6.00} & \textbf{97.50} & \textbf{60.74} & \textbf{62.25} & 45.47 \\
ThinK    & \underline{27.21} & \underline{36.41} & \underline{49.96} & \underline{50.54} & \textbf{37.33} & 25.04 & 25.82 & 23.87 & 24.70 & \underline{72.50} & 88.94 & 45.01 & \textbf{6.00} & \underline{96.00} & \underline{60.58} & \underline{62.08} & 45.75 \\
Snap+Zip & \textbf{27.57} & 35.17 & 48.23 & \textbf{51.43} & 36.70 & 25.40 & \underline{28.66} & \underline{24.37} & \underline{25.61} & \textbf{75.50} & 89.01 & \underline{46.09} & \textbf{6.00} & 95.50 & 60.34 & 60.70 & \underline{46.02} \\
\textbf{RDKV}   & 26.06 & \textbf{37.26} & 49.62 & 50.07 & 36.58 & \textbf{26.99} & \textbf{30.48} & \textbf{25.14} & \textbf{26.23} & \textbf{75.50} & \underline{89.08} & \textbf{46.77} & \underline{5.50} & \underline{96.50} & 59.14 & 60.58 & \textbf{46.34} \\
\midrule
\multicolumn{18}{c}{\textit{Mistral-7B-Instruct-v0.3,} $B_\text{total} = 1024L$} \\
\midrule
SnapKV          & 26.05 & 35.63 & \textbf{50.05} & 50.39 & 36.44 & \underline{27.40} & 27.24 & 23.97 & 25.22 & 73.00 & \textbf{89.19} & 45.84 & 6.00 & \textbf{98.50} & \underline{61.42} & 61.55 & 46.12 \\
AdaKV           & 26.31 & 36.49 & 49.95 & \textbf{51.76} & 37.37 & 27.07 & 27.11 & \underline{24.84} & 25.01 & 73.00 & \underline{89.03} & 45.83 & 5.50 & \textbf{98.50} & 61.03 & \textbf{62.35} & 46.32 \\
ThinK    & \textbf{27.97} & \textbf{38.68} & \underline{49.96} & 49.97 & \textbf{38.69} & 25.28 & 28.05 & 24.57 & 25.75 & \underline{74.00} & 88.72 & 45.83 & \textbf{6.50} & 95.00 & \textbf{61.67} & \underline{61.85} & \underline{46.41} \\
Snap+Zip & 26.41 & 37.75 & 48.90 & \underline{51.54} & 35.26 & 25.74 & \underline{30.97} & 24.25 & \underline{26.07} & \textbf{76.00} & 88.70 & \underline{46.25} & \underline{6.06} & 95.50 & 59.18 & 60.71 & 46.21 \\
\textbf{RDKV}   & \underline{27.00} & \underline{37.96} & 49.88 & 50.56 & \underline{37.63} & \textbf{28.20} & \textbf{32.33} & \textbf{25.05} & \textbf{26.57} & \textbf{76.00} & 88.48 & \textbf{47.88} & 5.00 & \underline{97.50} & 59.35 & 60.40 & \textbf{46.86} \\
\bottomrule
\end{tabular}
\end{table}

\begin{table}[!tb]
\centering
\caption{Performance on 16 LongBench datasets for Qwen3-4B-Instruct-2507 across cache
budgets $B_\text{total} \in \{64L, 128L, 256L, 512L, 1024L\}$.
Snap+Zip denotes SnapKV+ZipCache.
The best result in each row is in \textbf{bold}; the second-best is \underline{underlined}.}
\label{tab:longbench-qwen3}
\setlength{\tabcolsep}{0.6pt}
\fontsize{6.4}{7.6}\selectfont
\begin{tabular}{lccccccccccccccccc}
\toprule
\multirow{2}{*}{Method}
 & \multicolumn{3}{c}{Single-Doc QA}
 & \multicolumn{3}{c}{Multi-Doc QA}
 & \multicolumn{3}{c}{Summarization}
 & \multicolumn{3}{c}{Few-shot Learning}
 & \multicolumn{2}{c}{Synthetic}
 & \multicolumn{2}{c}{Code}
 & \multirow{2}{*}{Avg.} \\
\cmidrule(lr){2-4}\cmidrule(lr){5-7}\cmidrule(lr){8-10}\cmidrule(lr){11-13}\cmidrule(lr){14-15}\cmidrule(lr){16-17}
 & \rotatebox{35}{NrtvQA} & \rotatebox{35}{Qasper} & \rotatebox{35}{MF-en}
 & \rotatebox{35}{HotpotQA} & \rotatebox{35}{2WikiMQA} & \rotatebox{35}{Musique}
 & \rotatebox{35}{GovRep} & \rotatebox{35}{QMSum} & \rotatebox{35}{MultiNews}
 & \rotatebox{35}{TREC} & \rotatebox{35}{TriviaQA} & \rotatebox{35}{SAMSum}
 & \rotatebox{35}{PCount} & \rotatebox{35}{PRe}
 & \rotatebox{35}{Lcc} & \rotatebox{35}{RB-P}
 & \\
\midrule
\multicolumn{18}{c}{\textit{Qwen3-4B-Instruct-2507,} $B_\text{total} = \text{Full}$} \\
\midrule
FullKV & 27.96 & 44.61 & 49.84 & 58.84 & 43.27 & 25.54 & 30.64 & 22.56 & 24.06 & 74.50 & 87.15 & 45.36 & 2.15 & 100.00 & 64.69 & 57.56 & 47.42 \\
\midrule
\multicolumn{18}{c}{\textit{Qwen3-4B-Instruct-2507,} $B_\text{total} = 64L$} \\
\midrule
SnapKV          & 16.95 & 26.10 & 35.04 & 43.94 & 34.68 & 15.70 & 12.69 & 19.63 & 12.69 & 39.50 & 74.68 & 35.99 & 1.51 & 45.50 & 53.03 & 45.78 & 32.09 \\
AdaKV           & 16.83 & 25.08 & 36.52 & 45.79 & 35.94 & 15.06 & 12.97 & 19.40 & 12.97 & 42.50 & 76.67 & 36.46 & \textbf{2.73} & 47.00 & 53.54 & 45.54 & 32.81 \\
ThinK    & \underline{18.65} & 28.98 & 39.41 & 51.50 & \underline{39.33} & 19.25 & 14.72 & 20.81 & 14.53 & 44.50 & 80.45 & 38.43 & 0.93 & \underline{85.50} & 55.67 & 48.79 & 37.59 \\
Snap+Zip & 18.22 & \underline{31.79} & \underline{46.14} & \underline{52.48} & 38.27 & \underline{20.22} & \textbf{20.43} & \underline{21.16} & \underline{19.33} & \underline{61.00} & \underline{84.02} & \underline{40.83} & \underline{1.67} & 60.29 & \textbf{60.95} & \underline{51.23} & \underline{39.25} \\
\textbf{RDKV}   & \textbf{24.94} & \textbf{39.23} & \textbf{49.24} & \textbf{59.71} & \textbf{43.46} & \textbf{25.50} & \underline{20.01} & \textbf{23.15} & \textbf{20.18} & \textbf{70.50} & \textbf{84.64} & \textbf{41.88} & 0.75 & \textbf{100.00} & \underline{60.43} & \textbf{52.29} & \textbf{44.74} \\
\midrule
\multicolumn{18}{c}{\textit{Qwen3-4B-Instruct-2507,} $B_\text{total} = 128L$} \\
\midrule
SnapKV          & 21.46 & 31.15 & 43.96 & 53.83 & 40.79 & 22.12 & 15.77 & 21.90 & 16.16 & 51.50 & 83.44 & 39.72 & 1.30 & 93.03 & 59.07 & 50.75 & 40.37 \\
AdaKV           & 23.78 & 31.71 & 45.03 & \underline{57.81} & 40.54 & 23.84 & 16.40 & 22.70 & 16.45 & 56.50 & 81.10 & 40.47 & \underline{1.60} & \underline{98.00} & 60.64 & 51.76 & 41.77 \\
ThinK    & 22.88 & 35.83 & 44.32 & 56.70 & 40.32 & 23.61 & 18.68 & \underline{22.77} & 17.87 & 59.00 & 84.64 & 40.93 & 1.32 & 97.00 & 61.76 & 52.80 & 42.53 \\
Snap+Zip & \underline{23.84} & \underline{38.35} & \underline{48.62} & 57.59 & \underline{41.88} & \underline{24.04} & \underline{23.43} & 22.18 & \underline{21.62} & \underline{68.50} & \underline{85.73} & \underline{42.24} & \textbf{1.91} & \underline{98.00} & \underline{62.43} & \underline{53.41} & \underline{44.61} \\
\textbf{RDKV}   & \textbf{26.79} & \textbf{41.89} & \textbf{49.87} & \textbf{59.14} & \textbf{42.95} & \textbf{26.02} & \textbf{24.01} & \textbf{23.39} & \textbf{22.46} & \textbf{74.00} & \textbf{86.53} & \textbf{43.00} & 1.25 & \textbf{100.00} & \textbf{63.10} & \textbf{53.48} & \textbf{46.12} \\
\midrule
\multicolumn{18}{c}{\textit{Qwen3-4B-Instruct-2507,} $B_\text{total} = 256L$} \\
\midrule
SnapKV          & \underline{25.92} & 36.55 & \underline{48.56} & 59.14 & 41.07 & 25.18 & 19.13 & 22.75 & 19.34 & 65.00 & 84.21 & 40.26 & 1.72 & \underline{99.75} & 62.46 & 53.74 & 44.05 \\
AdaKV           & 25.46 & 38.20 & 48.10 & \underline{59.34} & 42.49 & \textbf{26.44} & 19.46 & 23.01 & 19.60 & 69.00 & 86.32 & 40.80 & 1.75 & \textbf{100.00} & 63.18 & \underline{55.07} & 44.89 \\
ThinK    & 25.30 & 39.37 & 46.06 & 59.10 & 42.73 & 25.10 & 21.58 & \underline{23.14} & 20.80 & 68.50 & 86.58 & 41.22 & \underline{1.88} & \textbf{100.00} & \underline{63.50} & \textbf{55.32} & 45.01 \\
Snap+Zip & 24.21 & \underline{40.99} & 48.26 & 58.93 & \underline{42.75} & 24.22 & \underline{25.96} & 22.62 & \underline{23.13} & \underline{72.50} & \textbf{87.13} & \underline{43.42} & \textbf{1.90} & 99.50 & \textbf{63.77} & 54.68 & \underline{45.87} \\
\textbf{RDKV}   & \textbf{26.45} & \textbf{43.49} & \textbf{50.30} & \textbf{59.60} & \textbf{42.91} & \underline{25.52} & \textbf{27.30} & \textbf{23.80} & \textbf{23.30} & \textbf{74.00} & \underline{87.03} & \textbf{44.27} & 1.25 & \textbf{100.00} & 63.39 & 53.91 & \textbf{46.66} \\
\midrule
\multicolumn{18}{c}{\textit{Qwen3-4B-Instruct-2507,} $B_\text{total} = 512L$} \\
\midrule
SnapKV          & 26.57 & 39.86 & \underline{48.52} & \underline{59.27} & \underline{43.23} & \underline{26.10} & 23.00 & 22.74 & 21.67 & 70.00 & \textbf{87.95} & 41.81 & \underline{1.99} & \textbf{100.00} & 64.72 & 56.35 & 45.86 \\
AdaKV           & \underline{26.72} & 40.87 & 48.17 & 59.10 & 42.93 & \textbf{26.61} & 23.28 & 22.58 & 21.54 & 70.50 & 87.15 & 42.20 & 1.78 & \textbf{100.00} & 64.69 & \textbf{56.58} & 45.92 \\
ThinK    & 26.48 & 41.08 & 46.44 & 59.15 & 42.90 & 25.06 & 24.79 & \underline{23.38} & 22.38 & 72.50 & \underline{87.64} & 42.51 & \textbf{2.06} & \textbf{100.00} & \textbf{65.07} & 56.34 & 46.11 \\
Snap+Zip & 25.47 & \underline{42.27} & 47.80 & 57.58 & 43.17 & 23.24 & \underline{28.44} & 22.60 & \underline{23.79} & \textbf{75.00} & 86.82 & \underline{43.79} & 1.94 & \underline{99.50} & \underline{65.03} & \underline{56.52} & \underline{46.44} \\
\textbf{RDKV}   & \textbf{27.75} & \textbf{43.89} & \textbf{49.81} & \textbf{59.97} & \textbf{43.87} & 25.37 & \textbf{30.07} & \textbf{23.67} & \textbf{24.00} & \underline{74.50} & 86.73 & \textbf{44.60} & 0.50 & \textbf{100.00} & 63.29 & 54.42 & \textbf{47.03} \\
\midrule
\multicolumn{18}{c}{\textit{Qwen3-4B-Instruct-2507,} $B_\text{total} = 1024L$} \\
\midrule
SnapKV          & \textbf{27.80} & 42.45 & 48.56 & 59.01 & 43.04 & \underline{25.83} & 26.55 & 23.26 & 22.91 & 72.50 & 87.44 & 42.80 & 1.83 & \textbf{100.00} & 64.83 & 56.65 & 46.59 \\
AdaKV           & \underline{27.64} & 42.54 & \underline{48.65} & \textbf{59.57} & 43.31 & \textbf{26.35} & 26.75 & 23.02 & 23.16 & 74.00 & \textbf{88.08} & 43.49 & 1.60 & \textbf{100.00} & \underline{64.86} & \textbf{57.44} & \underline{46.90} \\
ThinK    & 26.73 & 42.98 & 47.88 & 58.62 & 43.18 & 24.97 & 27.90 & \textbf{23.55} & 23.43 & \textbf{75.00} & \underline{87.85} & 42.95 & \underline{1.97} & \textbf{100.00} & \textbf{65.48} & \underline{57.04} & 46.85 \\
Snap+Zip & 26.60 & \underline{43.46} & 47.80 & \textbf{59.57} & \textbf{43.76} & 22.11 & \underline{29.78} & 22.67 & \underline{23.84} & 73.50 & 87.36 & \underline{44.65} & \textbf{2.16} & \underline{99.50} & 64.09 & 56.08 & 46.68 \\
\textbf{RDKV}   & 27.35 & \textbf{44.36} & \textbf{50.40} & \underline{59.35} & \underline{43.63} & 24.87 & \textbf{30.91} & \underline{23.29} & \textbf{24.45} & \underline{74.50} & 87.26 & \textbf{45.24} & 0.75 & \textbf{100.00} & 62.86 & 54.61 & \textbf{47.12} \\
\bottomrule
\end{tabular}
\end{table}

\begin{table}[!tb]
\centering
\caption{Performance on 16 LongBench datasets for LLaMA-2-13B-Chat and
Qwen2.5-72B-Instruct (2-GPU tensor parallelism) at $B_\text{total} \in \{128L, 512L\}$.
Snap+Zip denotes SnapKV+ZipCache.
The best result in each row is in \textbf{bold}; the second-best is \underline{underlined}.}
\label{tab:longbench-13b-72b}
\setlength{\tabcolsep}{0.6pt}
\fontsize{6.4}{7.6}\selectfont
\begin{tabular}{lccccccccccccccccc}
\toprule
\multirow{2}{*}{Method}
 & \multicolumn{3}{c}{Single-Doc QA}
 & \multicolumn{3}{c}{Multi-Doc QA}
 & \multicolumn{3}{c}{Summarization}
 & \multicolumn{3}{c}{Few-shot Learning}
 & \multicolumn{2}{c}{Synthetic}
 & \multicolumn{2}{c}{Code}
 & \multirow{2}{*}{Avg.} \\
\cmidrule(lr){2-4}\cmidrule(lr){5-7}\cmidrule(lr){8-10}\cmidrule(lr){11-13}\cmidrule(lr){14-15}\cmidrule(lr){16-17}
 & \rotatebox{35}{NrtvQA} & \rotatebox{35}{Qasper} & \rotatebox{35}{MF-en}
 & \rotatebox{35}{HotpotQA} & \rotatebox{35}{2WikiMQA} & \rotatebox{35}{Musique}
 & \rotatebox{35}{GovRep} & \rotatebox{35}{QMSum} & \rotatebox{35}{MultiNews}
 & \rotatebox{35}{TREC} & \rotatebox{35}{TriviaQA} & \rotatebox{35}{SAMSum}
 & \rotatebox{35}{PCount} & \rotatebox{35}{PRe}
 & \rotatebox{35}{Lcc} & \rotatebox{35}{RB-P}
 & \\
\midrule
\multicolumn{18}{c}{\textit{LLaMA-2-13B-Chat,} $B_\text{total} = \text{Full}$} \\
\midrule
FullKV & 16.25 & 17.54 & 27.88 & 14.81 & 14.97 & 6.31 & 27.47 & 20.58 & 26.12 & 69.00 & 87.83 & 38.12 & 2.91 & 12.38 & 51.17 & 51.91 & 30.33 \\
\midrule
\multicolumn{18}{c}{\textit{LLaMA-2-13B-Chat,} $B_\text{total} = 128L$} \\
\midrule
SnapKV          & 13.96 & 16.25 & 25.98 & 15.90 & 14.40 & \underline{5.78} & 19.40 & 20.07 & 20.52 & 41.00 & 86.12 & 34.97 & 2.78 & 12.13 & 46.23 & 44.95 & 26.28 \\
AdaKV           & 14.71 & \textbf{17.81} & \textbf{28.54} & 16.27 & \textbf{15.12} & 5.52 & 19.71 & 20.19 & 21.51 & \underline{62.50} & 87.02 & 34.93 & \underline{3.06} & 10.25 & 49.59 & \underline{48.58} & 28.46 \\
ThinK    & 14.35 & 14.81 & 26.36 & \underline{16.29} & 14.35 & 5.54 & 19.95 & \underline{20.24} & 21.53 & 48.00 & 87.01 & 35.45 & 2.67 & 12.25 & 48.02 & 46.59 & 27.09 \\
Snap+Zip & \textbf{16.75} & 15.93 & \underline{27.73} & \textbf{16.40} & 13.94 & 5.61 & \underline{21.66} & 18.93 & \underline{23.18} & 58.00 & \textbf{87.49} & \underline{35.79} & \textbf{5.90} & \textbf{17.38} & \textbf{50.33} & 47.14 & \underline{28.88} \\
\textbf{RDKV}   & \underline{16.03} & \underline{16.40} & 27.09 & 15.17 & \underline{15.01} & \textbf{6.89} & \textbf{22.42} & \textbf{20.29} & \textbf{24.10} & \textbf{68.50} & \underline{87.05} & \textbf{36.86} & 3.01 & \underline{13.88} & \underline{49.83} & \textbf{50.73} & \textbf{29.58} \\
\midrule
\multicolumn{18}{c}{\textit{LLaMA-2-13B-Chat,} $B_\text{total} = 512L$} \\
\midrule
SnapKV          & \underline{15.57} & 16.55 & 28.23 & 15.49 & \textbf{15.15} & 6.24 & 22.52 & 20.13 & 24.04 & 67.50 & 86.53 & \textbf{38.47} & \underline{3.09} & \textbf{13.88} & \textbf{50.49} & 50.36 & 29.64 \\
AdaKV           & 15.53 & 17.10 & 28.45 & 15.84 & \underline{15.14} & \textbf{6.38} & 22.98 & \textbf{20.68} & 24.22 & \textbf{69.50} & \underline{87.11} & 37.52 & 2.42 & \underline{13.12} & 49.50 & \underline{51.25} & \underline{29.80} \\
ThinK    & 15.52 & \underline{17.64} & \underline{28.49} & \underline{16.12} & 14.88 & \underline{6.29} & 23.16 & 19.95 & 24.52 & 68.00 & 86.75 & 37.97 & 2.84 & \textbf{13.88} & \underline{50.10} & 50.60 & 29.79 \\
Snap+Zip & 13.18 & 14.74 & 28.41 & \textbf{17.03} & 13.03 & 5.84 & \underline{24.33} & \underline{20.29} & \underline{25.32} & \underline{68.50} & 86.76 & \underline{38.19} & \textbf{5.50} & 9.50 & 49.65 & 50.99 & 29.45 \\
\textbf{RDKV}   & \textbf{16.35} & \textbf{18.08} & \textbf{28.78} & 15.23 & 15.10 & 6.19 & \textbf{25.54} & 20.27 & \textbf{25.84} & \underline{68.50} & \textbf{87.39} & 37.41 & 2.68 & \underline{13.12} & 50.00 & \textbf{52.67} & \textbf{30.20} \\
\midrule
\multicolumn{18}{c}{\textit{Qwen2.5-72B-Instruct (2-GPU TP),} $B_\text{total} = \text{Full}$} \\
\midrule
FullKV & 32.21 & 49.59 & 51.88 & 65.26 & 65.51 & 41.37 & 34.00 & 24.35 & 24.74 & 78.00 & 91.76 & 47.87 & 21.50 & 99.50 & 66.36 & 71.42 & 54.08 \\
\midrule
\multicolumn{18}{c}{\textit{Qwen2.5-72B-Instruct (2-GPU TP),} $B_\text{total} = 128L$} \\
\midrule
SnapKV          & 28.76 & 32.00 & 46.33 & 62.37 & 62.21 & 36.60 & 20.94 & 21.38 & 17.90 & 53.00 & 90.26 & 40.81 & \textbf{23.00} & 97.42 & 59.07 & 63.89 & 47.25 \\
AdaKV           & \underline{30.68} & 33.07 & 49.54 & \underline{63.92} & 62.93 & 38.35 & 21.80 & 22.02 & 18.46 & 61.50 & 90.91 & 42.15 & 21.50 & \underline{99.25} & 59.94 & 65.71 & 48.86 \\
ThinK    & 28.02 & 29.54 & 46.30 & 62.17 & 61.47 & 37.73 & 20.72 & 21.27 & 17.70 & 47.00 & 90.28 & 40.22 & 21.50 & 96.17 & 58.42 & 63.51 & 46.38 \\
Snap+Zip & 30.60 & \textbf{46.38} & \textbf{50.50} & \textbf{65.47} & \textbf{64.97} & \textbf{40.35} & \underline{26.64} & \underline{22.30} & \textbf{22.14} & \underline{73.00} & \underline{91.86} & \textbf{45.29} & 19.50 & 99.00 & \underline{64.03} & \underline{68.52} & \underline{51.91} \\
\textbf{RDKV}   & \textbf{31.76} & \underline{44.79} & \underline{49.57} & 63.86 & \underline{64.42} & \underline{39.72} & \textbf{26.93} & \textbf{23.49} & \underline{22.10} & \textbf{76.00} & \textbf{92.16} & \underline{45.15} & \underline{22.00} & \textbf{99.50} & \textbf{65.48} & \textbf{69.80} & \textbf{52.30} \\
\midrule
\multicolumn{18}{c}{\textit{Qwen2.5-72B-Instruct (2-GPU TP),} $B_\text{total} = 512L$} \\
\midrule
SnapKV          & 32.50 & 45.81 & 51.23 & 64.56 & 64.39 & 40.55 & 26.89 & 23.24 & 22.07 & 75.50 & 92.16 & 45.03 & \underline{21.50} & \textbf{99.50} & 65.04 & 69.95 & 52.49 \\
AdaKV           & \textbf{33.78} & 45.16 & 50.82 & 63.96 & \underline{64.80} & \underline{40.83} & 27.41 & 23.02 & 22.29 & \underline{76.50} & 91.89 & 45.60 & \textbf{22.00} & \textbf{99.50} & \underline{65.48} & \underline{70.48} & 52.72 \\
ThinK    & 31.03 & 43.89 & 50.81 & \underline{64.59} & 64.58 & 39.35 & 26.33 & 23.42 & 21.92 & 73.50 & \underline{92.19} & 45.76 & 21.00 & \underline{99.17} & 65.34 & 69.89 & 52.05 \\
Snap+Zip & 32.50 & \textbf{50.24} & \textbf{51.80} & 64.21 & 64.33 & \textbf{41.00} & \underline{31.63} & \underline{23.87} & \textbf{24.23} & \textbf{78.00} & \textbf{92.61} & \underline{46.28} & \underline{21.50} & \textbf{99.50} & 65.03 & 69.85 & \underline{53.54} \\
\textbf{RDKV}   & \underline{32.65} & \underline{49.51} & \underline{51.68} & \textbf{64.77} & \textbf{64.95} & 40.41 & \textbf{31.78} & \textbf{24.21} & \underline{23.94} & \textbf{78.00} & 92.06 & \textbf{47.91} & \underline{21.50} & \textbf{99.50} & \textbf{66.34} & \textbf{71.90} & \textbf{53.82} \\
\bottomrule
\end{tabular}
\end{table}


\section{Additional Experiments on RULER}
\label{app:ruler-detailed}

In this section, we present a detailed evaluation of RDKV on the
various subtasks of the RULER benchmark~\citep{hsieh2024ruler}. RULER is
specifically designed to assess core long-context capabilities through a
diverse suite of tasks. The retrieval suite includes four variants of
needle-in-a-haystack tests: Single-Needle (S-NIAH-1/2/3) at increasing
difficulty, Multi-Key (MK-NIAH-1/2/3), Multi-Query (MQ-NIAH), and
Multi-Value (MV-NIAH), which evaluate recall accuracy under diverse
distractor settings and query formulations. Beyond retrieval, Variable
Tracking (VT) measures multi-hop reasoning by requiring models to resolve
transitive variable references scattered throughout the input. Lastly,
the aggregation tasks Common Word Extraction (CWE) and Frequent Word
Extraction (FWE) test the ability to compress and synthesize
high-density signal distributed across long contexts. These tasks
collectively pose distinct challenges for context retention, salience
estimation, and compositional reasoning.

We evaluate RDKV on LLaMA-3.1-8B-Instruct with
$B_\text{total} = 2048L$, across input lengths from $4$k to $128$k
tokens. The evaluation compares RDKV with SnapKV, AdaKV, ThinK,
SnapKV+ZipCache, and FullKV (oracle).

As reported in \cref{tab:ruler-detailed}, RDKV consistently achieves
the highest average accuracy across all context lengths. At $4$k tokens,
RDKV matches FullKV ($98.59$ vs.\ $98.58$), while the best eviction
baseline (ThinK) reaches $98.11$. As context length increases, the gap
widens: at $64$k, RDKV leads with $83.20$ vs.\ $78.73$ for the next
best method (AdaKV), a $4.5$-point margin. A breakdown by task category
clarifies the source of the advantage. On S-NIAH-3---the hardest
single-needle retrieval variant---RDKV is the only method that stays
above $99\%$ from $4$k through $64$k; SnapKV and AdaKV drop to
$89.80$/$92.60$ at $64$k and to $50.80$/$47.80$ at $128$k, whereas
RDKV still reaches $99.80$ and $93.80$, respectively. The CWE
aggregation task at $16$k illustrates a complementary advantage:
eviction discards low-attention tokens that carry the frequency signal,
yielding $46$--$55$ for the baselines; RDKV retains these tokens at low
bit-width and scores $77.26$, a $22$-point improvement over the next
best method.

\begin{table}[!tb]
\centering
\caption{Performance on the 11 RULER tasks across context lengths
$\{4\text{k}, 8\text{k}, 16\text{k}, 32\text{k}, 64\text{k}, 128\text{k}\}$ for
LLaMA-3.1-8B-Instruct at $B_\text{total} = 2048L$. The best result in each row is in
\textbf{bold}; the second-best is \underline{underlined}. FullKV (no compression) is
reported as a reference upper bound and excluded from the ranking.}
\label{tab:ruler-detailed}
\setlength{\tabcolsep}{2.2pt}
\fontsize{6.6}{7.8}\selectfont
\begin{tabular}{lccccccccccccc}
\toprule
Method & S-NIAH-1 & S-NIAH-2 & S-NIAH-3 & MK-NIAH-1 & MK-NIAH-2 & MK-NIAH-3 & MQ-NIAH & MV-NIAH & VT & CWE & FWE & Avg. \\
\midrule
\multicolumn{13}{c}{\textit{Sequence length} $= 4$k} \\
\midrule
FullKV         & 100.00 & 100.00 & 100.00 & 100.00 & 100.00 & 99.80 & 99.80 & 99.70 & 99.96 & 98.92 & 86.20 & 98.58 \\
\midrule
SnapKV          & \textbf{100.00} & \textbf{100.00} & \textbf{100.00} & \textbf{100.00} & 99.60 & 95.80 & 99.70 & \underline{99.70} & 99.92 & 97.42 & 76.40 & 97.14 \\
AdaKV           & \textbf{100.00} & \textbf{100.00} & \textbf{100.00} & \textbf{100.00} & \underline{99.80} & \underline{99.00} & \underline{99.75} & \textbf{99.75} & \textbf{99.96} & 98.20 & 81.40 & 97.99 \\
ThinK       & \textbf{100.00} & \textbf{100.00} & \textbf{100.00} & \textbf{100.00} & \underline{99.80} & 96.80 & 99.70 & 99.50 & 99.80 & \textbf{99.12} & \underline{84.47} & \underline{98.11} \\
Snap+Zip    & 46.60 & 98.60 & 92.80 & 99.80 & 63.00 & 0.00 & 99.20 & 94.65 & 88.40 & 83.72 & 74.60 & 76.49 \\
\textbf{RDKV}   & \textbf{100.00} & \textbf{100.00} & \textbf{100.00} & \textbf{100.00} & \textbf{100.00} & \textbf{99.80} & \textbf{99.80} & \underline{99.70} & \textbf{99.96} & \underline{98.94} & \textbf{86.27} & \textbf{98.59} \\
\midrule
\multicolumn{13}{c}{\textit{Sequence length} $= 8$k} \\
\midrule
FullKV         & 100.00 & 100.00 & 100.00 & 99.80 & 99.80 & 99.80 & 99.70 & 99.50 & 99.76 & 93.76 & 95.60 & 98.88 \\
\midrule
SnapKV          & \textbf{100.00} & \textbf{100.00} & 98.20 & \textbf{99.80} & 99.60 & 69.40 & \textbf{99.90} & 99.20 & \textbf{99.80} & 77.68 & 86.60 & 93.65 \\
AdaKV           & \textbf{100.00} & \textbf{100.00} & \underline{99.60} & \textbf{99.80} & \textbf{99.80} & \underline{86.20} & \underline{99.80} & \underline{99.35} & \underline{99.76} & 77.22 & 88.93 & \underline{95.50} \\
ThinK       & \textbf{100.00} & \textbf{100.00} & 92.80 & \textbf{99.80} & \textbf{99.80} & 67.00 & 99.65 & 99.20 & \underline{99.76} & \underline{90.48} & \underline{89.07} & 94.32 \\
Snap+Zip    & 66.00 & 98.80 & 88.60 & 99.00 & 69.80 & 0.00 & 98.15 & 92.70 & 73.56 & 65.96 & 81.73 & 75.85 \\
\textbf{RDKV}   & \textbf{100.00} & \textbf{100.00} & \textbf{100.00} & \textbf{99.80} & \textbf{99.80} & \textbf{99.60} & 99.70 & \textbf{99.50} & \underline{99.76} & \textbf{93.54} & \textbf{95.53} & \textbf{98.84} \\
\midrule
\multicolumn{13}{c}{\textit{Sequence length} $= 16$k} \\
\midrule
FullKV         & 100.00 & 100.00 & 100.00 & 100.00 & 100.00 & 99.80 & 99.70 & 98.70 & 99.68 & 73.38 & 95.53 & 96.98 \\
\midrule
SnapKV          & \textbf{100.00} & 99.80 & 94.60 & 99.60 & \textbf{100.00} & 45.60 & 99.75 & \underline{98.90} & 99.32 & \underline{54.84} & \underline{94.33} & 89.70 \\
AdaKV           & \textbf{100.00} & \textbf{100.00} & \underline{96.00} & 99.40 & \textbf{100.00} & \underline{67.60} & \textbf{99.80} & 98.70 & \textbf{99.52} & 46.32 & 93.00 & \underline{90.94} \\
ThinK       & \textbf{100.00} & \textbf{100.00} & 80.20 & \textbf{100.00} & 99.00 & 38.20 & \textbf{99.80} & \textbf{99.00} & 99.20 & 54.28 & 90.93 & 87.33 \\
Snap+Zip    & 47.60 & 98.00 & 82.80 & 99.20 & 55.80 & 0.00 & 95.05 & 92.35 & 73.32 & 36.42 & 76.47 & 68.82 \\
\textbf{RDKV}   & \textbf{100.00} & \textbf{100.00} & \textbf{100.00} & \textbf{100.00} & 99.80 & \textbf{99.00} & 99.60 & 98.65 & \textbf{99.52} & \textbf{77.26} & \textbf{95.07} & \textbf{97.17} \\
\midrule
\multicolumn{13}{c}{\textit{Sequence length} $= 32$k} \\
\midrule
FullKV         & 100.00 & 100.00 & 100.00 & 99.40 & 99.60 & 99.20 & 99.60 & 99.20 & 99.04 & 10.18 & 87.40 & 90.33 \\
\midrule
SnapKV          & \textbf{100.00} & 98.80 & 94.60 & 98.80 & 99.20 & 32.60 & 99.40 & \underline{98.50} & 96.60 & 21.12 & 72.93 & 82.96 \\
AdaKV           & \textbf{100.00} & 98.60 & \underline{95.00} & 99.20 & \textbf{99.60} & \underline{49.80} & \textbf{99.55} & 98.40 & \underline{97.68} & 20.28 & 68.33 & \underline{84.22} \\
ThinK       & \textbf{100.00} & \textbf{100.00} & 76.80 & \textbf{99.40} & 95.20 & 21.20 & 99.35 & 97.95 & 97.12 & 18.16 & 68.33 & 79.41 \\
Snap+Zip    & 94.60 & \textbf{100.00} & 90.80 & 99.00 & 46.40 & 1.80 & \underline{99.50} & 95.95 & 83.28 & \underline{21.26} & \underline{76.13} & 73.52 \\
\textbf{RDKV}   & \textbf{100.00} & \textbf{100.00} & \textbf{99.80} & \textbf{99.40} & \underline{99.40} & \textbf{90.60} & 99.30 & \textbf{98.55} & \textbf{98.52} & \textbf{23.98} & \textbf{79.67} & \textbf{89.93} \\
\midrule
\multicolumn{13}{c}{\textit{Sequence length} $= 64$k} \\
\midrule
FullKV         & 100.00 & 100.00 & 100.00 & 99.40 & 98.40 & 97.00 & 99.05 & 94.50 & 96.64 & 1.22 & 87.13 & 88.49 \\
\midrule
SnapKV          & \textbf{100.00} & 94.80 & 89.80 & 98.60 & \textbf{96.20} & 13.00 & 98.15 & \underline{94.15} & 89.80 & 2.64 & \underline{80.00} & 77.92 \\
AdaKV           & \textbf{100.00} & 95.40 & 92.60 & 98.60 & 94.40 & \underline{21.80} & \underline{98.50} & 93.85 & \underline{91.84} & 2.28 & 76.73 & \underline{78.73} \\
ThinK       & \textbf{100.00} & 97.00 & 70.40 & \underline{99.00} & 83.40 & 5.20 & 98.45 & 94.00 & 90.80 & 2.14 & 67.67 & 73.46 \\
Snap+Zip    & 99.60 & \textbf{100.00} & \underline{94.60} & \underline{99.00} & 42.60 & 0.80 & 98.30 & 92.10 & 84.00 & \textbf{4.66} & \textbf{85.33} & 72.82 \\
\textbf{RDKV}   & \textbf{100.00} & \underline{99.20} & \textbf{99.80} & \textbf{99.40} & \underline{95.40} & \textbf{56.40} & \textbf{99.45} & \textbf{94.45} & \textbf{94.44} & \underline{3.58} & 73.13 & \textbf{83.20} \\
\midrule
\multicolumn{13}{c}{\textit{Sequence length} $= 128$k} \\
\midrule
FullKV         & 100.00 & 99.40 & 99.80 & 97.60 & 88.60 & 66.20 & 98.45 & 90.65 & 66.56 & 0.00 & 71.80 & 79.91 \\
\midrule
SnapKV          & \textbf{100.00} & \textbf{99.20} & 50.80 & \textbf{97.20} & \textbf{82.20} & 3.20 & \underline{97.10} & \textbf{90.25} & \underline{66.60} & \underline{0.12} & \underline{52.20} & \underline{67.17} \\
AdaKV           & \textbf{100.00} & \textbf{99.20} & 47.80 & \textbf{97.20} & 74.40 & \underline{5.40} & \textbf{97.15} & \underline{89.15} & 63.16 & 0.10 & 35.00 & 64.42 \\
ThinK       & \textbf{100.00} & \textbf{99.20} & 61.80 & 97.00 & \underline{81.60} & 2.00 & 96.10 & 88.70 & 62.96 & \textbf{0.18} & 49.00 & 67.14 \\
Snap+Zip    & \textbf{100.00} & 96.20 & \underline{85.40} & 95.00 & 49.00 & 1.80 & 92.05 & 81.30 & \textbf{67.32} & 0.06 & \textbf{63.87} & 66.55 \\
\textbf{RDKV}   & \textbf{100.00} & 99.00 & \textbf{93.80} & 96.80 & \underline{81.60} & \textbf{9.00} & 96.45 & 88.50 & 63.12 & 0.06 & 44.73 & \textbf{70.28} \\
\bottomrule
\end{tabular}
\end{table}

\paragraph{Comparison with HqeKV.}
HqeKV~\citep{anonymous2026hqekv} is a concurrent method that also
combines eviction and quantization, but fixes the tier ratios rather
than deriving them from a single allocation curve.
\Cref{tab:rdkv-vs-hqekv} compares RDKV and HqeKV on the 11 RULER
tasks at $B_\text{total} = 1024L$ across three context lengths
($16$k, $32$k, $64$k) on LLaMA-3.1-8B-Instruct.
RDKV leads at every context length, with the gap widening from
$+1.8$ points at $32$k to $+5.3$ at $16$k and $+5.0$ at $64$k.
The advantage is concentrated on tasks where the allocation
granularity matters most:
on CWE, RDKV leads by $+49.9$ / $+22.0$ / $+6.5$ points across the
three lengths; on MV-NIAH, by $+3.4$ / $+7.4$ / $+12.1$; on MQ-NIAH,
by $+2.7$ / $+5.3$ / $+12.2$.
HqeKV outperforms RDKV on FWE at $32$k and $64$k and on MK-NIAH-3 at
$32$k, where its fixed high-precision tier retains more tokens at
full bit-width; RDKV's water-filling allocator trades off these
tokens for finer-grained precision elsewhere.

\begin{table}[!ht]
\centering
\caption{RDKV vs.\ HqeKV~\citep{anonymous2026hqekv} on the 11 RULER tasks
across context lengths $\{16\text{k}, 32\text{k}, 64\text{k}\}$ for
LLaMA-3.1-8B-Instruct at $B_\text{total} = 1024L$.
The best result in each row is in \textbf{bold}.}
\label{tab:rdkv-vs-hqekv}
\setlength{\tabcolsep}{2.2pt}
\fontsize{6.6}{7.8}\selectfont
\begin{tabular}{lccccccccccccc}
\toprule
Method & S-NIAH-1 & S-NIAH-2 & S-NIAH-3 & MK-NIAH-1 & MK-NIAH-2 & MK-NIAH-3 & MQ-NIAH & MV-NIAH & VT & CWE & FWE & Avg. \\
\midrule
\multicolumn{13}{c}{\textit{Sequence length} $= 16$k} \\
\midrule
HqeKV       & \textbf{100.00} &  99.40 &  99.00 & \textbf{99.60} & \textbf{99.80} & \textbf{86.40} &  96.70 &  95.10 &  96.20 &  26.80 & \textbf{95.00} &  90.40 \\
\textbf{RDKV} & \textbf{100.00} & \textbf{99.80} & \textbf{100.00} & \textbf{99.60} &  99.60 &  86.20 & \textbf{99.40} & \textbf{98.50} & \textbf{99.20} & \textbf{76.70} &  93.20 & \textbf{95.70} \\
\midrule
\multicolumn{13}{c}{\textit{Sequence length} $= 32$k} \\
\midrule
HqeKV       &  99.80 & \textbf{99.20} &  88.80 &  96.60 &  98.60 & \textbf{85.20} &  93.80 &  91.00 &  95.30 &  15.30 & \textbf{87.20} &  86.40 \\
\textbf{RDKV} & \textbf{100.00} &  98.40 & \textbf{99.60} & \textbf{98.40} & \textbf{98.80} &  71.20 & \textbf{99.20} & \textbf{98.30} & \textbf{98.70} & \textbf{37.30} &  71.20 & \textbf{88.30} \\
\midrule
\multicolumn{13}{c}{\textit{Sequence length} $= 64$k} \\
\midrule
HqeKV       &  99.60 &  96.00 &  72.40 &  97.20 & \textbf{94.80} &  28.00 &  86.40 &  81.00 &  86.20 &   0.10 & \textbf{84.10} &  75.10 \\
\textbf{RDKV} & \textbf{100.00} & \textbf{98.40} & \textbf{99.00} & \textbf{99.00} &  93.00 & \textbf{30.60} & \textbf{98.60} & \textbf{93.00} & \textbf{92.90} & \textbf{6.60} &  69.70 & \textbf{80.10} \\
\bottomrule
\end{tabular}
\end{table}


\section{Additional Experiments on Needle-in-a-Haystack}
\label{app:niah}

\Cref{fig:niah-appendix} extends the Needle-in-a-Haystack evaluation
of \cref{sec:exp-niah} to a larger cache budget
$B_\text{total} = 128L$.  Even with the relaxed budget, SnapKV and
AdaKV still exhibit failure bands at intermediate depths ($11$--$89\%$)
for longer contexts: a needle that does not rank among the top-$k$
tokens is evicted regardless of the budget headroom.  RDKV preserves a
near-uniform success pattern comparable to FullKV; the residual misses
concentrate at depths and lengths where FullKV itself begins to lose
recall.

\begin{figure}[!tb]
  \centering
  \begin{subfigure}[t]{0.49\linewidth}
    \includegraphics[width=\linewidth]{figs/niah_fullkv.pdf}
    \caption{FullKV (avg.\ $1.00$).}
  \end{subfigure}
  \begin{subfigure}[t]{0.49\linewidth}
    \includegraphics[width=\linewidth]{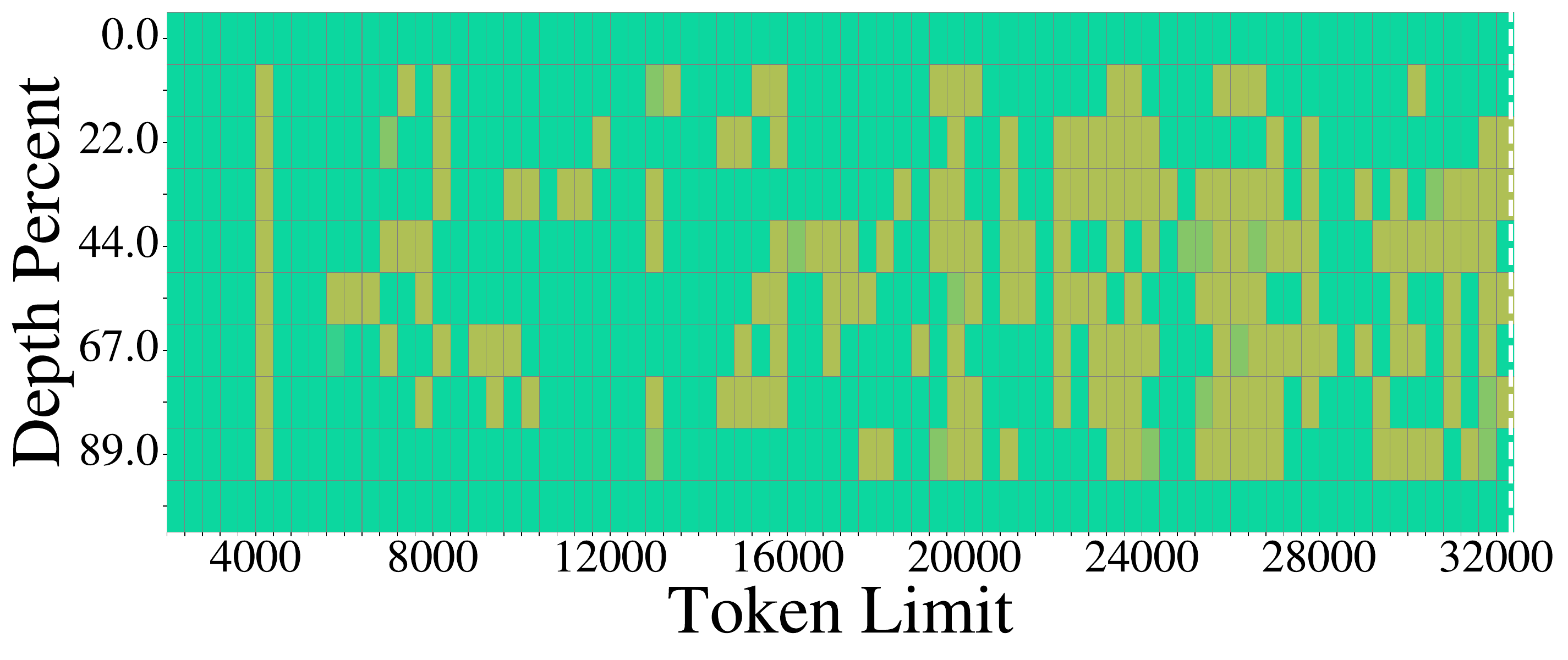}
    \caption{SnapKV (avg.\ $0.90$).}
  \end{subfigure}\\[2pt]
  \begin{subfigure}[t]{0.49\linewidth}
    \includegraphics[width=\linewidth]{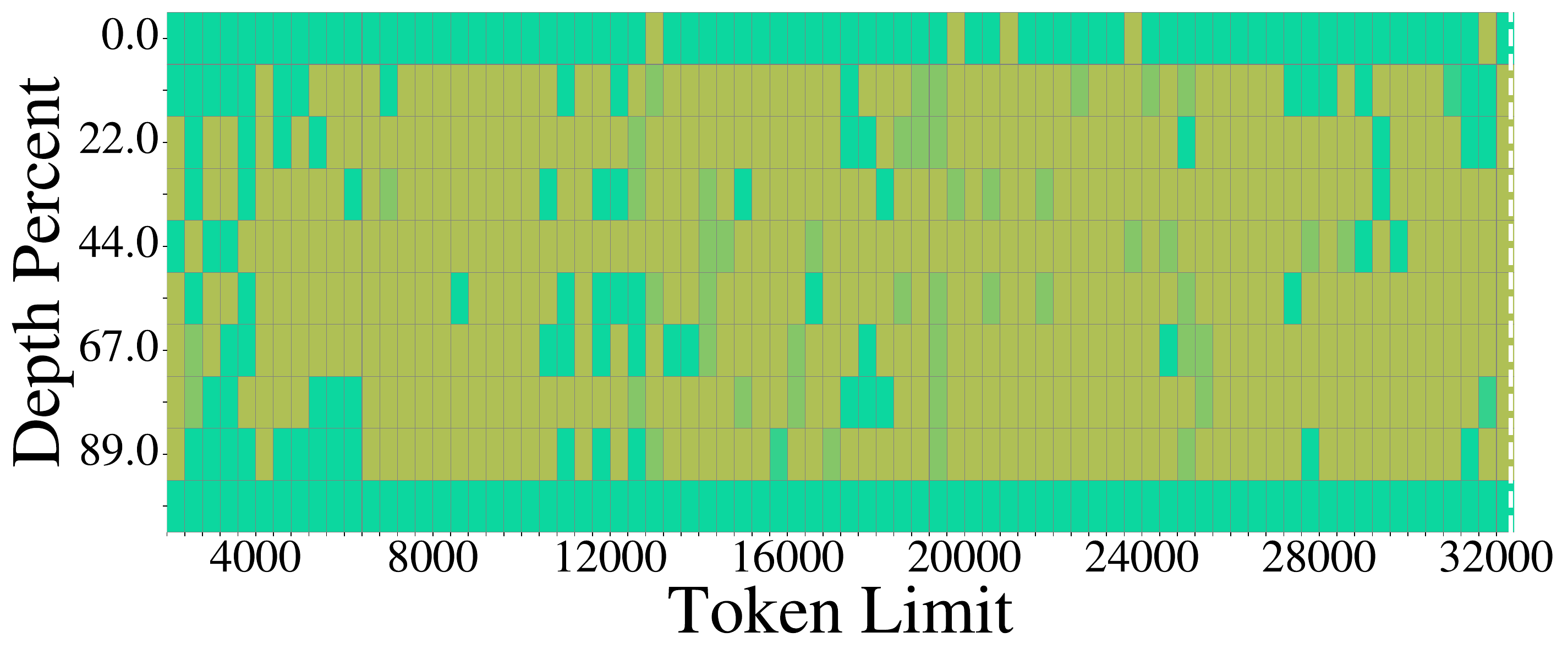}
    \caption{AdaKV (avg.\ $0.75$).}
  \end{subfigure}
  \begin{subfigure}[t]{0.49\linewidth}
    \includegraphics[width=\linewidth]{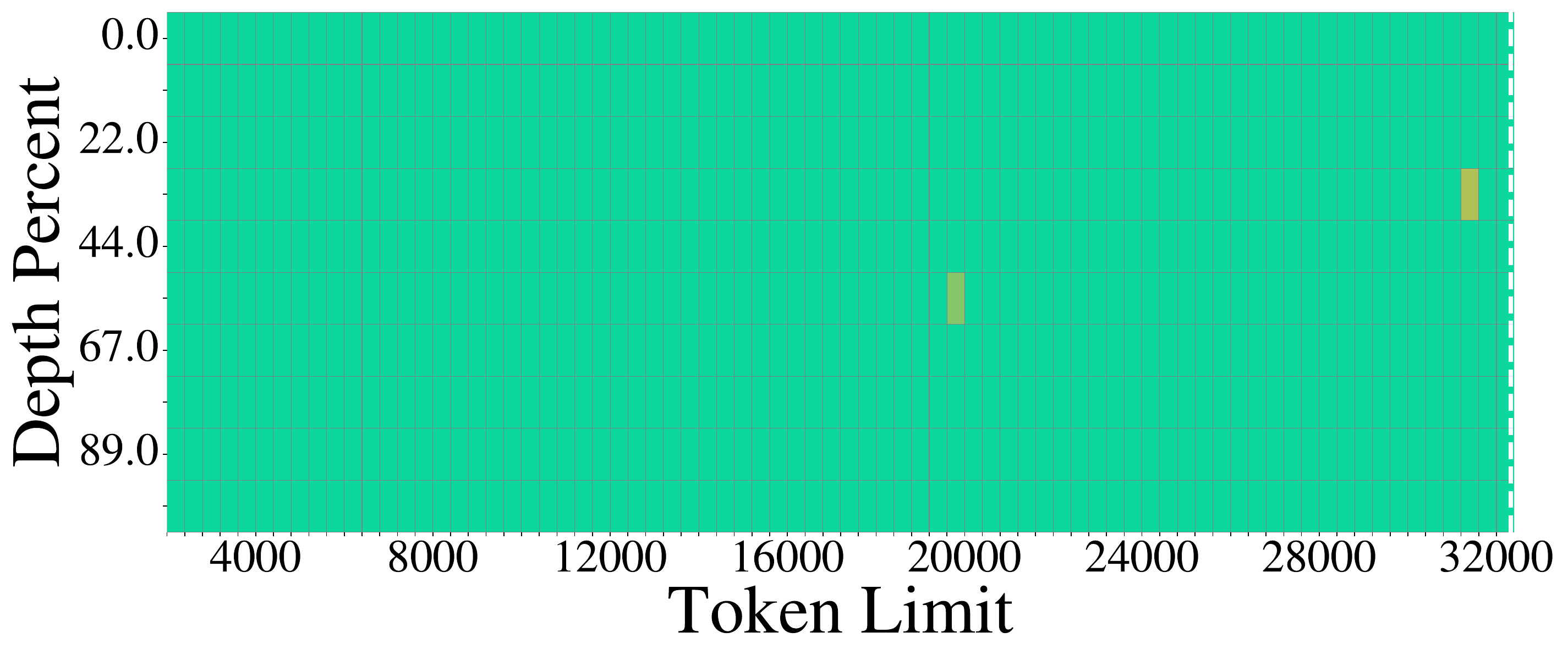}
    \caption{RDKV (avg.\ $0.99$).}
  \end{subfigure}
  \caption{Needle-in-a-Haystack on LLaMA-3.1-8B-Instruct at
  $B_\text{total} = 128L$ and context length up to $32$k.  Green denotes
  successful retrieval; warmer colours denote partial or complete
  failure.}
  \label{fig:niah-appendix}
\end{figure}


\section{Additional Experiments on InfiniteBench}
\label{app:infbench-detailed}

In this section, we present a detailed evaluation of RDKV on the
InfiniteBench benchmark~\citep{zhang2024infbench}, which extends
long-context evaluation to sequences exceeding $100$k tokens. The 10
tasks span three categories: (i)~retrieval---Passkey Retrieval
(Retr.Pass), Number Retrieval (Retr.Num), and KV Retrieval (Retr.KV)---
which test the ability to locate and extract specific information
embedded in long contexts; (ii)~language understanding---English
Dialogue (En.Dia), English Summarization (En.Sum), English
Multiple-Choice (En.MC), English QA (En.QA), and Chinese QA
(Zh.QA)---which evaluate comprehension and reasoning over novel-length
inputs; and (iii)~structured reasoning---Math Find (Math.Find)
and Code Debugging (Debug)---which require precise numerical or logical
retrieval. These tasks collectively pose challenges for both context
retention and fine-grained information extraction at extreme sequence
lengths.

We evaluate RDKV on LLaMA-3.1-8B-Instruct with
$B_\text{total} = 1024L$. The evaluation compares RDKV with SnapKV,
AdaKV, ThinK, SnapKV+ZipCache, and FullKV (oracle).

As reported in \cref{tab:infbench-detailed}, RDKV achieves the highest
average ($39.46$) among all compression methods, leading AdaKV ($38.06$)
by $1.40$ points. A breakdown by task category reveals where the
advantage concentrates. On the retrieval tasks, RDKV scores $7.20$ on
KV Retrieval vs.\ $\leq 2.20$ for all baselines---a $3.3\times$
improvement---because the needle-style retrieval pattern aligns directly
with the rate-distortion allocation: high-attention tokens are preserved
at high precision while low-attention tokens are aggressively quantized
or removed. On language understanding tasks, RDKV leads on En.Dia
($13.00$ vs.\ $\leq 11.50$), En.Sum ($25.13$ vs.\ $\leq 23.66$), and
En.QA ($14.49$ vs.\ $\leq 13.43$), matching FullKV accuracy on
En.MC ($68.56$). On structured reasoning tasks (Math.Find, Debug), all
compression methods cluster near each other, indicating that cache
compression is not the bottleneck once the relevant information is
preserved.

\begin{table}[!tb]
\centering
\caption{Performance on the 10 InfiniteBench tasks for LLaMA-3.1-8B-Instruct at
$B_\text{total} = 1024L$. The best result among compression methods is in \textbf{bold}; the
second-best is \underline{underlined}. FullKV is reported as a reference upper bound.}
\label{tab:infbench-detailed}
\fontsize{9.5}{11.4}\selectfont
\setlength{\tabcolsep}{1pt}
\begin{tabular}{lcccccccccc c}
\toprule
Method & Retr.Pass & Retr.Num & Retr.KV & En.Dia & En.Sum & En.MC & En.QA & Zh.QA & Math.Find & Debug & Avg. \\
\midrule
FullKV    & 100.0 & 99.32 & 56.20 & 18.00 & 27.52 & 68.56 & 14.63 & 13.28 & 34.00 & 22.34 & 45.38 \\
\midrule
SnapKV    & \textbf{100.00} & \underline{96.61} & 1.40 & 8.50 & 23.35 & \underline{68.12} & 12.58 & \underline{12.44} & \textbf{34.00} & 22.08 & 37.91 \\
AdaKV     & \textbf{100.00} & 94.41 & 1.80 & 10.50 & \underline{23.66} & \underline{68.12} & 12.94 & 12.30 & \textbf{34.00} & \textbf{22.84} & \underline{38.06} \\
ThinK      & \textbf{100.00} & 87.97 & 1.80 & 7.00 & 23.03 & \underline{68.12} & 11.21 & 12.35 & \underline{33.71} & 22.08 & 36.73 \\
Snap+Zip   & \textbf{100.00} & 91.86 & \underline{2.20} & \underline{11.50} & 22.16 & 67.69 & \underline{13.43} & 12.40 & \textbf{34.00} & 22.34 & 37.76 \\
\textbf{RDKV} & \textbf{100.00} & \textbf{96.95} & \textbf{7.20} & \textbf{13.00} & \textbf{25.13} & \textbf{68.56} & \textbf{14.49} & \textbf{12.65} & \textbf{34.00} & \underline{22.59} & \textbf{39.46} \\
\bottomrule
\end{tabular}
\end{table}

\section{Additional Experiments on Ablation Study}
\label{app:ablation-extension}

\subsection{Prefill Overhead}
\label{app:ablation-overhead}

\cref{tab:prefill-overhead} breaks down the time-to-first-token (TTFT)
for RDKV on LLaMA-3.1-8B-Instruct with a $128$K-token prefill on a
single A100 64\,GB. The baseline is a standard FullKV prefill with
FlashAttention-2. RDKV bypasses the HuggingFace \texttt{generate}
wrapper and calls transformer layers directly, saving $593$\,ms of
framework overhead. The four RDKV-specific stages---weight computation
($w_t$, $w_c$), MCKP bisection, and TriZone packing---add a combined
$2{,}331$\,ms. The net overhead is $1{,}740$\,ms, or $6.0\%$ of the
FullKV prefill, yielding a measured TTFT of $30{,}583$\,ms.
This one-time cost is amortised over the entire decode sequence; with
$128$K context and the $4.5\times$ decode speedup reported in
\cref{sec:experiments}, RDKV breaks even within the first ${\sim}\,50$
generated tokens.

\begin{table}[!tb]
\centering
\caption{Prefill overhead breakdown for RDKV on LLaMA-3.1-8B-Instruct
at $128$K context length (A100 64\,GB). Percentages are relative to
the FullKV prefill baseline.}
\label{tab:prefill-overhead}
\footnotesize
\begin{tabular}{lrr}
\toprule
Component & Time (ms) & \% of FullKV Prefill \\
\midrule
FullKV prefill (FA2) & 28\,843 & --- (baseline) \\
\midrule
Forward path saving & $-593$ & $-2.1\%$ \\
$w_t$ computation & $+308$ & $+1.1\%$ \\
$w_c$ computation & $+550$ & $+1.9\%$ \\
MCKP bisection & $+609$ & $+2.1\%$ \\
TriZone packing & $+863$ & $+3.0\%$ \\
\midrule
Net RDKV overhead & $+1{,}740$ & $+6.0\%$ \\
\midrule
RDKV TTFT & 30\,583 & $106.0\%$ \\
\bottomrule
\end{tabular}
\end{table}

\subsection{K/V Budget Split Ratio}
\label{app:ablation-kv-split}

The main experiments fix an equal K/V split
($B^V = B^K = \tfrac{1}{2}B_\text{head}$) as a design convention
(\cref{app:impl-details}).
To validate this choice, we sweep the K budget ratio
$r_K \in \{0.4,\,0.5,\,0.6\}$
(so $B^K = r_K B_\text{head}$ and $B^V = (1-r_K) B_\text{head}$)
on LLaMA-3.1-8B-Instruct at three cache budgets
$B_\text{total} \in \{128L,\,256L,\,512L\}$.
All other settings (observation window $S_w = 32$, pooling kernel $w = 5$,
$\mathcal{B} = \{0,2,4,8,16\}$) are kept identical.

\begin{table}[!tb]
\centering
\caption{K/V budget split ablation: per-task LongBench scores for LLaMA-3.1-8B-Instruct
  at $B_\text{total} \in \{128L, 256L, 512L\}$.
  $r_K$ denotes the fraction of $B_\text{head}$ allocated to K.
  The best result in each row is in \textbf{bold}; the second-best is \underline{underlined}.}
\label{tab:kv-split}
\setlength{\tabcolsep}{0.8pt}
\fontsize{6.4}{7.6}\selectfont
\begin{tabular}{lccccccccccccccccc}
\toprule
\multirow{2}{*}{$r_K$}
 & \multicolumn{3}{c}{Single-Doc QA}
 & \multicolumn{3}{c}{Multi-Doc QA}
 & \multicolumn{3}{c}{Summarization}
 & \multicolumn{3}{c}{Few-shot Learning}
 & \multicolumn{2}{c}{Synthetic}
 & \multicolumn{2}{c}{Code}
 & \multirow{2}{*}{Avg.} \\
\cmidrule(lr){2-4}\cmidrule(lr){5-7}\cmidrule(lr){8-10}\cmidrule(lr){11-13}\cmidrule(lr){14-15}\cmidrule(lr){16-17}
 & \rotatebox{35}{NrtvQA} & \rotatebox{35}{Qasper} & \rotatebox{35}{MF-en}
 & \rotatebox{35}{HotpotQA} & \rotatebox{35}{2WikiMQA} & \rotatebox{35}{Musique}
 & \rotatebox{35}{GovRep} & \rotatebox{35}{QMSum} & \rotatebox{35}{MultiNews}
 & \rotatebox{35}{TREC} & \rotatebox{35}{TriviaQA} & \rotatebox{35}{SAMSum}
 & \rotatebox{35}{PCount} & \rotatebox{35}{PRe}
 & \rotatebox{35}{Lcc} & \rotatebox{35}{RB-P}
 & \\
\midrule
\multicolumn{18}{c}{\textit{LLaMA-3.1-8B-Instruct,} $B_\text{total} = 128L$} \\
\midrule
$r_K{=}0.4$ & \textbf{29.65} & \textbf{41.93} & \textbf{56.21} & \underline{56.51} & 48.05 & \textbf{30.93} & \textbf{25.58} & \textbf{24.98} & \textbf{24.37} & \underline{63.50} & \textbf{92.14} & 41.11 & 8.58 & 99.50 & 61.09 & 53.55 & \underline{47.36} \\
$r_K{=}0.5$ & \underline{29.45} & \underline{41.34} & \underline{55.55} & \textbf{57.39} & \textbf{49.38} & \underline{30.89} & \underline{25.55} & \underline{24.59} & \underline{24.29} & \textbf{65.00} & 91.92 & \textbf{41.80} & \textbf{8.75} & \textbf{100.00} & \textbf{62.68} & \textbf{55.41} & \textbf{47.75} \\
$r_K{=}0.6$ & 29.22 & 38.60 & 55.40 & 56.47 & \underline{48.69} & 30.74 & 24.69 & 24.12 & 23.44 & \underline{63.50} & \underline{91.94} & \underline{41.61} & \textbf{8.75} & \textbf{100.00} & \underline{62.41} & \underline{54.92} & 47.15 \\
\midrule
\multicolumn{18}{c}{\textit{LLaMA-3.1-8B-Instruct,} $B_\text{total} = 256L$} \\
\midrule
$r_K{=}0.4$ & \underline{29.22} & \underline{43.75} & \underline{56.13} & 56.89 & 49.16 & 30.85 & \textbf{28.26} & \underline{24.74} & \textbf{25.71} & 67.00 & \underline{91.61} & 41.19 & 8.52 & \underline{99.50} & \underline{63.11} & 53.89 & 48.10 \\
$r_K{=}0.5$ & \textbf{29.63} & \textbf{44.85} & \textbf{56.33} & \textbf{57.85} & \textbf{49.64} & \textbf{31.73} & \underline{28.01} & 24.66 & \underline{25.66} & \textbf{70.00} & \textbf{91.87} & \underline{41.79} & \textbf{8.83} & \underline{99.50} & \textbf{63.27} & \underline{56.01} & \textbf{48.73} \\
$r_K{=}0.6$ & 29.15 & 43.62 & 55.49 & \underline{57.05} & \underline{49.21} & \underline{31.28} & 27.09 & \textbf{24.89} & 25.28 & \underline{68.50} & 91.20 & \textbf{42.77} & \underline{8.68} & \textbf{100.00} & 62.54 & \textbf{56.51} & \underline{48.33} \\
\midrule
\multicolumn{18}{c}{\textit{LLaMA-3.1-8B-Instruct,} $B_\text{total} = 512L$} \\
\midrule
$r_K{=}0.4$ & 29.85 & \underline{45.35} & \textbf{56.77} & 57.04 & \textbf{48.99} & 31.11 & \textbf{30.72} & \textbf{25.24} & \underline{26.37} & 70.50 & \textbf{91.91} & 42.26 & 8.75 & \textbf{100.00} & \textbf{63.41} & 55.73 & 49.00 \\
$r_K{=}0.5$ & \textbf{30.24} & \textbf{45.67} & 55.58 & \underline{57.06} & \underline{48.95} & \underline{31.67} & \underline{30.68} & \underline{25.06} & \textbf{26.46} & \textbf{72.00} & \underline{91.83} & \textbf{43.15} & \underline{8.83} & \textbf{100.00} & \underline{63.36} & \underline{57.02} & \textbf{49.22} \\
$r_K{=}0.6$ & \underline{29.92} & 44.96 & \underline{55.61} & \textbf{57.31} & 48.78 & \textbf{31.97} & 29.78 & 24.93 & 26.29 & \underline{71.00} & 91.58 & \underline{43.02} & \textbf{8.93} & 99.50 & 63.38 & \textbf{57.17} & \underline{49.01} \\
\bottomrule
\end{tabular}
\end{table}

\cref{tab:kv-split} reports the 16-task LongBench average.
At every budget level the equal split $r_K = 0.5$ achieves the highest
score. The margins are small ($\leq 0.63$ points), indicating that
the allocation is robust to moderate shifts in the K/V ratio.
Shifting more budget toward K ($r_K = 0.6$) consistently hurts:
the V cache carries per-token information that directly scales the
attention output (\cref{thm:v-score}), so starving it degrades
quality faster than under-provisioning per-channel K precision.
Conversely, a mild reduction of K budget ($r_K = 0.4$) is largely
absorbed because the per-channel distortion weight $w_c$
(\cref{thm:k-score}) concentrates on a small subset of outlier
channels; most channels tolerate lower bit-widths with negligible
error.
The symmetric optimum $r_K = 0.5$ balances these two effects,
justifying the equal split adopted throughout the paper.

\section{Visualization of Bit Allocation}
\label{app:bit-vis}

To illustrate how the RDKV allocator distributes bit-widths across
tokens and channels, we visualize the per-unit distortion weight
$w_t$ (V cache, \cref{thm:v-score}) and $w_c$ (K cache,
\cref{thm:k-score}) together with the resulting bit-width assignment
$b \in \{0, 2, 4, 8, 16\}$ on a representative LongBench sample
(LLaMA-3.1-8B-Instruct, $B_\text{total} = 128L$).
Each dot is coloured by its assigned bit-width:
{\color{blue}$\bullet$}~16-bit,
{\color{orange}$\bullet$}~8-bit,
{\color{red}$\bullet$}~4-bit,
$\bullet$~2-bit;
tokens assigned 0~bits (evicted) are omitted.
We show layers 15 (middle) and 31 (final) with two KV heads each.

\paragraph{Token-level V allocation
(\cref{fig:tok-l15h0}--\cref{fig:tok-l31h1}).}
The score distribution spans roughly six orders of magnitude.
Sink tokens (position~0) and recent tokens near the sequence tail
consistently receive the highest scores and correspondingly 16-bit
or 8-bit retention---the former carry accumulated attention mass, and
the latter fall within the observation window.
The bulk of mid-sequence tokens reside at 2--4~bits.
Head~0 and head~1 share the global profile but differ in which
mid-sequence positions spike, reflecting head-specific attention
patterns.

\paragraph{Channel-level K allocation
(\cref{fig:ch-l15h0}--\cref{fig:ch-l31h1}).}
The 128 channels of each head display a heavy-tailed score
distribution.
A small number of outlier channels---typically 2--4 per head---receive
16-bit, with scores 2--3 orders of magnitude above the median.
These correspond to the persistent outlier channels documented in
prior work on KV cache quantization~\citep{kivi,xu2024think}: the
product $\|Q_{:,c}\|_2 \,\|K_{:,c}\|_2$ is dominated by a handful of
dimensions where both query and key norms are large.
The allocator automatically assigns these channels full precision while
compressing the remaining channels to 4--8~bits, avoiding the
uniform-precision penalty that affects fixed-bit-width quantization
methods.


\begin{figure}[!ht]
\centering
\includegraphics[width=\textwidth]{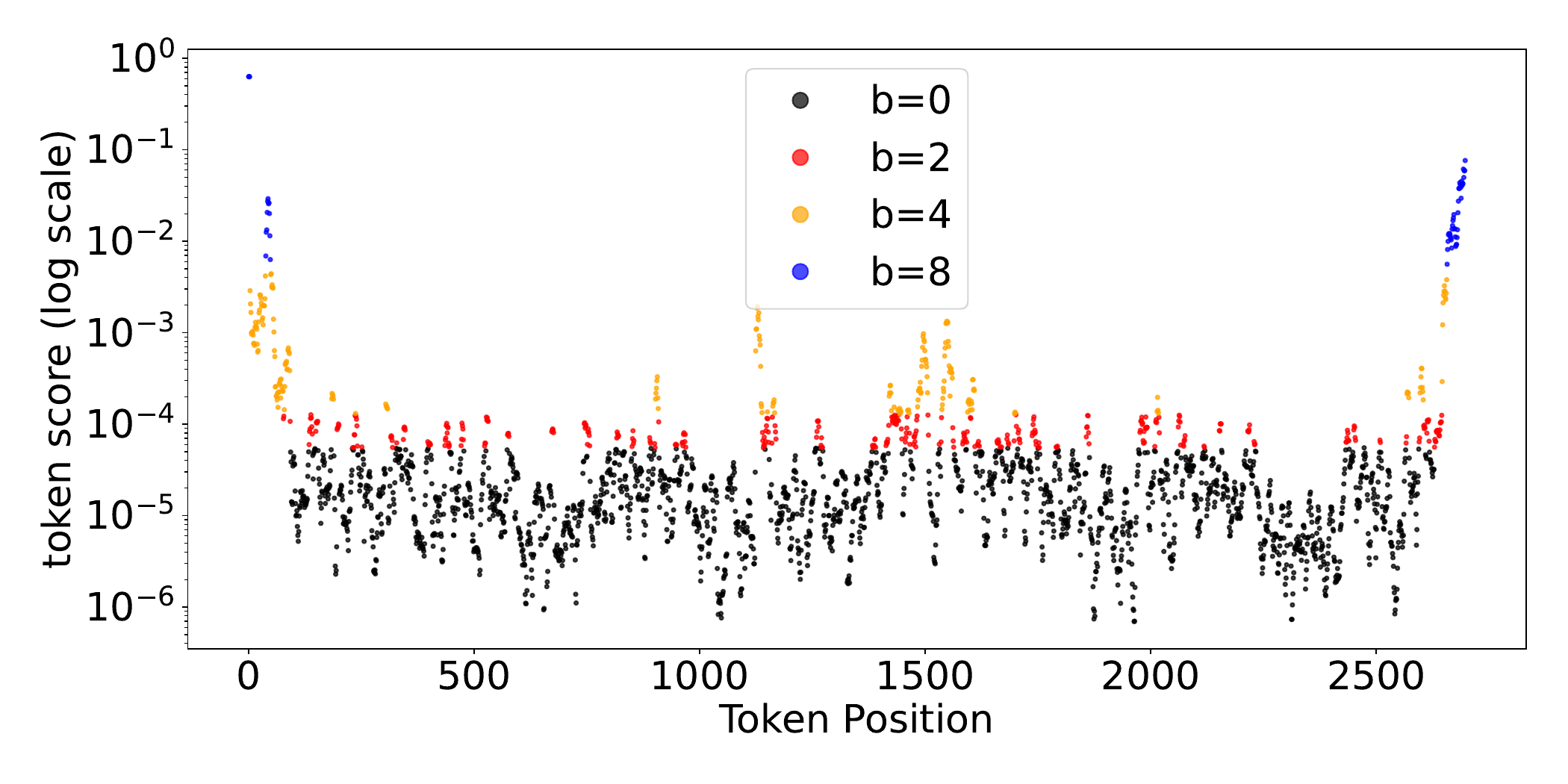}
\caption{Per-token V-cache bit allocation: layer 15, head 0.}
\label{fig:tok-l15h0}
\end{figure}
\vspace{-2mm}

\begin{figure}[!ht]
\centering
\includegraphics[width=\textwidth]{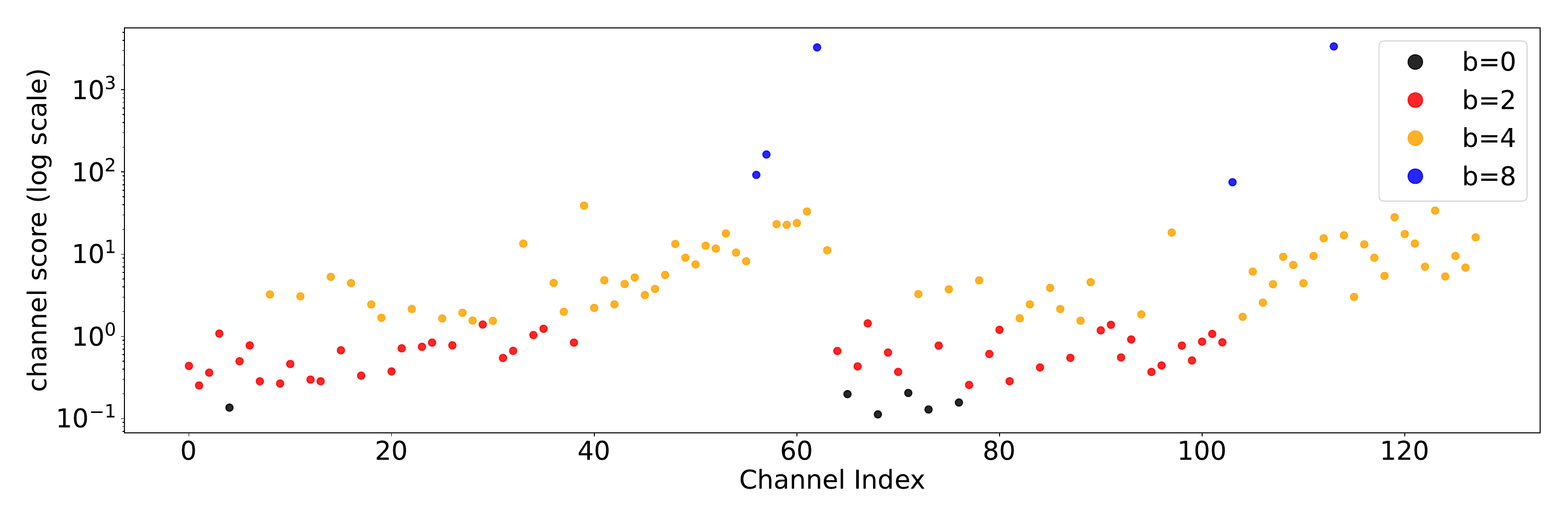}
\caption{Per-channel K-cache bit allocation: layer 15, head 0.}
\label{fig:ch-l15h0}
\end{figure}


\begin{figure}[!ht]
\centering
\includegraphics[width=\textwidth]{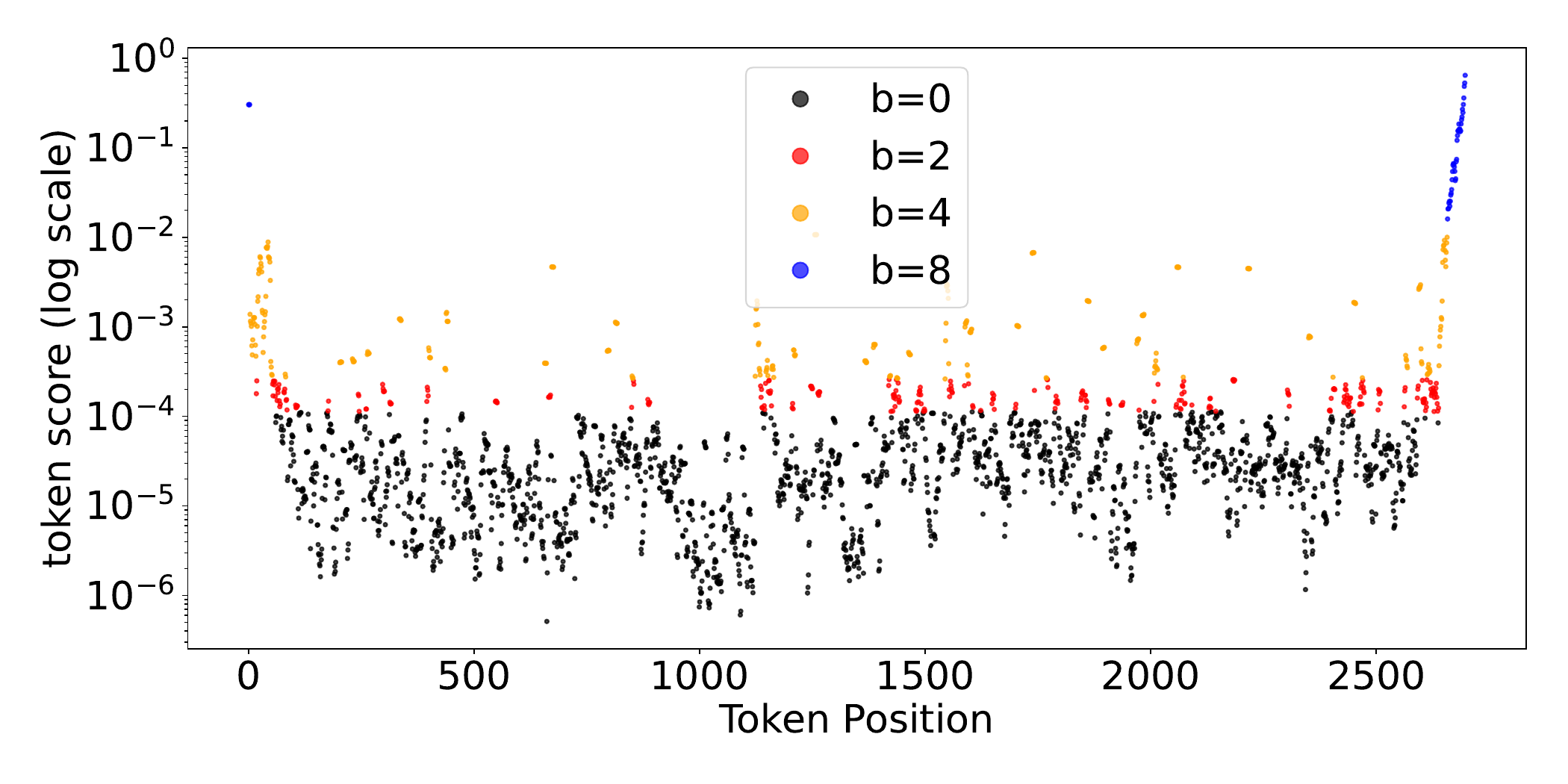}
\caption{Per-token V-cache bit allocation: layer 15, head 1.}
\label{fig:tok-l15h1}
\end{figure}

\begin{figure}[!ht]
\centering
\includegraphics[width=\textwidth]{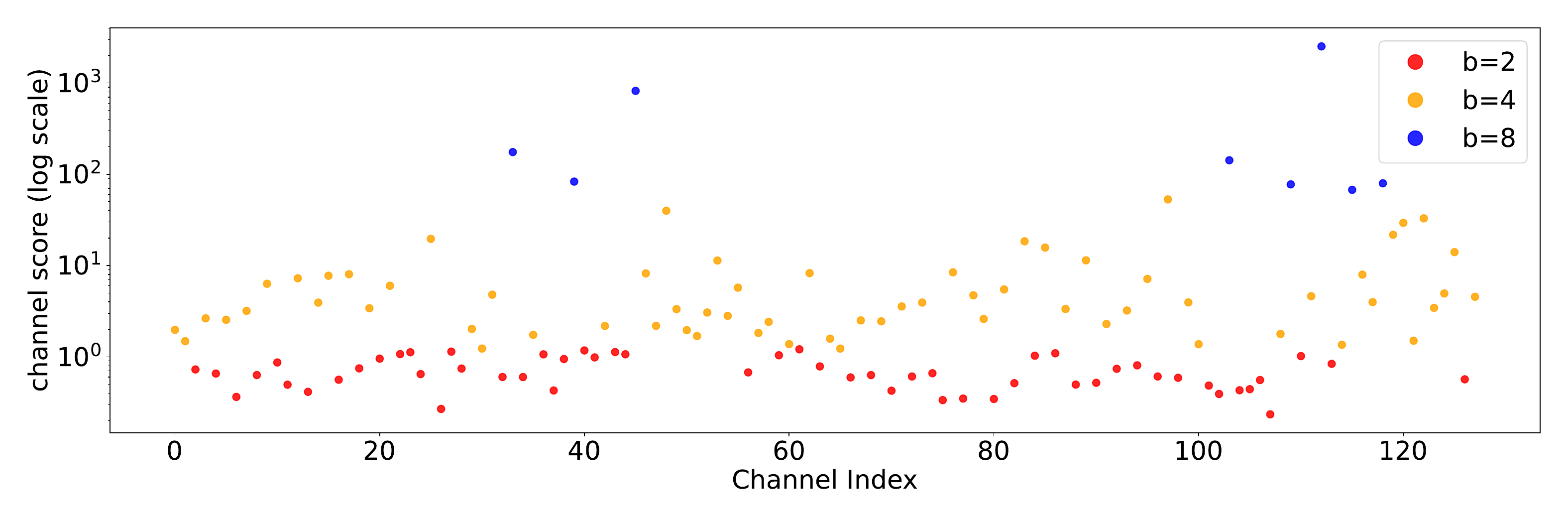}
\caption{Per-channel K-cache bit allocation: layer 15, head 1.}
\label{fig:ch-l15h1}
\end{figure}


\begin{figure}[!ht]
\centering
\includegraphics[width=\textwidth]{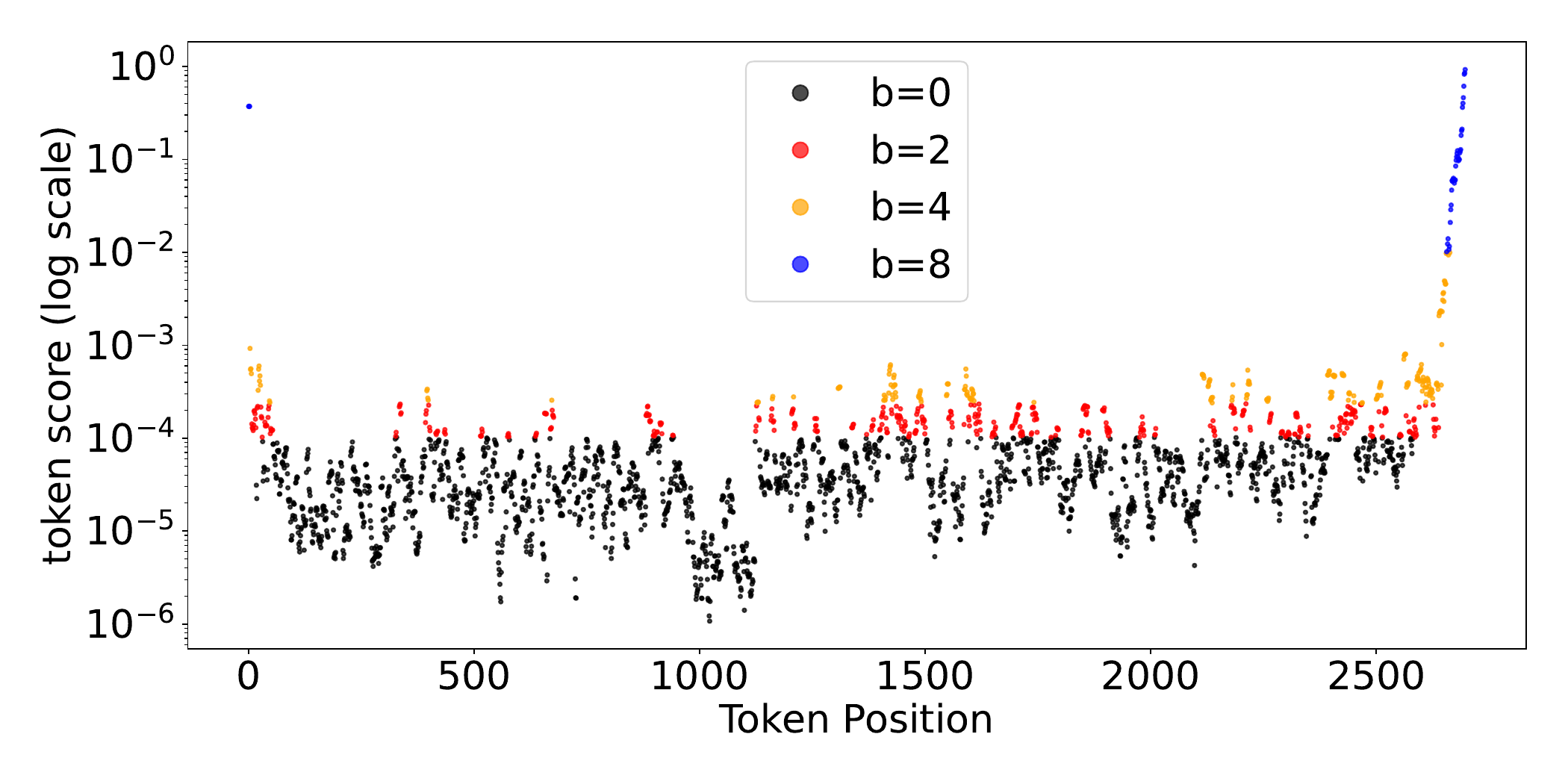}
\caption{Per-token V-cache bit allocation: layer 31, head 0.}
\label{fig:tok-l31h0}
\end{figure}

\begin{figure}[!ht]
\centering
\includegraphics[width=\textwidth]{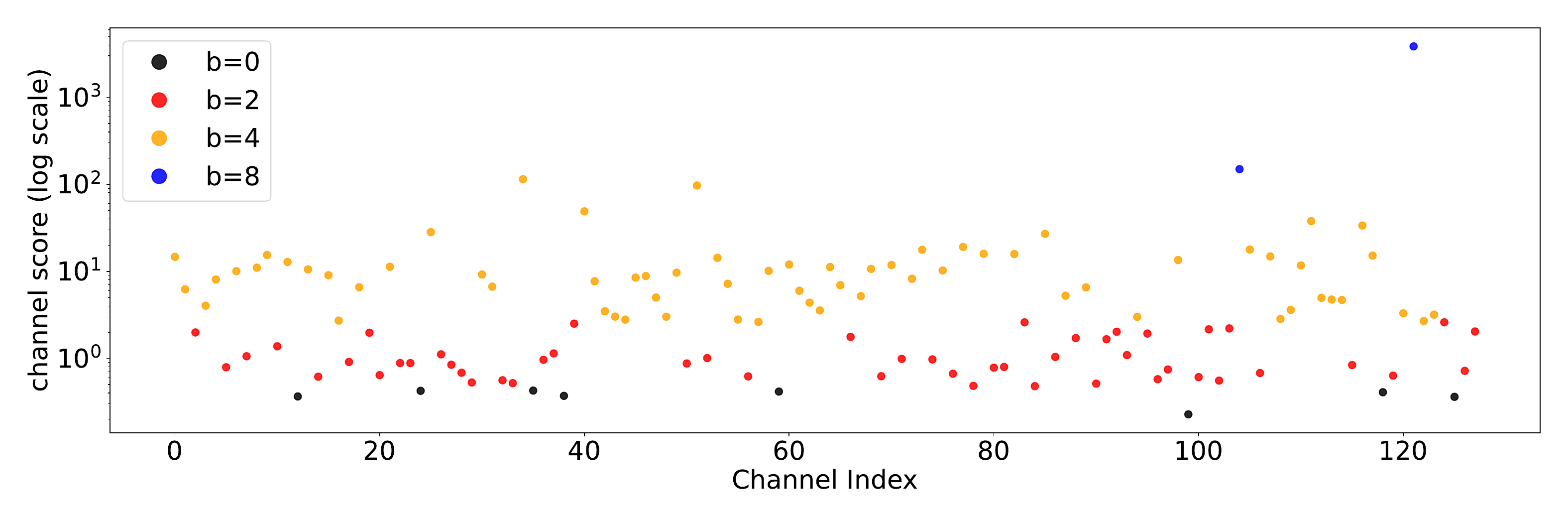}
\caption{Per-channel K-cache bit allocation: layer 31, head 0.}
\label{fig:ch-l31h0}
\end{figure}


\begin{figure}[!ht]
\centering
\includegraphics[width=\textwidth]{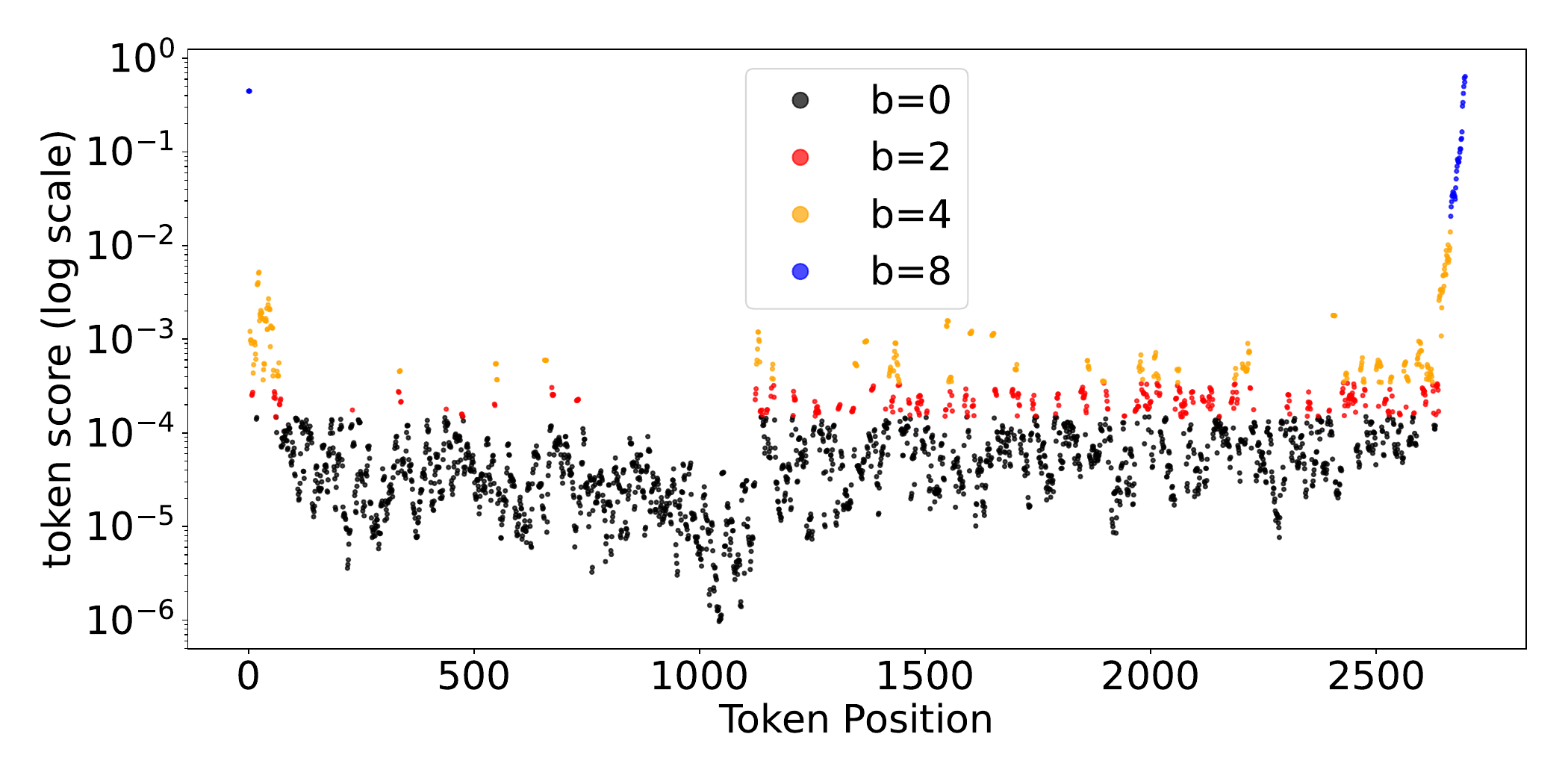}
\caption{Per-token V-cache bit allocation: layer 31, head 1.}
\label{fig:tok-l31h1}
\end{figure}

\begin{figure}[!ht]
\centering
\includegraphics[width=\textwidth]{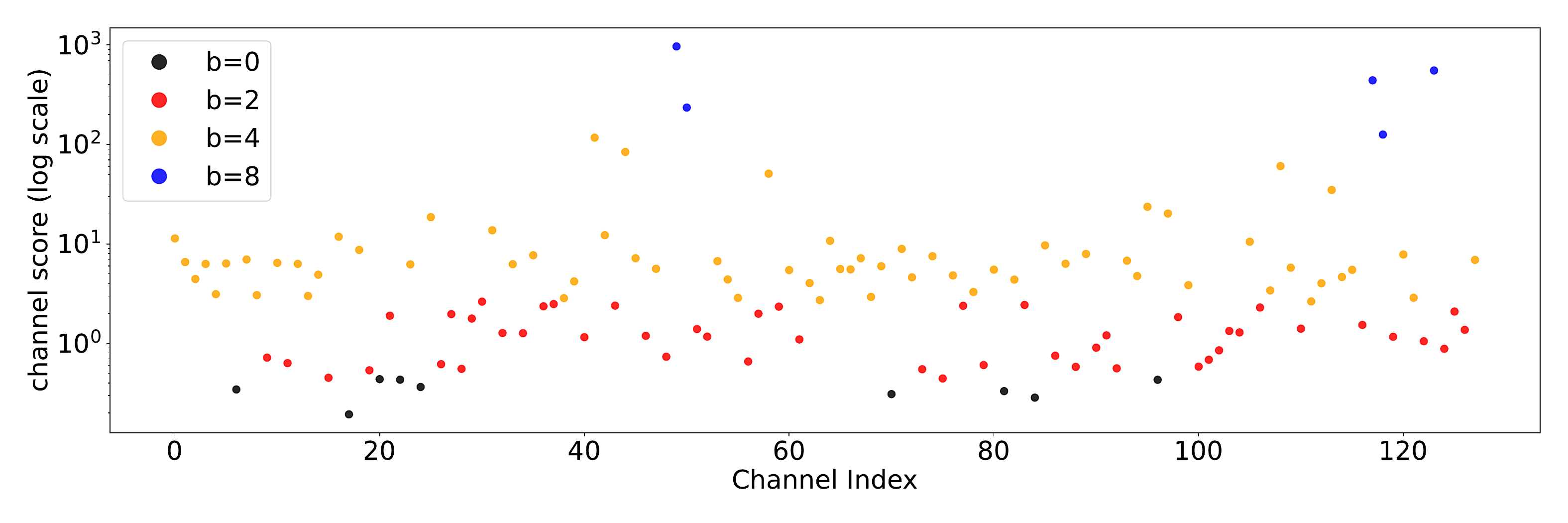}
\caption{Per-channel K-cache bit allocation: layer 31, head 1.}
\label{fig:ch-l31h1}
\end{figure}

\clearpage
\section{Calibrated Distortion Tables}
\label{app:epsilon-calibration}

\Cref{tab:epsilon-calibration} reports the calibrated per-coordinate
distortion $\varepsilon^K(b)$ (K~cache, per-channel NMSE) and
$\varepsilon^V(b)$ (V~cache, per-token NMSE) for each evaluated model
and bit-width $b\in\{2,4,8\}$.
Each entry is averaged across all layers, heads, and 32 calibration
sequences (4\,k tokens, LongBench prefill prefixes;
see \cref{app:impl-details} for calibration details).
These layer-averaged values are the distortion tables consumed by
the MCKP solver (\cref{prop:weak-duality}).
All models share $\varepsilon(0)=1$ (eviction) and
$\varepsilon(16)=0$ (full precision).

V~cache distortion is notably consistent across architectures at every
bit-width, whereas K~cache distortion shows greater cross-model
variance---chiefly because Qwen2.5-72B's embedding-adjacent layers
exhibit elevated per-channel 8-bit quantization error.

\begin{table}[h]
\centering
\caption{Calibrated normalized distortion $\varepsilon(b)$ per model.
  K~cache: per-channel NMSE; V~cache: per-token NMSE.}
\label{tab:epsilon-calibration}
\small
\begin{tabular}{l ccc ccc}
\toprule
& \multicolumn{3}{c}{$\varepsilon^K(b)$ (per-channel)}
& \multicolumn{3}{c}{$\varepsilon^V(b)$ (per-token)} \\
\cmidrule(lr){2-4} \cmidrule(lr){5-7}
Model & $b{=}2$ & $b{=}4$ & $b{=}8$
      & $b{=}2$ & $b{=}4$ & $b{=}8$ \\
\midrule
LLaMA-3.1-8B  & 0.149 & 0.0062 & $2.2\!\times\!10^{-5}$ & 0.313 & 0.0140 & $4.9\!\times\!10^{-5}$ \\
LLaMA-2-13B   & 0.288 & 0.0124 & $5.5\!\times\!10^{-5}$ & 0.272 & 0.0122 & $4.4\!\times\!10^{-5}$ \\
Mistral-7B    & 0.280 & 0.0116 & $6.9\!\times\!10^{-5}$ & 0.296 & 0.0130 & $4.5\!\times\!10^{-5}$ \\
Qwen2.5-72B   & 0.296 & 0.0164 & $4.4\!\times\!10^{-3}$ & 0.281 & 0.0126 & $4.4\!\times\!10^{-5}$ \\
Qwen3-4B      & 0.347 & 0.0147 & $1.5\!\times\!10^{-4}$ & 0.313 & 0.0139 & $4.8\!\times\!10^{-5}$ \\
\bottomrule
\end{tabular}
\end{table}

\section{Limitations and Future Work}
\label{app:limitations}

\textbf{Limitations.}
Like SnapKV, AdaKV, and PyramidKV, RDKV compresses the KV cache once
after prefill and does not re-evaluate during decoding.  This is a
common design choice in the prefill-dominated regime (long prompt, short
generation) targeted by these methods.  The allocation is therefore
frozen with respect to attention-pattern shifts that may occur during
generation.

\textbf{Future work.}
The rate-distortion framework can be extended to the decode phase.
A natural approach accumulates attention weights over a small buffer of
recent decode tokens and periodically applies the same MCKP allocation
to compress them, keeping decode-phase memory sub-linear in the number
of generated tokens relative to the uncompressed baseline.

\section{Impact Statement}
\label{app:impact}

RDKV reduces the memory footprint and decoding latency of long-context
LLM inference without modifying model weights or training procedures.
On the positive side, lower hardware requirements enable longer context
windows on smaller GPUs, reducing both the financial cost and the energy
consumption of serving long-context applications such as document
summarization, multi-document QA, and retrieval-augmented generation.
This may broaden access to long-context LLM capabilities for
resource-constrained practitioners and organizations.
More broadly, by reducing the number of GPU hours required per query,
RDKV lowers the energy consumption and carbon footprint of LLM
serving, contributing to more environmentally sustainable deployment
of long-context inference.
On the negative side, as with any inference acceleration technique,
reduced serving cost could lower the barrier to deploying LLMs at
scale, potentially amplifying existing concerns associated with
large-scale LLM deployment such as misinformation generation.
However, RDKV does not introduce new model capabilities---it preserves
the output distribution of the original model within the compression
budget---and therefore does not create risks beyond those already
present in the underlying LLM.

\end{document}